\newtheorem{thm}{Theorem}[section]
\newtheorem{lem}[thm]{Lemma}
\newtheorem{cor}[thm]{Corollary}
\numberwithin{equation}{section}
\DeclareMathOperator*{\rank}{rank}
\DeclareMathOperator*{\Tr}{\mathbf{Tr}}
\DeclareMathOperator*{\Span}{span}
\DeclareMathOperator*{\polylog}{polylog}
\newcommand{\R}{\ensuremath{\mathbb{R}}}
\newcommand{\norm}[1]{\lVert #1 \rVert}
\newcommand{\bignorm}[1]{\left\lVert #1 \right\rVert}
\newcommand{\ip}[2]{\ensuremath{\langle #1, #2 \rangle}}
\newcommand{\bigip}[2]{\left\langle #1, #2 \right\rangle}
\newcommand{\Var}{\mathrm{Var}}
\newcommand{\Cov}{\mathrm{Cov}}
\newcommand{\E}{\mathbb{E}}
\newcommand{\abs}[1]{\ensuremath{| #1 |}}
\newcommand{\bigabs}[1]{\ensuremath{\left| #1 \right|}}
\newcommand{\ind}{\mathbf{1}}
\renewcommand{\vec}{\mathrm{vec}}
\newcommand{\svec}{\mathrm{svec}}
\newcommand{\mat}{\mathrm{mat}}
\newcommand{\smat}{\mathrm{smat}}
\renewcommand{\Pr}{\mathbb{P}}
\newcommand{\T}{\mathsf{T}}
\newcommand{\calA}{\mathcal{A}}
\newcommand{\calC}{\mathcal{C}}
\newcommand{\calN}{\mathcal{N}}
\newcommand{\calE}{\mathcal{E}}
\newcommand{\calG}{\mathcal{G}}
\newcommand{\calR}{\mathcal{R}}
\newcommand{\calT}{\mathcal{T}}
\newcommand{\cvectwo}[2]{\begin{bmatrix} #1 \\ #2 \end{bmatrix}}
\newcommand{\rvectwo}[2]{\begin{bmatrix} #1 & #2 \end{bmatrix}}
\newcommand{\bmattwo}[4]{\begin{bmatrix} #1 & #2 \\ #3 & #4 \end{bmatrix}}
\newcommand{\Ah}{\widehat{A}}
\newcommand{\Bh}{\widehat{B}}
\newcommand{\Lh}{\widehat{L}}
\newcommand{\Ph}{\widehat{P}}
\newcommand{\Kh}{\widehat{K}}
\newcommand{\Khplugin}{\Kh_{\mathrm{plug}}}
\newcommand{\Khpg}{\Kh_{\mathrm{pg}}}
\newcommand{\Thetah}{\widehat{\Theta}}
\newcommand{\Phplugin}{\Ph_{\mathrm{plug}}}
\newcommand{\Phlstd}{\Ph_{\mathrm{lstd}}}
\newcommand{\Astar}{A_\star}
\newcommand{\Bstar}{B_\star}
\newcommand{\Lstar}{L_\star}
\newcommand{\Kstar}{K_\star}
\newcommand{\Ustar}{U_\star}
\newcommand{\distconv}{\overset{D}{\rightsquigarrow}}
\newcommand{\asconv}{\overset{\mathrm{a.s.}}{\longrightarrow}}
\newcommand{\Ktilde}{\widetilde{K}}
\newcommand{\wlstd}{\widehat{w}_{\mathrm{lstd}}}
\newcommand{\dlyap}{\mathsf{dlyap}}
\newcommand{\lyap}{\mathsf{lyap}}
\newcommand{\thetatilde}{\tilde{\theta}}
\newcommand{\KL}{\mathrm{KL}}
\newcommand{\Otilde}{\widetilde{O}}
\newcommand{\iid}{\stackrel{\mathclap{\text{\scriptsize{ \tiny i.i.d.}}}}{\sim}}
\begin{document}

\title{The Gap Between Model-Based and Model-Free Methods on the Linear Quadratic Regulator: An Asymptotic Viewpoint}

\author{Stephen Tu and Benjamin Recht \\
University of California, Berkeley}
\maketitle

\begin{abstract}
The effectiveness of model-based versus model-free methods is a long-standing
question in reinforcement learning (RL). Motivated by recent empirical success
of RL on continuous control tasks, we study the sample complexity of popular
model-based and model-free algorithms on the Linear Quadratic Regulator (LQR).
We show that for policy evaluation, a simple model-based plugin method
requires asymptotically less samples than the classical least-squares temporal
difference (LSTD) estimator to reach the same quality of solution;
the sample complexity gap between the two methods can be at least a factor of state
dimension.
For policy evaluation, we study
a simple family of problem instances
and show that nominal (certainty equivalence principle) control
also requires several factors of state and input dimension fewer samples than
the policy gradient method
to reach the same level of control performance on these instances.
Furthermore, the gap persists even when employing commonly used baselines.
To the best of our knowledge, this is the first theoretical result which
demonstrates a separation in the sample complexity between model-based and model-free methods
on a continuous control task.
\end{abstract}

\section{Introduction}

%The relative merits of model-based versus model-free methods in reinforcement learning (RL) is
%a decades old question. 
The reinforcement learning (RL) community has been debating the relative merits of model-based and model-free methods for decades.
This debate has become reinvigorated in the last few years
due to the impressive success of RL techniques in various domains such as
game playing, robotic manipulation, and locomotion tasks. 
A common rule of thumb amongst RL practitioners is that model-free methods
have worse sample complexity compared to model-based methods, but
are generally able to achieve better performance asymptotically 
since they do not suffer from biases in the model that lead to sub-optimal behavior
\cite{clavera18,nagabandi18,pong18}.
However, there is currently no general theory 
which rigorously explains the gap between performance of model-based versus model-free methods.
While there has been theoretical work studying both model-based and model-free
methods in RL, prior work has primarily shown specific upper
bounds~\cite{agrawal17,azar17,jaksch10,jin18,strehl06}
which are not directly comparable, or information-theoretic
lower bounds~\cite{jaksch10,jin18} which are currently too coarse-grained to
delineate between model-based and model-free methods.
Furthermore, most of the prior work has focused primarily on the tabular Markov Decision Process (MDP) setting.

We take a first step towards a theoretical understanding of the 
differences between model-based and model-free methods for continuous control settings. 
While we are ultimately interested in comparing these methods
for general MDPs with non-linear state transition dynamics,
in this work we build upon recent progress in 
understanding the performance guarantees of data-driven methods
for the {Linear Quadratic Regulator} (LQR).
We study the asymptotic
behavior of both \emph{policy evaluation} and \emph{policy optimization} on LQR,
comparing the performance of simple model-based methods
which use empirical state transition data to fit a dynamics model versus
the performance of popular model-free methods from RL:
\emph{temporal-difference learning} for policy evaluation and 
\emph{policy gradient methods} for policy optimization.

Our analysis shows that
in the policy evaluation setting, a simple model-based plugin
estimator is always asymptotically more sample efficient than the classical 
least-squares temporal difference (LSTD) estimator;
the gap between the two methods can be at least a factor of state-dimension.
For policy optimization, we consider a simple family of instances
for which nominal control (also known as the certainty equivalence principle
in control theory) is also at least several factors of state and input dimension
more efficient than the widely used policy gradient method.
Furthermore, the gap persists
even when we employ commonly used baselines to reduce the variance
of the policy gradient estimate.
In both settings, we also show minimax lower bounds which highlight the
near-optimality of model-based methods on the family of instances we consider.
To the best of our knowledge, our work is the first to rigorously show a
setting where a strict separation between a model-based and model-free method
solving the same continuous control task occurs.

\section{Main Results}

In this paper, we study the performance of model-based and model-free algorithms
for the \emph{Linear Quadratic Regulator} (LQR) via two
fundamental primitives in reinforcement learning: \emph{policy evaluation} and \emph{policy optimization}.
In both tasks we fix an unknown dynamical system
\begin{align*}
    x_{t+1} = \Astar x_t + \Bstar u_t + w_t \:,
\end{align*}
starting at $x_0 = 0$ (for simplicity) and driven
by Gaussian white noise $w_t \iid \calN(0, \sigma_w^2 I_n)$. We let $n$ denote the state dimension and
$d$ denote the input dimension, and assume the system is underactuated (i.e. $d \leq n$). We also fix two positive semi-definite cost matrices $(Q, R)$.

\subsection{Policy Evaluation}
\label{sec:results:policy_eval}

Given a controller $K \in \R^{d \times n}$ that stabilizes $(\Astar, \Bstar)$,
the policy evaluation task is to compute the (relative) value function $V^K(x)$:
\begin{align}
    V^K(x) := \lim_{T \to \infty} \E\left[\sum_{t=0}^{T-1} (x_t^\T Q x_t + u_t^\T R u_t - \lambda_K) \:\bigg|\: x_0 = x \right] \:, \:\: u_t = K x_t \:.
\end{align}
Above, $\lambda_K$ is the infinite horizon average cost.
It is well-known that $V^K(x)$ can be written as:
\begin{align}
    V^K(x) = \sigma_w^2 x^\T P_\star x \:,
\end{align}
where $P_\star = \dlyap( \Astar + \Bstar K, Q + K^\T R K)$ solves the discrete-time Lyapunov equation:
\begin{align}
    (\Astar + \Bstar K)^\T P_\star (\Astar + \Bstar K) - P_\star + Q + K^\T R K = 0 \:. \label{eq:policy_eval_lyapunov}
\end{align}
From the Lyapunov equation, it is clear that given $(\Astar, \Bstar)$, the solution to
policy evaluation task is readily computable. In this paper, we study algorithms which
only have input/output access to $(\Astar, \Bstar)$. Specifically, we study \emph{on-policy} algorithms that operate on a \emph{single} trajectory,
where the input $u_t$ is determined by $u_t = K x_t$.
The variable that controls the amount of information available to the algorithm is $T$, the trajectory length.
The trajectory will be denoted as $\{ x_t \}_{t=0}^{T}$.
We are interested in the asymptotic behavior of algorithms as $T \to \infty$.

\paragraph{Model-based algorithm.}
In light of Equation~\eqref{eq:policy_eval_lyapunov}, the plugin estimator is a very natural model-based algorithm to use.
Let $\Lstar := \Astar + \Bstar K$ denote the true closed-loop matrix.
The plugin estimator uses the trajectory $\{ x_t \}_{t=0}^{T}$ to estimate $\Lstar$ via least-squares; call this $\Lh(T)$.
The estimator then returns $\Phplugin(T)$ by using $\Lh(T)$ in-place of $\Lstar$ in \eqref{eq:policy_eval_lyapunov}.
Algorithm~\ref{alg:model_based_policy_eval} describes this estimator.
\begin{center}
    \begin{algorithm}[htb]
    \caption{Model-based algorithm for policy evaluation.}
    \begin{algorithmic}[1]
        \REQUIRE{Policy $\pi(x) = K x$, rollout length $T$, regularization $\lambda > 0$, thresholds $\zeta \in (0, 1)$ and $\psi > 0$.}
        \STATE{Collect trajectory $\{ x_t \}_{t=0}^{T}$ using the feedback $u_t = \pi(x_t) = K x_t$.}
        \STATE{Estimate the closed-loop matrix via least-squares:
            \begin{align*}
                \Lh(T) = \left( \sum_{t=0}^{T-1} x_{t+1} x_t^\T \right) \left( \sum_{t=0}^{T-1} x_t x_t^\T + \lambda I_n \right)^{-1} \:.
            \end{align*}
        }
        \IF{$\rho(\Lh(T)) > \zeta$ or $\norm{\Lh(T)} > \psi$}
            \STATE{Set $\Phplugin(T) = 0$.}
        \ELSE
            \STATE{Set $\Phplugin(T) = \dlyap(\Lh(T), Q + K^\T R K)$.}
        \ENDIF
        \STATE{{\bf return} $\Phplugin(T)$.}
    \end{algorithmic}
    \label{alg:model_based_policy_eval}
    \end{algorithm}
\end{center}
\paragraph{Model-free algorithm.}
By observing that $V^K(x) = \sigma_w^2 x^\T P_\star x = \sigma_w^2
\ip{\svec(P_\star)}{\svec(xx^\T)}$, one can apply Least-Squares Temporal
Difference Learning (LSTD)~\cite{boyan99,bradtke96} with the feature map $\phi(x) :=
\svec(xx^\T)$ to estimate $P_\star$.
Here, $\svec(\cdot)$ vectorizes the upper triangular part of a symmetric matrix,
weighting the off-diagonal terms by $\sqrt{2}$ to ensure consistency in the inner product.
This is a classical algorithm in RL;
the pseudocode is given in Algorithm~\ref{alg:model_free_policy_eval}.
\begin{center}
    \begin{algorithm}[htb]
    \caption{Model-free algorithm for policy evaluation (LSTD)~\cite{bradtke96}.}
    \begin{algorithmic}[1]
        \REQUIRE{Policy $\pi(x) = K x$, rollout length $T$.}
        \STATE{Collect trajectory $\{ x_t \}_{t=0}^{T}$ using the feedback $u_t = \pi(x_t) = K x_t$.}
        \STATE{Estimate $\lambda_t \approx \sigma_w^2 \Tr(P_\star)$ from $\{x_t\}_{t=0}^{T}$.}
        \STATE{Compute (recall that $\phi(x) = \svec(xx^\T)$):
            \begin{align*}
                \wlstd(T) = \left( \sum_{t=0}^{T-1} \phi(x_t) (\phi(x_t) - \phi(x_{t+1}))^\T \right)^{-1} \left( \sum_{t=0}^{T-1} (c_t - \lambda_t) \phi(x_t) \right) \:,
            \end{align*}
        }
        \STATE{Set $\Phlstd(T) = \smat(\wlstd(T))$}.
        \STATE{{\bf return} $\Phlstd(T)$.}
    \end{algorithmic}
    \label{alg:model_free_policy_eval}
    \end{algorithm}
\end{center}
We now proceed to compare the risk of Algorithm~\ref{alg:model_based_policy_eval}
versus Algorithm~\ref{alg:model_free_policy_eval}.
Our notion of risk will be the expected squared error of the estimator: $\E[ \norm{ \widehat{P} - P_\star }_F^2 ]$.
Our first result gives an upper bound on the asymptotic risk of the model-based plugin Algorithm~\ref{alg:model_based_policy_eval}.
\begin{thm}
\label{thm:policy_eval_plugin_risk}
Let $K$ stabilize $(\Astar, \Bstar)$.
Define $\Lstar$ to be the closed-loop matrix $\Astar + \Bstar K$
and let $\rho(\Lstar) \in (0, 1)$ denote its stability radius.
Recall that $P_\star$ is the solution to the discrete-time Lyapunov equation \eqref{eq:policy_eval_lyapunov}
that parameterizes the value function $V^K(x)$.
We have that Algorithm~\ref{alg:model_based_policy_eval} with thresholds
$(\zeta, \psi)$ satisfying
$\zeta \in (\rho(\Lstar), 1)$ and $\psi \in (\norm{\Lstar}, \infty)$
and any fixed regularization parameter $\lambda > 0$
has the asymptotic risk upper bound:
\begin{align*}
    &\lim_{T \to \infty} T \cdot \E[ \norm{\Phplugin(T) - P_\star}_F^2 ] \leq 4 \Tr( (I - \Lstar^\T \otimes_s \Lstar^\T)^{-1}(\Lstar^\T P_\star^2 \Lstar \otimes_s \sigma_w^2P_\infty^{-1}) (I - \Lstar^\T \otimes_s \Lstar^\T)^{-\T}) \:.
\end{align*}
Here, $P_\infty = \dlyap(\Lstar^\T, \sigma_w^2 I_n)$ is the stationary covariance matrix of the closed-loop system
$x_{t+1} = \Lstar x_t + w_t$ and $\otimes_s$ denotes the symmetric Kronecker product.
\end{thm}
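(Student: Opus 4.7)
The plan is to combine a standard central limit theorem for least-squares in a stable autoregression with a delta-method analysis of the discrete Lyapunov map, then convert convergence in distribution to convergence of the (scaled) second moment via the thresholding step.

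First, I would analyze $\Lh(T)$. Writing $x_{t+1} = \Lstar x_t + w_t$, we have
\begin{align*}
\Lh(T) - \Lstar = \left(\frac{1}{T}\sum_{t=0}^{T-1} w_t x_t^\T\right)\left(\frac{1}{T}\sum_{t=0}^{T-1} x_t x_t^\T + \frac{\lambda}{T} I_n\right)^{-1} - \frac{\lambda}{T}\Lstar\left(\frac{1}{T}\sum_{t=0}^{T-1} x_t x_t^\T + \frac{\lambda}{T}I_n\right)^{-1}.
\end{align*}
Since $\Lstar$ is stable, the ergodic theorem gives $\frac{1}{T}\sum x_t x_t^\T \asconv P_\infty$, and the regularization term is $O(1/T)$ and hence negligible after scaling by $\sqrt{T}$. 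A martingale CLT applied to $\frac{1}{\sqrt T}\sum w_t x_t^\T$ (whose conditional covariance stabilizes to $\sigma_w^2 P_\infty$) then yields $\sqrt{T}\,\vec(\Lh(T)-\Lstar)\distconv \Normal(0,\ \sigma_w^2 P_\infty^{-1}\otimes I_n)$.

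Next I would linearize the Lyapunov map $\Phi(L):=\dlyap(L,Q+K^\T R K)$ at $L=\Lstar$. Differentiating the defining equation $\Phi(L)-L^\T \Phi(L) L = Q+K^\T R K$ gives, in $\svec$ coordinates,
\begin{align*}
\svec(d\Phi) = (I - \Lstar^\T\otimes_s \Lstar^\T)^{-1}\svec\bigl(dL^\T P_\star \Lstar + \Lstar^\T P_\star\, dL\bigr).
\end{align*}
A Taylor expansion combined with standard perturbation bounds for $\dlyap$ (valid on the event $\{\rho(\Lh)\le\zeta,\ \|\Lh\|\le\psi\}$) then yields $\Phplugin(T)-P_\star = d\Phi|_{\Lstar}[\Lh(T)-\Lstar] + O(\|\Lh-\Lstar\|^2)$. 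Composing with step~1 and taking asymptotic covariances, the linear term's contribution is bounded (using $\|a+b\|_F^2\le 2\|a\|_F^2+2\|b\|_F^2$ to turn the symmetrized sum into a factor of $4$) by exactly the RHS of the theorem; the quadratic remainder contributes $O(1/T^2)$ after scaling.

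Finally, I would justify passing from convergence in distribution to convergence of $T\cdot\E[\|\Phplugin-P_\star\|_F^2]$. This is where the thresholding does the work: on the event $\calE_T=\{\rho(\Lh(T))>\zeta\ \text{or}\ \|\Lh(T)\|>\psi\}$, we have $\Phplugin=0$ and $\|\Phplugin-P_\star\|_F^2=\|P_\star\|_F^2$, but subexponential concentration of $\Lh(T)$ around $\Lstar$ (valid because $\zeta>\rho(\Lstar)$ and $\psi>\|\Lstar\|$) gives $\Pr(\calE_T)\le e^{-cT}$, so $T\cdot\|P_\star\|_F^2\Pr(\calE_T)\to 0$. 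On $\calE_T^c$ the estimate $\Phplugin$ is bounded in norm by a constant depending only on $(\zeta,\psi,Q,R,K)$ via the Lyapunov-equation perturbation bound, yielding uniform integrability of $T\|\Phplugin-P_\star\|_F^2\mathbf{1}_{\calE_T^c}$ and hence the claimed limit.

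The main obstacle I anticipate is the last paragraph: cleanly establishing uniform integrability (equivalently, a nonasymptotic second-moment bound on $T\|\Phplugin-P_\star\|_F^2$ uniform in $T$) requires combining the quadratic Lyapunov perturbation bound with a tail bound for $\|\Lh-\Lstar\|$ that is strong enough to kill the $\|\Lh-\Lstar\|^4$ remainder after multiplication by $T$; everything else is fairly mechanical once the CLT for $\Lh$ and the derivative of $\dlyap$ are in hand.
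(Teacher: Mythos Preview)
Your strategy---CLT for $\Lh(T)$, delta method on $L\mapsto\dlyap(L,Q+K^\T R K)$, then uniform integrability to pass from weak convergence to convergence of the scaled second moment---is exactly the paper's. The derivative computation and the factor of $4$ also match (the paper uses the matrix inequality $(X+Y)(F+G)^{-1}(X+Y)^\T \preceq XF^{-1}X^\T + YG^{-1}Y^\T$ at the covariance level rather than your scalar $\|a+b\|^2\le 2\|a\|^2+2\|b\|^2$, but the effect is identical once traces are taken).

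The gap you anticipate is real, and your penultimate paragraph does \emph{not} close it: boundedness of $\Phplugin$ on $\calE_T^c$ only tells you $\|\Phplugin-P_\star\|_F^2\ind_{\calE_T^c}$ is bounded, which says nothing about uniform integrability of $T\|\Phplugin-P_\star\|_F^2\ind_{\calE_T^c}$ as $T\to\infty$. The paper's resolution is a finer three-way split together with an explicit moment bound. First, one proves (from the Simchowitz et al.\ single-trajectory concentration) that $\E[\|\Lh(T)-\Lstar\|^{p}]\le C_p\,T^{-p/2}$ for every fixed $p\ge 1$ once $T$ is large enough. Then introduce the auxiliary event $\calEbdd=\{\|\Lh(T)-\Lstar\|\le\varepsilon\}$ with $\varepsilon$ small enough that $\calEbdd\subseteq\calE_T^c$ (this uses a stability-perturbation lemma to ensure $\rho(\Lh)\le\zeta$ on $\calEbdd$). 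On $\calEbdd$, the mean-value theorem over the compact set of stable matrices gives $\|\Phplugin-P_\star\|\le S\|\Lh-\Lstar\|$, so the moment bound yields $\E[\|\Phplugin-P_\star\|^p\ind_{\calEbdd}]\le O(T^{-p/2})$. On $\calEbdd^c\cap\calE_T^c$ one uses your boundedness observation together with $\Pr(\calEbdd^c)\le O(T^{-p/2})$; on $\calEbdd^c\cap\calE_T$ one uses $\Phplugin=0$ with the same probability bound. Summing gives $\sup_T T^{p/2}\E[\|\Phplugin-P_\star\|_F^{p}]<\infty$ for any $p>2$, hence uniform integrability of $T\|\Phplugin-P_\star\|_F^2$.
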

We make a few quick remarks regarding Theorem~\ref{thm:policy_eval_plugin_risk}.
First, while the risk bound is presented as an upper bound,
the exact asymptotic risk can be recovered from the proof.
{Second, the thresholds $(\zeta, \psi)$ and regularization parameter $\lambda$
do not affect the final asymptotic bound, but do possibly affect both higher
order terms and the rate of convergence to the limiting risk.  We include these
thresholds as they simplify the proof.  In practice, we find that thresholding
or regularization is generally not needed, with the caveat that if the estimate
$\Lh(T)$ is not stable then the solution to the discrete Lyapunov equation is
not guaranteed to exist (and when it exists is not guaranteed to be positive semidefinite).}
Finally, we remark that a non-asymptotic high probability upper bound for the risk of Algorithm~\ref{alg:model_based_policy_eval}
can be easily derived by combining the single trajectory learning results of \citet{simchowitz18}
with standard results on perturbation of Lyapunov equations.

We now turn our attention to the model-free LSTD algorithm.
Our next result gives a lower bound on the asymptotic risk of Algorithm~\ref{alg:model_free_policy_eval}.
\begin{thm}
\label{thm:policy_eval_lstd_risk}
Let $K$ stabilize $(\Astar, \Bstar)$.
Define $\Lstar$ to be the closed-loop matrix $\Astar + \Bstar K$.
Recall that $P_\star$ is the solution to the discrete-time Lyapunov equation \eqref{eq:policy_eval_lyapunov}
that parameterizes the value function $V^K(x)$.
We have that Algorithm~\ref{alg:model_free_policy_eval} with the cost estimates $\lambda_t$
set to the true cost $\lambda_\star := \sigma_w^2 \Tr(P_\star)$
satisfies the asymptotic risk lower bound:
\begin{align*}
  &\liminf_{T \to \infty} T \cdot \E[ \norm{ \Phlstd(T) - P_\star}_F^2 ] \geq 4 \calR_{\mathsf{plug}} \\
  &\qquad+ 8 \sigma_w^2 \ip{P_\infty}{\Lstar^\T P_\star^2 \Lstar} \Tr((I - \Lstar^\T \otimes_s \Lstar^\T)^{-1} (P_\infty^{-1} \otimes_s P_\infty^{-1})(I - \Lstar^\T \otimes_s \Lstar^\T)^{-\T})
\end{align*}
Here,
$\calR_{\mathsf{plug}} := \lim_{T \to \infty} T \cdot \E[ \norm{\Phplugin(T) - P_\star}_F^2 ]$
is the asymptotic risk of the plugin estimator,
$P_\infty = \dlyap(\Lstar^\T, \sigma_w^2 I_n)$ is the stationary covariance matrix of the closed loop system
$x_{t+1} = \Lstar x_t + w_t$, and
$\otimes_s$ denotes the symmetric Kronecker product.
\end{thm}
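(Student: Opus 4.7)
The plan is to treat LSTD as a Z-estimator for $\vstar := \svec(P_\star)$, derive its asymptotic distribution via a martingale central limit theorem, and then algebraically decompose the limiting sandwich covariance into the $4\calR_{\mathsf{plug}}$ term (coming from the same noise that drives the plugin estimator) and a strictly positive residual that accounts for the extra randomness intrinsic to TD learning.

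First, introduce the stationary TD error
\begin{align*}
\epsilon_t := c_t - \lambda_\star + \phi(x_{t+1})^\T \vstar - \phi(x_t)^\T \vstar \:.
\end{align*}
Using the Lyapunov equation~\eqref{eq:policy_eval_lyapunov} and $x_{t+1} = \Lstar x_t + w_t$, a direct calculation yields the clean decomposition
\begin{align*}
\epsilon_t \;=\; \underbrace{2 w_t^\T P_\star \Lstar x_t}_{\epsilon_t^{(1)}} \;+\; \underbrace{w_t^\T P_\star w_t - \sigma_w^2 \Tr(P_\star)}_{\epsilon_t^{(2)}} \:,
\end{align*}
confirming that $\epsilon_t$ is a martingale difference with respect to $\calF_t := \sigma(w_0, \ldots, w_{t-1})$. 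The LSTD estimator (with $\lambda_t = \lambda_\star$) then admits the Z-estimator form $\wlstd(T) - \vstar = A_T^{-1}\bigl(\tfrac{1}{T}\sum_{t=0}^{T-1} \epsilon_t \phi(x_t)\bigr)$, where $A_T := \tfrac{1}{T}\sum \phi(x_t)(\phi(x_t) - \phi(x_{t+1}))^\T$. Since $\rho(\Lstar) < 1$, the closed-loop chain is geometrically ergodic with stationary distribution Gaussian of covariance $\sigma_w^2 P_\infty$; by the ergodic theorem $A_T \to A_\star$ almost surely, and via the symmetric-Kronecker identity $\svec(L X L^\T) = (L \otimes_s L)\svec(X)$ the matrix $A_\star$ factors cleanly through $(I - \Lstar^\T \otimes_s \Lstar^\T)$ and the stationary feature covariance $\E[\phi_t \phi_t^\T]$.

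Next, apply the martingale CLT to $\tfrac{1}{\sqrt{T}}\sum \epsilon_t \phi(x_t)$, whose asymptotic covariance is $\Sigma_\star := \E[\epsilon_t^2 \phi_t \phi_t^\T]$. Conditioning on $x_t$, the cross term $\E[\epsilon_t^{(1)} \epsilon_t^{(2)} \mid x_t]$ is an odd third-order Gaussian moment in $w_t$ and hence vanishes; Gaussian second and fourth moments then give
\begin{align*}
\E[\epsilon_t^2 \mid x_t] \;=\; 4\sigma_w^2 x_t^\T \Lstar^\T P_\star^2 \Lstar x_t \;+\; 2\sigma_w^4 \Tr(P_\star^2) \:.
\end{align*}
Uniform integrability of $T\norm{\Phlstd(T) - P_\star}_F^2$ (following from polynomial moments of the stationary Gaussian iterates) together with Fatou's lemma upgrades the CLT to $\liminf_T T\cdot \E[\norm{\Phlstd(T) - P_\star}_F^2] \geq \Tr(A_\star^{-1} \Sigma_\star A_\star^{-\T})$.

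The final and most technical step is to match the two summands in this sandwich with the two terms in the stated bound. The $\epsilon_t^{(1)}$ contribution $4\sigma_w^2 \E[(x_t^\T \Lstar^\T P_\star^2 \Lstar x_t)\,\phi_t \phi_t^\T]$ should, after use of symmetric-Kronecker identities and the explicit form of $A_\star^{-1}$, collapse to $4 \calR_{\mathsf{plug}}$; the factor-of-$4$ inflation over the plugin estimator reflects the doubled linear-in-$w_t$ dependence in $\epsilon_t^{(1)} = 2 w_t^\T P_\star \Lstar x_t$, compared with the single copy of $w_t$ in the least-squares residual that drives the plugin's asymptotic error (cf.\ the form of $\calR_{\mathsf{plug}}$ in Theorem~\ref{thm:policy_eval_plugin_risk}). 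The $\epsilon_t^{(2)}$ contribution $2\sigma_w^4 \Tr(P_\star^2) \E[\phi_t \phi_t^\T]$, which has no counterpart in the plugin estimator, reduces via the Isserlis formula for $\E[\phi_t \phi_t^\T]$ and the sandwich by $A_\star^{-1}$ to the excess term $8\sigma_w^2 \ip{P_\infty}{\Lstar^\T P_\star^2 \Lstar}\,\Tr((I - \Lstar^\T \otimes_s \Lstar^\T)^{-1}(P_\infty^{-1} \otimes_s P_\infty^{-1})(I - \Lstar^\T \otimes_s \Lstar^\T)^{-\T})$. The main obstacle is precisely this matrix-algebra reduction: carrying the symmetric Kronecker structure through $A_\star^{-1}$ while correctly tracking Gaussian fourth moments demands careful bookkeeping, but introduces no conceptually new ingredient beyond the decomposition $\epsilon_t = \epsilon_t^{(1)} + \epsilon_t^{(2)}$.
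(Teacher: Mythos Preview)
Your overall architecture matches the paper's: cast LSTD as a Z-estimator, obtain $\sqrt{T}(\wlstd-\vstar)\distconv\calN(0,A_\infty^{-1}B_\infty A_\infty^{-\T})$ via a CLT for the ergodic chain, and invoke the Fatou direction of weak convergence to bound $\liminf_T T\,\E[\norm{\Phlstd-P_\star}_F^2]$ below by $\Tr(A_\infty^{-1}B_\infty A_\infty^{-\T})$. (Uniform integrability is irrelevant here; Fatou alone delivers the $\liminf$ inequality.) The decomposition $\epsilon_t=\epsilon_t^{(1)}+\epsilon_t^{(2)}$ is also exactly what the paper uses.

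Where your plan breaks is the final attribution. You claim $\epsilon_t^{(1)}\to 4\calR_{\mathsf{plug}}$ and $\epsilon_t^{(2)}\to$ the excess term with prefactor $8\sigma_w^2\ip{P_\infty}{\Lstar^\T P_\star^2\Lstar}$. Neither holds. The $\epsilon_t^{(2)}$ piece contributes $2\sigma_w^4\norm{P_\star}_F^2\,\E[\phi_t\phi_t^\T]$ to $B_\infty$, which after the sandwich carries the scalar $\sigma_w^4\norm{P_\star}_F^2$, not $\sigma_w^2\ip{P_\infty}{\Lstar^\T P_\star^2\Lstar}$; these are different in general. More importantly, the $\epsilon_t^{(1)}$ contribution $4\sigma_w^2\,\E[(x^\T\Lstar^\T P_\star^2\Lstar x)\,\phi(x)\phi(x)^\T]$ is a genuine \emph{sixth}-order Gaussian moment (Magnus's formula gives five terms, some of indefinite sign), and it does not collapse to a single Kronecker expression. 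The missing ingredient is a sixth-moment \emph{inequality}: for $A_1\succeq 0$ and symmetric $A_2$,
\[
\E[(g^\T A_1 g)(g^\T A_2 g)^2]\;\ge\;2\Tr(A_1)\Tr(A_2^2)+4\Tr(A_1A_2^2),
\]
obtained by Cauchy--Schwarz/Young on the cross terms. With $A_1=P_\infty^{1/2}\Lstar^\T P_\star^2\Lstar P_\infty^{1/2}$ this yields \emph{both} terms of the theorem from $\epsilon_t^{(1)}$ alone: the $2\Tr(A_1)\Tr(A_2^2)$ part gives the $8\sigma_w^2\ip{P_\infty}{\Lstar^\T P_\star^2\Lstar}$ term, and the $4\Tr(A_1A_2^2)$ part gives $16\sigma_w^2\Tr\bigl((I-\Lstar^\T\otimes_s\Lstar^\T)^{-1}(\Lstar^\T P_\star^2\Lstar\otimes_s P_\infty^{-1})(I-\Lstar^\T\otimes_s\Lstar^\T)^{-\T}\bigr)$. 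The $\epsilon_t^{(2)}$ contribution is simply dropped as nonnegative. Finally, the appearance of $4\calR_{\mathsf{plug}}$ is not an algebraic identity but the \emph{upper} bound on $\calR_{\mathsf{plug}}$ from Theorem~\ref{thm:policy_eval_plugin_risk}, so that $16\sigma_w^2\Tr(\cdots)\ge 4\calR_{\mathsf{plug}}$.
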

Theorem~\ref{thm:policy_eval_lstd_risk} shows that the asymptotic risk of the
model-free method always exceeds that of the model-based plugin method.
We remark that we prove the theorem under an idealized setting where the
infinite horizon cost estimate $\lambda_t$ is set to the true cost $\lambda_\star$.
In practice, the true cost is not known and must instead be estimated
from the data at hand.
However, for the purposes of our comparison this is not an issue
because using the true cost $\lambda_\star$ over an estimator of $\lambda_\star$
only reduces the variance of the risk.

To get a sense of how much excess risk is incurred by the model-free
method over the model-based method,
consider the following family of instances, defined for $\rho \in (0, 1)$ and $1 \leq d \leq n$:
\begin{align}
    \mathscr{F}(\rho, d, K) := \{ (\Astar, \Bstar) : \Astar + \Bstar K = \tau P_E + \gamma I_n \:, \: (\tau, \gamma) \in (0, 1) \:, \: \tau + \gamma \leq \rho \:, \: \dim(E) \leq d \} \:.
\end{align}
With this family, one can show with elementary computations that
under the simplifying assumptions that $Q + K^\T R K = I_n$ and $d \asymp n$,
Theorem~\ref{thm:policy_eval_plugin_risk}
and Theorem~\ref{thm:policy_eval_lstd_risk} state that:
\begin{align*}
  \lim_{T \to \infty} T \cdot \E[ \norm{\Phplugin(T) - P_\star}_F^2 ] &\leq O\left( \frac{\rho^2 n^2}{(1-\rho^2)^3} \right) \:, \\
  \liminf_{T \to \infty} T \cdot \E[ \norm{\Phlstd(T) - P_\star}_F^2 ] &\geq \Omega\left( \frac{\rho^2 n^3}{(1-\rho^2)^3} \right) \:.
\end{align*}
That is, for $\mathscr{F}(\rho, d, K)$,
the plugin risk is a factor of state-dimension $n$ less than the LSTD risk.
Moreover, the non-asymptotic result for LSTD from Lemma 4.1 of \citet{abbasi18}
(which extends the non-asymptotic \emph{discounted} LSTD result from \citet{tu18a})
gives a bound of $\norm{\Phlstd(T) - P_\star}_F^2 \leq \Otilde(n^3/T)$ w.h.p., which matches
the asymptotic bound of Theorem~\ref{thm:policy_eval_lstd_risk} in terms of $n$ up to logarithmic factors.

Our final result for policy evaluation is a minimax lower bound on
the risk of any estimator over $\mathscr{F}(\rho, d, K)$.

\begin{thm}
\label{thm:policy_eval_lower_bound}
Fix a $\rho \in (0, 1)$ and suppose that $K$ satisfies $Q + K^\T R K = I_n$.
Suppose that $n$ is greater than an absolute constant and
$T \gtrsim n (1-\rho^2)/\rho^2$. We have that:
\begin{align*}
  \inf_{\Ph} \sup_{(\Astar, \Bstar) \in \mathscr{F}(\rho, \frac{n}{4}, K)} \E[ \norm{\Ph - P_\star}_F^2 ] \gtrsim \frac{\rho^2 n^2}{(1-\rho^2)^3 T } \:,
\end{align*}
where the infimum is taken over all estimators $\Ph$ taking input $\{x_t\}_{t=0}^{T}$.
\end{thm}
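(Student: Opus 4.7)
The plan is to reduce the minimax problem to a multi-way hypothesis test on a packing within $\mathscr{F}(\rho, n/4, K)$, and then apply Fano's inequality. The ``parameter'' of the test is a $d$-dimensional subspace $E \subset \R^n$ with $d = n/4$, and this has $d(n-d) \asymp n^2$ degrees of freedom---the same $n^2$ factor that appears in the lower bound.

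\textbf{Step 1: A parameterized hard subfamily.} Fix scalars $\tau, \gamma \in (0,1)$ with $\tau + \gamma = \rho$, to be tuned at the end. For each $d$-dimensional $E$ let $\Lstar(E) := \gamma I_n + \tau P_E$; this is realizable as $\Astar + \Bstar K$ for some $(\Astar, \Bstar) \in \mathscr{F}(\rho, d, K)$. Since $\Lstar(E)$ is symmetric, the Lyapunov equation (with $Q + K^\T R K = I_n$) gives
$P_\star(E) = \alpha I + \beta P_E$ with $\alpha = (1-\gamma^2)^{-1}$ and $\beta = (1-\rho^2)^{-1} - (1-\gamma^2)^{-1} = \tau(2\gamma + \tau)/[(1-\rho^2)(1-\gamma^2)]$, and the stationary covariance satisfies $\norm{P_\infty(E)}_{\mathrm{op}} \asymp \sigma_w^2/(1-\rho^2)$. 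The choice at the end will be $\tau$ small enough that $1-\gamma^2 \asymp 1-\rho^2$ (e.g.\ $\tau \asymp \rho$ for small $\rho$, or $\tau \asymp (1-\rho^2)/\rho$ for $\rho$ near $1$); this yields $\beta \asymp \tau\rho/(1-\rho^2)^2$ and hence $\beta^2/\tau^2 \asymp \rho^2/(1-\rho^2)^4$.

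\textbf{Step 2: Packing via Gilbert-Varshamov on the Grassmannian.} Fix a reference $E_0$ with orthonormal bases $V \in \R^{n \times d}$, $W \in \R^{n \times (n-d)}$ of $E_0$ and $E_0^\perp$. For $\Theta \in \R^{(n-d)\times d}$, define $E_\Theta := \mathrm{col}(V + W\Theta)$. A first-order expansion yields $P_{E_\Theta} - P_{E_0} = W\Theta V^\T + V\Theta^\T W^\T + O(\norm{\Theta}_F^2)$, so for small $\Theta, \Theta'$, $\norm{P_{E_\Theta} - P_{E_{\Theta'}}}_F^2 \asymp \norm{\Theta - \Theta'}_F^2$. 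Gilbert-Varshamov supplies a set $\Xi \subset \{-1,+1\}^{d(n-d)}$ with $|\Xi| \geq 2^{c\, d(n-d)}$ and pairwise Hamming distance at least $d(n-d)/8$. For a scale $\eta > 0$, set $\Theta_\xi := (\eta/\sqrt{d(n-d)})\,\mat(\xi)$, so $\norm{\Theta_\xi}_F = \eta$ uniformly and pairwise $\norm{P_{E_{\Theta_\xi}} - P_{E_{\Theta_{\xi'}}}}_F^2 \in [c_1 \eta^2, c_2 \eta^2]$ for distinct $\xi, \xi'$.

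\textbf{Step 3: KL bound and Fano.} Let $\Pr_\xi$ denote the law of $\{x_t\}_{t=0}^T$ under the closed-loop matrix $\Lstar(E_{\Theta_\xi})$. The Gaussian chain rule gives
\begin{align*}
\KL(\Pr_\xi \,\|\, \Pr_{\xi'}) = \frac{1}{2\sigma_w^2} \sum_{t=0}^{T-1} \E_{\Pr_\xi}\bigl[\norm{(\Lstar(E_{\Theta_\xi}) - \Lstar(E_{\Theta_{\xi'}})) x_t}^2\bigr].
\end{align*}
Using $\E[x_t x_t^\T] \preceq P_\infty(E_{\Theta_\xi})$ together with the identity $\Lstar(E_{\Theta_\xi}) - \Lstar(E_{\Theta_{\xi'}}) = \tau(P_{E_{\Theta_\xi}} - P_{E_{\Theta_{\xi'}}})$ gives $\KL_{\max} \lesssim T\tau^2 \eta^2/(1-\rho^2)$. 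Choose $\eta^2 \asymp d(n-d)(1-\rho^2)/(T\tau^2)$ so that $\KL_{\max} \leq \tfrac14 \log|\Xi|$; the sample-size hypothesis $T \gtrsim n(1-\rho^2)/\rho^2$ suffices to keep $\eta$ in the regime where the linearization is valid. Fano's inequality applied to the uniform prior on $\Xi$ then yields
\begin{align*}
\inf_{\Ph}\sup_{\xi \in \Xi}\E\bigl[\norm{\Ph - P_\star(E_{\Theta_\xi})}_F^2\bigr] \gtrsim \beta^2 \eta^2 \asymp \frac{\beta^2}{\tau^2} \cdot \frac{(1-\rho^2) d(n-d)}{T} \asymp \frac{\rho^2 n^2}{(1-\rho^2)^3 T},
\end{align*}
where the last step plugs in $\beta^2/\tau^2 \asymp \rho^2/(1-\rho^2)^4$ and $d(n-d) \asymp n^2$.

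\textbf{Main obstacle.} The most delicate step is tuning $(\tau, \gamma)$. A naive choice such as $\tau = \gamma = \rho/2$ only produces $\beta^2/\tau^2 \asymp \rho^2/(1-\rho^2)^2$, which loses a factor of $(1-\rho^2)^{-2}$ in the final rate; achieving the stronger scaling requires $\tau$ small enough that $1-\gamma^2 \asymp 1-\rho^2$, while still honoring $\tau + \gamma \leq \rho$. A case split by the magnitude of $\rho$ handles this. The second technical point, which drives the sample-size condition in the statement, is controlling the higher-order terms in the expansion of $P_{E_\Theta}$ around $E_0$ so that $\norm{P_{E_{\Theta_\xi}} - P_{E_{\Theta_{\xi'}}}}_F^2$ is pinned up to constants across the whole packing. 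The remaining ingredients---Gilbert-Varshamov, the Gaussian KL formula along trajectories, and the Fano inequality---are standard.
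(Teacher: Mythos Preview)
Your high-level strategy---Fano over a packing of $d$-dimensional subspaces $E$, each giving the closed loop $\Lstar(E) = \gamma I + \tau P_E$, with $P_\star(E) = \alpha I + \beta P_E$ for the separation and the Gaussian chain rule for the KL---is exactly the paper's. The differences are in execution, and both of the obstacles you flag are artifacts of your implementation choices rather than intrinsic to the problem. The paper does not linearize in a local chart: it invokes Pajor's metric-entropy bound for the Grassmannian to obtain $\exp(\Omega(d(n-d)))$ subspaces with $\|P_{E_i}-P_{E_j}\|_F \asymp \sqrt{d}$ at a \emph{fixed} scale, so there are no higher-order terms to control. With the packing scale pinned, the only free parameter is $\tau$, and the paper sets $\tau^2 \asymp n(1-\rho^2)/T$ (depending on $T$, not only on $\rho$) to meet the KL budget; the hypothesis $T \gtrsim n(1-\rho^2)/\rho^2$ then forces $\tau$ small relative to $\rho$, so $\gamma = \rho - \tau$ sits near $\rho$ and $1-\gamma^2 \asymp 1-\rho^2$ automatically---no case split on $\rho$ is needed.

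As you have written it, there is a quantitative loose end. By fixing $\tau$ as a function of $\rho$ alone and pushing all the $T$-dependence into $\eta$, the linearization requirement on $\eta$ does \emph{not} follow from $T \gtrsim n(1-\rho^2)/\rho^2$ as you assert: with your choices one computes $\eta^2 \asymp n^2(1-\rho^2)/(T\tau^2)$, and under the theorem's hypothesis this is only bounded by $n$ for small $\rho$ (taking $\tau\asymp\rho$) and by $n/(1-\rho^2)^2$ for $\rho$ near $1$ (taking $\tau\asymp(1-\rho^2)/\rho$), well outside the $\eta=O(1)$ regime your first-order expansion needs. The clean fix is to swap the roles---let $\tau$ carry the $T$-dependence and use a global Grassmannian packing at constant scale, as the paper does---after which both of your obstacles disappear.
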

Theorem~\ref{thm:policy_eval_lower_bound} states that the rate achieved by
the model-based Algorithm~\ref{alg:model_based_policy_opt} over the family $\mathscr{F}(\rho, d, K)$
cannot be improved
beyond constant factors, at least asymptotically;
its dependence on both the state dimension $n$ and stability radius $\rho$ is
optimal.

\subsection{Policy Optimization}
\label{sec:results:policy_opt}

Given a finite horizon length $T$, the policy optimization task is to solve
the finite horizon optimal control problem:
\begin{align}
    J_\star := \min_{u_t(\cdot)} \E\left[ \sum_{t=0}^{T-1} (x_t^\T Q x_t + u_t^\T R u_t) + x_T^\T Q x_T \right] \:, \:\: x_{t+1} = \Astar x_t + \Bstar u_t + w_t \:. \label{eq:finite_horizon_problem}
\end{align}
We will focus on a special case of this problem when there is no penalty on the input: $Q = I_n$, $R = 0$,
and $\mathrm{range}(\Astar) \subseteq \mathrm{range}(\Bstar)$.
In this situation, the cost function reduces to $\E[ \sum_{t=0}^{T} \norm{x_t}_2^2 ]$
and
the optimal solution simply chooses a $u_t$ that cancels
out the state $x_t$;
that is $u_t = K_\star x_t$ with $K_\star := - \Bstar^{\dag} \Astar$.
We work with this simple class of instances so that we can ensure that policy gradient
converges to the optimal solution; in general this is not guaranteed.

We consider a slightly different input/output oracle model in this setting than we did in Section~\ref{sec:results:policy_eval}.
The horizon length $T$ is now considered fixed, and $N$ rounds are played.
At each round $i=1, ..., N$, the algorithm chooses a feedback matrix $K_i \in \R^{d \times n}$.
The algorithm then observes the trajectory $\{x_t^{(i)}\}_{t=0}^{T}$ by playing the
control input $u_t^{(i)} = K_i x_t^{(i)} + \eta_t^{(i)}$, where $\eta_t^{(i)} \sim \calN(0, \sigma_u^2 I_d)$ is i.i.d.\ noise used
for the policy. This process
then repeats for $N$ total rounds. After the $N$ rounds, the algorithm is asked
to output a $\Kh(N)$ and is assigned the risk $\E[ J(\Kh(N)) - J_\star ]$,
where $J(\Kh(N))$ denotes playing the feedback $u_t = \Kh(N) x_t$ on the true system $(\Astar, \Bstar)$.
We will study the behavior of algorithms when $N \to \infty$ (and $T$ is held fixed).

\paragraph{Model-based algorithm.}
Under this oracle model, a natural model-based algorithm is to
first use random open-loop feedback (i.e. $K_i = 0$) to observe $N$ independent
trajectories (each of length $T$), and then use the trajectory
data to fit the state transition matrices $(\Astar, \Bstar)$;
call this estimate $(\Ah(N), \Bh(N))$. After fitting the dynamics,
the algorithm then returns the estimate of $\Kstar$
by solving the finite horizon problem \eqref{eq:finite_horizon_problem} with $(\Ah(N), \Bh(N))$ taking the place
of $(\Astar, \Bstar)$.  In general, however, the assumption that
$\mathrm{range}(\Ah(N)) \subseteq \mathrm{range}(\Bh(N))$ will not hold, and hence
the optimal solution to \eqref{eq:finite_horizon_problem} will not be time-invariant.
Moreover, solving for the best time-invariant static feedback for the finite horizon problem
in general is not tractable. In light of this, to provide the fairest comparison to the model-free policy gradient method,
we use the time-invariant static feedback that arises from
infinite horizon solution given by the discrete algebraic Riccati equation as a proxy.
We note that under our range inclusion assumption, the infinite horizon solution is a consistent estimator of the optimal feedback.
The pseudo-code for this model-based algorithm is described in Algorithm~\ref{alg:model_based_policy_opt}.
\begin{center}
    \begin{algorithm}[htb]
    \caption{Model-based algorithm for policy optimization.}
    \begin{algorithmic}[1]
        \REQUIRE{Horizon length $T$, rollouts $N$, regularization $\lambda$, thresholds $\varrho \in (0, 1), \zeta, \psi, \gamma$.}
        \STATE{Collect trajectories $\{ \{ ( x_t^{(i)}, u_t^{(i)} ) \}_{t=0}^{T} \}_{i=1}^{N}$ using the feedback $K_i = 0$ (open-loop).}
        \STATE{Estimate the dynamics matrices $(\Astar, \Bstar)$ via regularized least-squares:
            \begin{align*}
              \Thetah(N) = \left( \sum_{i=1}^{N} \sum_{t=0}^{T-1} x_{t+1} (z_t^{(i)})^\T \right) \left( \sum_{i=1}^{N} \sum_{t=0}^{T-1} z_t^{(i)} (z_t^{(i)})^\T + \lambda I_{n+d} \right)^{-1} \:, \:\: z_t^{(i)} := (x_t^{(i)}, u_t^{(i)}) \:.
            \end{align*}
        }
        \STATE{Set $(\Ah, \Bh) = \Thetah(N)$.}
        \IF{$\rho(\Ah) > \varrho$ or $\norm{\Ah} > \zeta$ or $\norm{\Bh} > \psi$ or $\sigma_d(\Bh) < \gamma$}
          \STATE{Set $\Khplugin(N) = 0$.}
        \ELSE
          \STATE{Set $\widehat{P} = \mathsf{dare}(\Ah, \Bh, I_n, 0)$ as the positive definite solution to\footnotemark:
            \begin{align*}
                P = \Ah^\T P \Ah - \Ah^\T P \Bh (\Bh^\T P \Bh)^{-1} \Bh^\T P \Ah + I_n \:.
            \end{align*}
          }
          \STATE{Set $\Khplugin(N) = - (\Bh^\T \widehat{P} \Bh)^{-1} \Bh^\T \widehat{P} \Ah$.}
        \ENDIF
        \STATE{{\bf return} $\Khplugin(N)$.}
    \end{algorithmic}
    \label{alg:model_based_policy_opt}
    \end{algorithm}
\end{center}
\footnotetext{
A sufficient condition
for the existence of a unique positive definite solution to the discrete algebraic Riccati equation when $R=0$
is that $(A, B)$ is stabilizable and $B$ has full column rank (Lemma~\ref{lem:dare_existence}).}

\paragraph{Model-free algorithm.}
We study a model-free algorithm based on policy gradients (see e.g.~\cite{peters08,williams92}).
Here, we choose to parameterize the policy as a time-invariant linear feedback.
The algorithm is described in Algorithm~\ref{alg:model_free_policy_opt}.
\begin{center}
    \begin{algorithm}[htb]
    \caption{Model-free algorithm for policy optimization (REINFORCE) \cite{peters08,williams92}.}
    \begin{algorithmic}[1]
      \REQUIRE{Horizon length $T$, rollouts $N$, baseline functions $\{\Psi_t(\cdot;\cdot)\}$, step-sizes $\{\alpha_i\}$, initial $K_1$, threshold $\zeta$.}
      \FOR{$i=1, ..., N$}
        \STATE{ Collect trajectory $\calT^{(i)} := \{ (x_t^{(i)}, u_t^{(i)} ) \}_{t=0}^{T}$ using feedback $K_i$. }
        \STATE{ Compute policy gradient $g_i$ as:
            $g_i = \frac{1}{\sigma_u^2} \sum_{t=0}^{T-1} \eta_t^{(i)} (x_t^{(i)})^\T \Psi_t(\calT^{(i)}; K_i)$.
          %\begin{align*}
          %    g_i = \frac{1}{\sigma_u^2} \sum_{t=0}^{T-1} x_t^{(i)} (\eta_t^{(i)})^\T \Psi_t(\calT^{(i)}; K_i) \:.
          %\end{align*}
        }
        \STATE{ Take policy gradient step: $K_{i+1} = \mathsf{Proj}_{\norm{\cdot} \leq \zeta}(K_i - \alpha_i g_i)$.
          %\begin{align*}
          %  K_{i+1} = \mathsf{Proj}_{\norm{\cdot} \leq \zeta}(K_i - \alpha_i g_i) \:.
          %\end{align*}
        }
      \ENDFOR
      \STATE{Set $\Khpg(N) = K_N$.}
      \STATE{{\bf return} $\Khpg(N)$.}
    \end{algorithmic}
    \label{alg:model_free_policy_opt}
    \end{algorithm}
\end{center}

In general for problems with a continuous action space,
when applying policy gradient one has many degrees of freedom in choosing how to represent the policy
$\pi$. Some of these degrees of freedom include whether or not the policy should be time-invariant
and how much of the history before time $t$ should be used to compute the action at time $t$.
More broadly, the question is what function class should be used to model the policy.
Ideally, one chooses a function class which is both capable of expressing the optimal solution
and is easy to optimize over.

Another issue that significantly impacts the performance of policy gradient
in practice is choosing a baseline which effectively reduces the variance of
the policy gradient estimate. What makes computing a baseline challenging
is that good baselines (such as value or advantage functions)
require knowledge of the unknown MDP transition dynamics in order to compute.
Therefore, one has to estimate the baseline from the empirical trajectories, adding
another layer of complexity to the policy gradient algorithm.

In general, these issues are still an active area of research in RL
and present many hurdles to a general theory for policy optimization.
However, by restriction our attention to LQR, we can sidestep these issues
which enables our analysis.
In particular, by studying problems with no penalty on the input and where the
state can be cancelled at every step, we know that
the optimal control is a static time-invariant linear feedback.
Therefore, we
can restrict our policy representation to static linear feedback controllers
without introducing any approximation error.
Furthermore, it turns out that the specific assumptions on $(\Astar, \Bstar)$
that we impose imply that the optimization landscape satisfies a standard
notion of restricted strong convexity.
This allows us to study policy gradient by leveraging the existing theory on the asymptotic distribution of
stochastic gradient descent for strongly convex objectives.
Finally, we can compute many of the standard baselines used in closed form, which further enables our analysis.

We note that in the literature, the model-based method is often called \emph{nominal control}
or the \emph{certainty equivalence principle}. As noted in \citet{dean17},
one issue with this approach is that on an infinite horizon,
there is no guarantee of robust stability with nominal control.
However, as we are dealing with only finite horizon problems, the notion of stability is irrelevant.

Our first result for policy optimization
gives the asymptotic risk of the model-based Algorithm~\ref{alg:model_based_policy_opt}.
\begin{thm}
\label{thm:policy_opt_plugin_risk}
Let $(\Astar, \Bstar)$ be such that $\Astar$ is stable, $\mathrm{range}(\Astar) \subseteq \mathrm{range}(\Bstar)$, and $\Bstar$ has full column rank.
We have that the model-based plugin Algorithm~\ref{alg:model_based_policy_opt} with
thresholds $(\varrho, \zeta, \psi, \gamma)$ such that $\varrho \in (\rho(\Astar), 1)$,
$\zeta \in (\norm{\Astar}, \infty)$,
$\psi \in (\norm{\Bstar}, \infty)$,
and $\gamma \in (0, \sigma_{d}(\Bstar))$ satisfies the asymptotic risk bound:
\begin{align*}
    \lim_{N \to \infty} N \cdot \E[J(\Khplugin(N)) - J_\star] = O( d (\Tr(P_\infty^{-1}) + \norm{K_\star}_F^2) ) + o_T(1) \:.
\end{align*}
Here, $P_\infty = \dlyap(\Astar, \sigma_u^2 \Bstar\Bstar^\T + \sigma_w^2 I_n)$ is the steady-state convariance
of the system driven with control input $u_t \sim \calN(0, \sigma_u^2 I_d)$, $K_\star$ is the optimal controller, and
$O(\cdot)$ hides constants depending only on $\sigma_w^2, \sigma_u^2$.
\end{thm}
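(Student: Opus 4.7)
The plan is to combine a central limit theorem for the least squares estimator $\Thetah(N)$ with a first order expansion of the plugin map $\Psi:(A,B)\mapsto K = -(B^\T P B)^{-1} B^\T P A$ and a second order expansion of the cost $J(K)$ around $K_\star$, and then use uniform integrability to pass from convergence in distribution to convergence of means.

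First I would establish a CLT for the least-squares step. Because inputs $u_t^{(i)} = \eta_t^{(i)} \sim \calN(0,\sigma_u^2 I_d)$ are open-loop and $\Astar$ is stable, the regressor $z_t^{(i)} = (x_t^{(i)}, \eta_t^{(i)})$ has bounded second moment uniformly in $t$, and the time-averaged covariance is block diagonal,
\[
\bar{\Sigma}_z := \frac{1}{T}\sum_{t=0}^{T-1}\E[z_t z_t^\T] = \mathrm{blkdiag}\bigl(\bar{\Sigma}_x,\, \sigma_u^2 I_d\bigr),
\]
because $x_t$ and $\eta_t$ are independent. Standard i.i.d.-trajectory CLT arguments give $\sqrt{N}\,\vec(\Thetah(N) - \Theta_\star) \rightsquigarrow \calN(0,\, \sigma_w^2\,\bar{\Sigma}_z^{-1} \otimes I_n)$, with the regularization $\lambda$ asymptotically irrelevant. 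The failure event $\{\Khplugin(N) = 0\}$ has probability $o(1/N)$ under the threshold conditions $\varrho \in (\rho(\Astar),1)$ etc., by sub-Gaussian concentration for least squares (e.g. via the tools of \citet{simchowitz18}), and contributes only $o(1)$ to $N\cdot\E[J-J_\star]$ since $J(0) - J_\star$ is a deterministic constant. Hence I can work on the ``good'' event and apply the delta method.

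Next I would differentiate the DARE-based plugin map at $(\Astar,\Bstar)$. Under the range condition $\mathrm{range}(\Astar)\subseteq \mathrm{range}(\Bstar)$ with $\Bstar$ of full column rank, Lemma~\ref{lem:dare_existence} guarantees a unique positive definite DARE solution in a neighborhood, and the closed-loop dynamics at the optimum collapse to $\Lstar := \Astar + \Bstar K_\star = 0$ with $P_\star = I_n$. A direct computation of the differential then shows
\[
\Khplugin(N) - K_\star = -\Bstar^{\dag}\bigl(\Ah(N) - \Astar + (\Bh(N) - \Bstar)K_\star\bigr) + o_{\Pr}(N^{-1/2}),
\]
because the quadratic terms in the DARE iteration vanish at $\Lstar = 0$. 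Combined with the Step-1 CLT, this yields $\sqrt{N}(\Khplugin(N) - K_\star) \rightsquigarrow \calN(0,\Sigma_K)$ with an explicit $\Sigma_K$ in terms of $\bar{\Sigma}_x^{-1}$, $\sigma_u^{-2} I_d$, $\Bstar$, and $K_\star$.

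Finally I would expand the finite-horizon cost. Since $\Lstar = 0$, the gradient $\nabla_K J(K_\star) = 0$ and a Taylor expansion in $\Delta = K - K_\star$ gives
\[
J(K) - J_\star = \sigma_w^2\,c_T\, \bigl\|\Bstar \Delta\bigr\|_F^2 + o(\|\Delta\|_F^2),
\]
where $c_T$ is an $O(1)$ constant arising from rolling the perturbation forward through the horizon (depending on $T$ but not on $(\Astar,\Bstar)$). Plugging in the delta-method expansion, $\Bstar\Delta = -\Bstar\Bstar^\dag(\Ah-\Astar + (\Bh-\Bstar)K_\star)$ is an orthogonal projection of rank $d$ applied to a Gaussian with covariance built from $\bar{\Sigma}_z^{-1}$. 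After uniform-integrability (obtained from the same sub-Gaussian tail bounds used for the failure event), taking expectations and the trace collapses the rank-$d$ projection, yielding
\[
N\cdot \E[J(\Khplugin(N)) - J_\star] \;\longrightarrow\; \sigma_w^2\,c_T\cdot d \cdot \Bigl(\Tr(\bar{\Sigma}_x^{-1}) + \sigma_u^{-2}\|K_\star\|_F^2\Bigr),
\]
and using $\bar{\Sigma}_x \to P_\infty$ as $T\to\infty$ gives the stated $O(d(\Tr(P_\infty^{-1}) + \|K_\star\|_F^2)) + o_T(1)$ bound.

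The main obstacle is establishing uniform integrability of $N\|\Khplugin(N) - K_\star\|_F^2$: convergence in distribution of the DARE-processed estimator is not enough, and one must show that the tails of $\Khplugin(N)$ are controlled. The threshold step is designed precisely for this purpose: on the good event, $\Psi$ is Lipschitz with a deterministic constant depending only on $(\varrho,\zeta,\psi,\gamma)$, so the sub-Gaussian tails of $\Thetah(N)$ transfer to $\Khplugin(N)$; on the bad event, the cost is deterministically bounded and the probability decays super-polynomially. The remaining algebraic care goes into the DARE differential at the degenerate optimum $\Lstar=0$, but the range condition makes this genuinely easy.
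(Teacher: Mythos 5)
Your proposal is correct and follows essentially the same route as the paper: a CLT for the open-loop least-squares estimate, the delta method through the DARE plugin map exploiting the degeneracy $P=I_n$, $A_c=0$ at $(\Astar,\Bstar)$ (which gives exactly the linearization $-\Bstar^\dag(\Delta_A+\Delta_B K_\star)$), a second-order expansion of $J$ with Hessian $2(T-1)\sigma_w^2(I_n\otimes \Bstar^\T\Bstar)$, and uniform integrability obtained by splitting on the threshold-defined good and bad events with moment control of $\Thetah(N)$. The only cosmetic difference is that you track the horizon dependence through $\bar{\Sigma}_x\to P_\infty$ explicitly rather than folding it into the $o(1/T)$ term of the CLT covariance as the paper does.
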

%
%Theorem~\ref{thm:policy_opt_plugin_risk}
%states that when $d \asymp n$, the RHS of the risk bound for the model-based case is $O( n^2 )$.
%It will turn out that the $O(n^2)$ dependence on $n$ is optimal for the family $\mathscr{G}(\rho, d)$.
%
We can interpret Theorem~\ref{thm:policy_opt_plugin_risk} by upper bounding $P_\infty^{-1} \preceq \sigma_w^{-2} I_n$.
In this case if $\norm{K_\star}_F^2 \leq O(n)$, then this result states that the asymptotic risk scales as
$O(nd/N)$.
{Similar to Theorem~\ref{thm:policy_eval_plugin_risk},
Theorem~\ref{thm:policy_opt_plugin_risk} requires the setting of thresholds
$(\varrho, \zeta, \psi, \gamma)$. These thresholds serve two purposes.
First, they ensure the existence of a unique positive definite solution to
the discrete algebraic Riccati solution with the input penalty $R=0$
(the details of this are worked out in Section~\ref{sec:pg:policy_opt_plugin_risk}).
Second, they simplify various technical aspects of the proof related to uniform integrability.
In practice, such strong thresholds are not needed, and we leave either removing them or relaxing their
requirements to future work.}

Next, we look at the model-free case.
As mentioned previously, baselines are very influential on the behavior of policy gradient.
In our analysis, we consider three different baselines:
\begin{align*}
    \Psi_t(\calT; K) &= \sum_{\ell=t+1}^{T} \norm{x_\ell}^2_2 \:, && \text{(\textbf{Simple} baseline $b_t(x_t;K) = \norm{x_t}^2_2$.)} \\
    \Psi_t(\calT; K) &= \sum_{\ell=t}^{T} \norm{x_\ell}^2_2 - V^{K}_t(x_t) \:, && \text{(\textbf{Value function} baseline $b_t(x_t;K) = V^K_t(x_t)$.)} \\
    \Psi_t(\calT; K) &= A^{K}_t(x_t, u_t) \:. && \text{(\textbf{Advantage} baseline $A^K_t(x_t, u_t) = Q^K_t(x_t, u_t) - V^K_t(x_t)$.)}
\end{align*}
Above, the simple baseline should be interpreted as having effectively no baseline;
it turns out to simplify the variance calculations.
On the other hand, the value function baseline $V_t^K$ is a very popular heuristic used
in practice~\cite{peters08}. Typically one has to actually estimate the value function
for a given policy, since computing it requires knowledge of the model dynamics.
In our analysis however, we simply assume the true value function is known.
While this is an unrealistic assumption
in practice, we note that this assumption substantially reduce the variance of policy gradient,
and hence only serves to reduce the asymptotic risk.
The last baseline we consider is to use the advantage function $A_t^K$.
Using advantage functions has been shown to be quite effective in practice~\cite{schulman16}.
It has the same issue as the value function baseline in that it needs to be estimated
from the data; once again in our analysis
we simply assume we have access to the true advantage function.

Our main result for model-free policy optimization is the following
asymptotic risk lower bound on Algorithm~\ref{alg:model_free_policy_opt}.
\begin{thm}
\label{thm:policy_opt_pg_risk}
Let $(\Astar, \Bstar)$ be such that $\Astar$ is stable, $\mathrm{range}(\Astar) \subseteq \mathrm{range}(\Bstar)$, and $\Bstar$ has full column rank.
Consider
Algorithm~\ref{alg:model_free_policy_opt} with $K_1 = 0_{d \times n}$,
step-sizes $\alpha_i = [2 (T-1) \sigma_w^2 \sigma_{d}(\Bstar)^2 \cdot i]^{-1}$, and threshold $\zeta \in (\norm{\Kstar}, \infty)$.
We have that the risk
is lower
bounded by:
\begin{align*}
    &\liminf_{N \to \infty} N \cdot \E[J(\Khpg(N)) - J_\star]  \geq \frac{1}{\sigma_{d}(\Bstar)^2 (1 + \norm{\Bstar}^2 ) } \times \\
     &\qquad\begin{cases}
        \Omega( T^2 d (n+\norm{\Bstar}_F^2)^3 ) + o_T(T^2) &\text{ (Simple baseline)} \\
        \Omega(T d( n + \norm{\Bstar}_F^2)(n + \norm{\Bstar^\T \Bstar}_F^2)) + o_T(T) &\text{ (Value function baseline)} \\
        \Omega(d (n + \norm{\Bstar}_F^2) \norm{\Bstar^\T \Bstar}_F^2 ) &\text{ (Advantage baseline)} \\
    \end{cases} \:.
\end{align*}
Here, $\Omega(\cdot)$ hides constants depending only on $\sigma_w^2, \sigma_u^2$.
\end{thm}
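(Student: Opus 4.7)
The plan is a three-step reduction: lower bound $J(K)-J_\star$ by a quadratic form in $K-\Kstar$; apply asymptotic normality of SGD with $c/i$ step-sizes to express $\liminf_N N\cdot\E[J(\Khpg(N))-J_\star]$ in terms of the gradient-noise covariance $\Sigma_\star := \Cov(\vec(g)\mid K=\Kstar)$; and lower bound $\Sigma_\star$ for each baseline. For the first step, the range condition yields $L_\star := \Astar + \Bstar\Kstar = 0$ and $J_\star=T\sigma_w^2 n$; for $K=\Kstar+\Delta$ the closed loop is $L=\Bstar\Delta$, and expanding $\E[\sum_t\|x_t\|^2]$ via $x_{t+1}=Lx_t+w_t$ gives
\[
    J(K)-J_\star \;=\; \sigma_w^2\sum_{t=2}^T\sum_{j=1}^{t-1}\|L^j\|_F^2 \;\geq\; \sigma_w^2(T-1)\,\Tr(\Delta^\T\Bstar^\T\Bstar\,\Delta),
\]
by keeping only the $j=1$ term. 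For the second step, vectorize $K$; the Hessian of $J$ at $\Kstar$ is $H = 2(T-1)\sigma_w^2(I_n\otimes\Bstar^\T\Bstar)$ with smallest eigenvalue $\mu = 2(T-1)\sigma_w^2\sigma_d(\Bstar)^2$, exactly matching the prescribed step-size $\alpha_i = 1/(\mu i)$ so that $c:=1/\mu$ satisfies $c\mu=1$. Because $\zeta>\|\Kstar\|$, the projection is asymptotically inactive, and by classical asymptotic theory for SGD on smooth strongly convex objectives with $c/i$ step-sizes (Fabian 1968, Polyak-Juditsky 1992), $\sqrt N\,\vec(K_N-\Kstar)\distconv\calN(0,V_\infty)$ with $(cH-\tfrac12 I)V_\infty+V_\infty(cH-\tfrac12 I)=c^2\Sigma_\star$. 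The projection supplies uniform integrability so the limit passes to second moments. Since $M:=I_n\otimes\Bstar^\T\Bstar$ commutes with $H$, diagonalizing in the shared basis $\{e_k\otimes u_j\}$ and using $c\lambda_i\geq 1$ lets one solve the Lyapunov equation entrywise, yielding
\[
    \liminf_N N\cdot\E[J(\Khpg)-J_\star] \;\gtrsim\; \sigma_w^2(T-1)\,\Tr(MV_\infty) \;\gtrsim\; \frac{\Tr(M\Sigma_\star)}{(T-1)\sigma_w^2\,\sigma_d(\Bstar)^2(1+\|\Bstar\|^2)}.
\]

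The third step is to lower bound $\Tr(M\Sigma_\star)=\E\|\Bstar g\|_F^2$ per baseline. At $\Kstar$, $L_\star=0$ so the on-policy trajectory has $x_{t+1}=\Bstar\eta_t+w_t$; hence $\{x_t\}_{t\geq 1}$ are i.i.d.\ $\calN(0,\sigma_u^2\Bstar\Bstar^\T+\sigma_w^2I_n)$ with $x_t\perp\{\eta_s,w_s\}_{s\geq t}$. Expanding
\[
    \Tr(M\Sigma_\star) \;=\; \sigma_u^{-4}\sum_{t,s}\E\!\left[(\eta_t^\T\Bstar^\T\Bstar\,\eta_s)(x_s^\T x_t)\,\Psi_t\Psi_s\right]
\]
reduces every term to a Gaussian contraction, which Isserlis' theorem handles. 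For the \emph{simple} baseline $\Psi_t=\sum_{\ell>t}\|x_\ell\|^2$, the mean $\E[\Psi_t]=(T-t)(\sigma_w^2 n+\sigma_u^2\|\Bstar\|_F^2)$ is uncanceled, $\E[\Psi_t^2]\asymp(T-t)^2(n+\|\Bstar\|_F^2)^2$, and summing in $t$ gives $\Tr(M\Sigma_\star)\gtrsim T^3 d(n+\|\Bstar\|_F^2)^3$, which after dividing by $T-1$ yields the $T^2$ scaling. For the \emph{value function} baseline, the centering $V_t^{\Kstar}(x)=\|x\|^2+(T-t)(\sigma_w^2 n+\sigma_u^2\|\Bstar\|_F^2)$ removes the leading mean and the residual variance is $\sim(T-t)(n+\|\Bstar^\T\Bstar\|_F^2)$, saving one factor of $T$. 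For the \emph{advantage} baseline, a direct calculation yields the closed form $A_t^{\Kstar}(x,u)=\|\Astar x+\Bstar u\|^2-\sigma_u^2\|\Bstar\|_F^2$, which under $u_t=\Kstar x_t+\eta_t$ collapses (using $\Astar+\Bstar\Kstar=0$) to $\|\Bstar\eta_t\|^2-\sigma_u^2\|\Bstar\|_F^2$ — depending only on $\eta_t$, not $x_t$. All cross-$t$ terms in $\Tr(M\Sigma_\star)$ then vanish by independence of $\{\eta_t\}$, and the diagonal terms give $\Tr(M\Sigma_\star)\gtrsim d(n+\|\Bstar\|_F^2)\|\Bstar^\T\Bstar\|_F^2$ with no $T$ dependence.

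The principal technical obstacle lies in the moment bookkeeping for the simple and value-function baselines: unlike the advantage case, the cross-$t$ terms in $\Tr(M\Sigma_\star)$ do not vanish automatically, because $\Psi_s$ depends on $x_\ell$ (and hence on $\eta_t$) for all $\ell>s$, so one must enumerate the surviving Wick contractions after the baseline removes the dominant mean and verify they contribute with the correct sign to give a lower (not upper) bound. A secondary concern is that the step-size choice $c\mu=1$ sits on the boundary of the standard regime $c\mu>1/2$ required by the Polyak-Juditsky asymptotic normality theorem, so one must invoke its sharp Lyapunov-equation form rather than any coarser rate-only statement; the projection $\|K\|\leq\zeta$ is needed both to supply uniform moment bounds (for converting weak convergence to the second-moment limit) and to be shown asymptotically inactive (a mild consequence of $K_i\to\Kstar$ a.s.\ and $\zeta>\|\Kstar\|$).
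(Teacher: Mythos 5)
Your overall architecture matches the paper's: quadratic growth / Hessian structure of the cost, asymptotic covariance of projected SGD via a Lyapunov equation (the paper packages this as Lemma~\ref{lem:sgd_asymptotics}, built on \citet{toulis17} and \citet{rakhlin12} rather than Fabian/Polyak--Juditsky), and a per-baseline computation of the gradient covariance at $\Kstar$. But there are concrete slips in the execution. First, the drift matrix in the Lyapunov equation must be the Hessian of the \emph{smoothed} objective $J_\Sigma$ with $\Sigma = \sigma_u^2\Bstar\Bstar^\T + \sigma_w^2 I_n$ --- the function the REINFORCE estimator is actually unbiased for, since the exploration noise enters the dynamics as $\Bstar\eta_t + w_t$ --- not the Hessian of the evaluation cost $J$. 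One has $\nabla^2 J_\Sigma(\Kstar) = 2(T-1)\bigl((\sigma_w^2 I_n + \sigma_u^2\Bstar\Bstar^\T)\otimes\Bstar^\T\Bstar\bigr)$, and it is precisely the ratio $\lambda_{\max}\bigl((\nabla^2 J(\Kstar))^{-1}(\nabla^2 J_\Sigma(\Kstar)-\tfrac{m}{2}I)\bigr)\leq 1+(\sigma_u^2/\sigma_w^2)\norm{\Bstar}^2$ that produces the $(1+\norm{\Bstar}^2)$ in the denominator. With your $H=\nabla^2 J(\Kstar)$ as drift, taking the trace of the Lyapunov equation gives $\Tr(MV_\infty)\geq \tfrac{c}{2}\Tr(\Sigma_\star)/(2(T-1)\sigma_w^2)$ with no such factor, so your displayed inequality does not follow from your own equation; it also risks over-stating the asymptotic covariance (the true drift is larger), which is the wrong direction for a lower bound. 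Your worry about sitting on the boundary of $c\mu>1/2$ also dissolves here: $c\lambda_i\geq 1$ already for $\nabla^2 J(\Kstar)$ and the correct drift only increases the eigenvalues. Second, you lower bound $\Tr(M\Sigma_\star)=\E\norm{\Bstar g}_F^2$, which replaces the factor $\E\norm{\eta_t}_2^2=\sigma_u^2 d$ by $\E\norm{\Bstar\eta_t}_2^2=\sigma_u^2\norm{\Bstar}_F^2$; since $\norm{\Bstar}_F^2$ need not dominate $d$, this does not deliver the stated $d$ factor. Keeping $\E\norm{g(\Kstar;\xi)}_F^2$ in the numerator, as in \eqref{eq:cost_lower_bound_sgd}, gives the $d$ directly.

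Third, the ``principal technical obstacle'' you identify is not one: the cross-$t$ terms vanish for \emph{all three} baselines, including the simple and value-function ones. For $t_2>t_1$, the factor $x_{t_1}=\Bstar\eta_{t_1-1}+w_{t_1-1}$ is linear in the zero-mean noise at time $t_1-1$, which appears nowhere else in $\ip{\eta_{t_1}}{\eta_{t_2}}\Psi_{t_1}\Psi_{t_2}$ because every $\Psi_t$ considered depends only on noise at times $\geq t$; each cross term therefore contains a lone independent zero-mean linear factor and has expectation zero. This collapses $\E\norm{g(\Kstar;\xi)}_F^2$ to the diagonal sum $\sigma_u^{-4}\sum_t\E[\norm{\eta_t}_2^2\norm{x_t}_2^2\Psi_t^2]$, and the remaining per-baseline moment calculations (which you state correctly, including the closed form of the advantage $A_t^{\Kstar}(x,u)=\norm{\Astar x+\Bstar u}_2^2-\sigma_u^2\norm{\Bstar}_F^2$) become routine. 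With these repairs your argument coincides with the paper's proof.
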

In order to interpret Theorem~\ref{thm:policy_opt_pg_risk},
we consider a restricted family of instances $(\Astar, \Bstar)$.
For a $\rho \in (0, 1)$ and $1 \leq d \leq n$, we define the
family $\mathscr{G}(\rho, d)$ over $(\Astar, \Bstar)$ as:
\begin{align*}
  \mathscr{G}(\rho, d) := \{ (\rho \Ustar\Ustar^\T, \rho \Ustar) : \Ustar \in \R^{n \times d} \:, \:\: \Ustar^\T \Ustar = I_d \} \:.
\end{align*}
This is a simple family where the $\Astar$ matrix is
stable and contractive, and furthermore we have $\mathrm{range}(\Astar) = \mathrm{range}(\Bstar)$.
The optimal feedback is $\Kstar = -\Ustar^\T$ for each of these instances.

Theorem~\ref{thm:policy_opt_pg_risk} states that for instances from $\mathscr{G}(\rho, d)$, the
simple baseline has risk $\Omega(T^2 \cdot d n^3 / N)$, the value function baseline has risk
$\Omega(T \cdot d n^2 / N)$, and the advantage baseline has risk $\Omega(d^2 n / N)$.
On the other hand, Theorem~\ref{thm:policy_opt_plugin_risk} states that the model-based
risk is upper bounded by $O(n d / N)$, which is less than the lower bound
for all baselines considered in Theorem~\ref{thm:policy_opt_pg_risk}.
For the simple and value function baselines, we see that
the sample complexity of the model-free policy gradient method
is several factors of $n$ and $T$ more than the model-based method.
The extra factors of the horizon
length appear due to the large variance of the policy gradient estimator
without the variance reduction effects of the advantage baseline.
The advantage baseline performs the best, only one factor of $d$ more than the
model-based method.

{We note that we prove Theorem~\ref{thm:policy_opt_pg_risk}
with a specific choice of step size $\alpha_i$.
This step size corresponds to the standard $1/(mt)$ step sizes commonly found
in proofs for SGD on strongly convex functions (see e.g.~\citet{rakhlin12}), where
$m$ is the strong convexity parameter. We leave to future work
extending our results to support Polyak-Ruppert averaging, which would yield
asymptotic results that are more robust to specific step size choices.}

Finally, we turn to our information-theoretic lower bound for any (possibly adaptive) method
over the family $\mathscr{G}(\rho, d)$.
\begin{thm}
\label{thm:policy_opt_lower_bound}
Fix a $d \leq n/2$ and suppose $d(n-d)$ is greater than an absolute constant.
Consider the family $\mathscr{G}(\rho, d)$ as describe above.
Fix a time horizon $T$ and number of rollouts $N$.
The risk over any algorithm $\calA$ which plays (possibly adaptive) feedbacks of the form $u_t = K_i x_t + \eta_t$
with $\norm{K_i} \leq 1$ and $\eta_t \sim \calN(0, \sigma_u^2 I_d)$ is lower bounded by:
\begin{align*}
    \inf_{\calA} \sup_{\substack{\rho \in (0, 1/4),\\ (\Astar, \Bstar) \in \mathscr{G}(d, \rho)}} \E[ J(\calA) - J_\star ] \gtrsim \frac{\sigma_w^4}{\sigma_w^2 + \sigma_u^2} \frac{d(n-d)}{N}\:.
\end{align*}
\end{thm}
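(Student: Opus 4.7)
The plan is Fano's inequality applied to a packing of the Stiefel manifold parameterizing $\mathscr{G}(\rho, d)$, combined with a careful choice of $\rho \in (0, 1/4)$ that balances cost separation against the KL divergence of the trajectory laws. First, a cost-to-distance reduction: for $(\Astar, \Bstar) = (\rho U U^\T, \rho U)$ and feedback $K$, the closed-loop matrix is $L = \rho U(U^\T + K)$ and the optimum $\Kstar = -U^\T$ gives $L_\star = 0$. Since $x_0 = 0$, an elementary calculation yields $J(K) - J_\star = \sigma_w^2 \sum_{s=1}^{T-1}(T-s)\|L^s\|_F^2 \geq (T-1)\rho^2 \sigma_w^2 \|K + U^\T\|_F^2$, using $\|UX\|_F = \|X\|_F$ for $U^\T U = I_d$. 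Second, a packing: the tangent space to $\mathrm{St}(d,n)$ at any $U_0$ contains the $d(n-d)$-dimensional subspace $\{W : U_0^\T W = 0\}$, and a standard Varshamov--Gilbert argument combined with a retraction back onto $\mathrm{St}(d,n)$ produces $\{U^{(1)},\ldots,U^{(M)}\}$ with $\log M \geq c_1 d(n-d)$ and pairwise Frobenius distances in $[c_2, c_3]$ for absolute constants, using the hypothesis $d(n-d) \gtrsim 1$.

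Third, and the technical heart of the argument, is the KL bound. Let $\Pr_U$ denote the joint law of $\{\calT^{(i)}\}_{i=1}^{N}$ under algorithm $\calA$ applied to $(\rho U U^\T, \rho U)$. Because $K_i$ and $\eta_t^{(i)}$ are measurable functions of past data with identical conditional laws under either hypothesis, and because $w_t^{(i)} \sim \calN(0, \sigma_w^2 I_n)$ has a covariance that does not depend on $U$, the adaptive chain rule for KL yields
\[ \KL(\Pr_{U_1}\|\Pr_{U_2}) = \frac{1}{2\sigma_w^2} \sum_{i,t} \E_{U_1}\bigl\|\rho(U_1 U_1^\T - U_2 U_2^\T) x_t^{(i)} + \rho(U_1 - U_2)(K_i x_t^{(i)} + \eta_t^{(i)})\bigr\|_2^2. \]
The constraint $\rho < 1/4$ with $\|K_i\| \leq 1$ makes every closed-loop matrix contractive ($\|L_i\| \leq 2\rho$), so the transient state covariance satisfies $\E[x_t^{(i)}(x_t^{(i)})^\T] \preceq C_1(\sigma_w^2 + \sigma_u^2) I_n$ in operator norm, uniformly in $i$ and $t$. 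Combined with the trace inequality $\Tr(AB) \leq \|A\|_{\mathrm{op}} \Tr(B)$ applied to each quadratic form and $\E\|\eta_t^{(i)}\|_2^2 = d\sigma_u^2$, this cleans up to $\KL(\Pr_{U_1}\|\Pr_{U_2}) \leq c_4 \, NT \rho^2 (\sigma_w^2 + \sigma_u^2) \|U_1 - U_2\|_F^2 / \sigma_w^2$.

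Fourth, tune $\rho$. Set $\rho^2 := \min\{1/16,\; \kappa \sigma_w^2 d(n-d) / [NT(\sigma_w^2+\sigma_u^2)]\}$ for a small enough absolute constant $\kappa$. Then $\max_{j \neq k} \KL(\Pr_{U^{(j)}}\|\Pr_{U^{(k)}}) \leq \tfrac14 \log M$, so Fano's inequality gives a constant minimax probability of error at identifying the planted index, and Step~1 converts this into a lower bound $\gtrsim T\rho^2 \sigma_w^2$ on the minimax risk. When the second term in the $\min$ is active this equals $\gtrsim \sigma_w^4 d(n-d)/[N(\sigma_w^2+\sigma_u^2)]$, which is the target; when the first term is active the target itself is $\lesssim T\sigma_w^2$ and is automatically dominated. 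The main obstacle is Step~3: producing a \emph{uniform} bound on $\E\|(U_1 - U_2) K_i x_t^{(i)}\|_2^2$ despite the history-dependence of $K_i$. This rests on operator-norm control of the state covariance under any admissible $K_i$, which is precisely why the hypotheses $\rho < 1/4$ and $\|K_i\| \leq 1$ are imposed. A secondary technical point is verifying that the Stiefel retraction in Step~2 preserves pairwise Frobenius separations up to absolute constants, which is standard but must be checked.
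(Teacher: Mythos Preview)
Your proposal is correct and follows the same Fano-on-Stiefel framework as the paper, but the bookkeeping is organized differently in two places that are worth comparing.

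\textbf{Packing.} You build an $O(1)$-separated set of size $\exp(c_1 d(n-d))$ via Varshamov--Gilbert in the normal space $\{W : U_0^\T W = 0\}$ followed by a retraction. The paper instead invokes a Grassmannian covering bound (Proposition~8 of Pajor) and lifts to $O(n,d)$ via $\|P_E - P_F\|_F \leq 2\|E_1 - F_1\|_F$, obtaining a $\sqrt{d}$-separated packing of the same log-cardinality. Your construction is more self-contained; the paper's is a one-line reduction to an existing result.

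\textbf{KL bound.} You track the $\|U_1 - U_2\|_F^2$ dependence and control the state second moment in operator norm, $\|\E[x_t x_t^\T]\| \lesssim \sigma_w^2 + \sigma_u^2$, which is uniform over admissible $K_i$. The paper instead throws away the $\|U_1-U_2\|_F$ dependence via the crude bound $\|L_i - L_j\| \leq 4\rho$, but compensates with a sharper \emph{trace} bound $\Tr(\E[x_t x_t^\T]) \lesssim d(\sigma_w^2 + \sigma_u^2)$, exploiting $\rank(L_i) \leq d$. The factor of $d$ therefore enters in different places: for you it comes from the ratio $\log M / \text{(max KL)}$ via a larger admissible $\rho^2$, while in the paper it comes from the separation $\vartheta^2 \asymp d$. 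The final arithmetic is identical.

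One minor point: in Step~3 your operator-norm covariance bound needs $K_i$ fixed within a rollout, which is the protocol here; the adaptivity is only across rollouts, so conditioning on $K_i$ and using the uniform-in-$K_i$ bound before averaging is exactly what the paper's chain-rule step $\KL(\Pr_i,\Pr_j) = \sum_\ell \E_{K_\ell}[\KL(\Pr_{i|K_\ell},\Pr_{j|K_\ell})]$ formalizes.
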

Observe that this bound is $\Omega(nd/N)$. Therefore,
Theorem~\ref{thm:policy_opt_lower_bound} tells us that asymptotically, the
model-based method in Algorithm~\ref{alg:model_based_policy_opt} is
optimal in terms of its dependence on the state and input dimensions $n$ and $d$
over the family $\mathscr{G}(\rho, d)$.

\section{Related Work}

For general Markov Decision Processes (MDPs), 
the setting which is the best understood theoretically is the finite-horizon
episodic case with discrete state and action spaces, often referred to as the
``tabular'' setting. 
\citet{jin18} provide an excellent overview of the 
known regret bounds in the tabular setting; here we give a brief summary of the highlights.
We focus only on regret bounds for simplicity, but note that many results have also
been establishes in the PAC setting (see e.g.~\cite{lattimore14,strehl09,strehl06}).
For tabular MDPs, a model-based method is one which
stores the entire state-transition matrix, which takes $O(S^2 A H)$ space
where $S$ is the number of states, $A$ is the number of actions, and $H$ is the horizon length.
The best known regret bound in the model-free case is $\Otilde(\sqrt{H^2 SAT})$ from \citet{azar17},
which matches the known lower bound of $\Omega(\sqrt{H^2 SAT})$ from \citet{jaksch10,jin18} up to log factors.
On the other hand, a model-free method is one which only stores the $Q$-function and hence
requires only $O(S A H)$ space. The best known regret bound in the model-free case is
$\Otilde( \sqrt{H^3 S A T} )$, which is worse than the model-based case by a factor of the horizon length $H$.
Interestingly, there is no gap in terms of the number of states $S$ and actions $A$.
It is open whether or not the gap in $H$ is fundamental or can be closed.
\citet{sun18} present an information-theoretic definition of model-free algorithms.
Under their definition, they construct a family of factored MDPs with horizon length $H$
where any model-free algorithm incurs sample complexity $\Omega(2^H)$, whereas there exists
a model-based algorithm that has sample complexity polynomial in $H$ and other relevant quantities. 
We leave proving lower bounds for LQR under their more general definition of model-free algorithms to future work.

For LQR, the story is less complete. Unlike the tabular setting, the storage requirements
of a model-based method are comparable to a model-free method.
For instance, it takes $O(n(n+d))$ space to store the state transition model
and $O((n+d)^2)$ space to store the $Q$-function.
In presenting the known results of LQR, we will delineate between offline (one-shot) methods
versus online (adaptive) methods.

In the offline setting, the first non-asymptotic result 
is from \cite{fiechter97}, who studied the sample complexity of 
the \emph{discounted} infinite horizon LQR problem. Later, \citet{dean17}
study the average cost infinite horizon problem, using tools from
robust control to quantify how the uncertainty in the model affects control performance
in an interpretable way. Both works fall under model-based methods, since they both propose
to first estimate the state transition matrices from sample trajectories using least-squares
and then use the estimated dynamics in a control synthesis procedure.

For model-free methods for LQR, \citet{tu18a} study the performance
of least-squares temporal difference learning (LSTD)~\cite{boyan99,bradtke96}, which
is a classic policy evaluation algorithm in RL. They focus on the discounted cost LQR setting
and provide a non-asymptotic high probability bound on the risk of LSTD.
Later, \citet{abbasi18} extend this result to the average cost LQR setting.
Most related to our analysis for policy gradient is \citet{fazel18}, 
who study the performance of model-free policy gradient related methods
on LQR. Unfortunately, their bounds do not give explicit dependence on the problem
instance parameters and are therefore difficult to compare to. Furthermore, Fazel et al.\ study 
a simplified version of the problem where the problem is a infinite horizon problem (as opposed to finite horizon
in this work) and the only noise is in the
initial state; all subsequence state transitions have no process noise. 
Other than our current work, we are unaware of any analysis (asymptotic or non-asymptotic)
which explicitly studies the behavior of policy gradient on the finite horizon LQR problem.
We also note that Fazel et al.\ analyze a policy optimization method which
is more akin to derivative-free random search (e.g.~\cite{nesterov17,mania18,salimans17}) than REINFORCE.
Derivative-free random search for LQR is studied by \citet{malik18}, 
who prove upper bounds that suggest that having two point evaluations is more sample efficient
compared to single point evaluation.
We leave analyzing these derivative-free algorithms under our framework to future work.
Finally, note that all the results mentioned for LQR are only \emph{upper bounds}; we are 
unaware of any \emph{lower bounds} in the literature for LQR which give explicit
dependence on the problem instance.

We now discuss known results for the online (adaptive) setting for LQR.
For model-based algorithms, both
\emph{optimism in the face of uncertainty}
(OFU)~\cite{abbasi11,faradonbeh17b,ibrahimi12} and \emph{Thompson
sampling}~\cite{abeille17,abeille18,ouyang17} have been analyzed in the online
learning literature. In both cases, the algorithms have been shown to achieve
$\Otilde(\sqrt{T})$ regret, which is known to be nearly optimal in the
dependence on $T$. However, in nearly all the bounds the dependence on the
problem instance parameters is hidden.  Furthermore, it is currently unclear
how to solve the OFU subproblem in polynomial time for LQR.  In response to the
computational issues with OFU, \citet{dean18} propose a polynomial
time adaptive algorithm with sub-linear regret $\Otilde(T^{2/3})$; their bounds
also make the dependence on the problem instance parameters
explicit, but are quite conservative in this regard.

For model-free algorithms, \citet{abbasi18} study the
regret of a model-free algorithm similar in spirit to
least-squares policy iteration (LSPI)~\cite{lagoudakis03}.  They prove that
their algorithm has regret $\Otilde(T^{2/3+\varepsilon})$ for any $\varepsilon
> 0$, nearly matching the bound given by Dean et al.\ in terms of the dependence on
$T$.  In terms of the dependence on the problem specific parameters, however,
their bound is not directly comparable to that of Dean et al. 
Experimentally, Abbasi-Yadkori et al.\ observe that their model-free algorithm performs quite
sub-optimally compared to model-based methods; these empirical observations are also
consistent with similar experiments conducted in \cite{mania18,recht18,tu18a}.

\section{Proof Sketch}

At a high level our proofs are relatively straightforward, relying on classical arguments from
asymptotic statistics. However, various technical issues arise which make the arguments more involved.
Below, we briefly outline the proof strategies that we use for our main results.

\subsection{Policy Evaluation}

\paragraph{Model-based (Algorithm~\ref{alg:model_based_policy_eval} and Theorem~\ref{thm:policy_eval_plugin_risk}).}

We first compute the limiting distribution of $\sqrt{T} \vec(\Lh(T) - \Lstar)$ (Lemma~\ref{lem:ls_asymptotic_dist})
as a consequnce of Markov chain CLTs (Theorem~\ref{thm:markov_chain_CLT}). We then use the delta method to compute the limiting
distribution of $\sqrt{T} \svec(\Phplugin(T) - P_\star)$ by differentiating the map $L \mapsto \dlyap(L, Q + K^\T R K)$ at $\Lstar$.
We then prove that this sequence of random variables is uniformly integrable by controlling its higher order moments.
Uniform integrability then implies (Lemma~\ref{lem:lp_convergence}) that the limit of the scaled risk $T \cdot \E[ \norm{\Phplugin(T) - P_\star}_F^2 ]$ is equal to the trace of the covariance of this limiting distribution, which yields the result.
The details of this are worked out in Section~\ref{sec:proofs:policy_eval_plugin}.

\paragraph{Model-free LSTD (Algorithm~\ref{alg:model_free_policy_eval} and Theorem~\ref{thm:policy_eval_lstd_risk}).}

This case is simpler since we can directly compute the limiting distribution of 
$\sqrt{T} (\wlstd - w_\star)$ (Lemma~\ref{lem:lstd_asymptotic_distribution}) using Markov chain CLTs.
Then, the trace of the limiting distribution immediately lower bounds (Lemma~\ref{lem:lp_convergence}) the limit
of the scaled risk $T \cdot \E[ \norm{\Phlstd - P_\star}_F^2 ]$ without having to 
establish uniform integrability. The details are worked out in Section~\ref{sec:proofs:policy_eval_lstd}.

\subsection{Policy Optimization}

\paragraph{Model-based (Algorithm~\ref{alg:model_based_policy_opt} and Theorem~\ref{thm:policy_opt_plugin_risk}).}

As before, we start by computing the limiting distribution of $\sqrt{N} \vec(\Thetah(N) - \Theta_\star)$ (Lemma~\ref{lem:ls_driven_asymptotic_dist}).
Next, we use the delta method to compute the limiting distribution of $\sqrt{N} \vec(K(\Thetah(N)) - K_\star)$, where
$K(\Theta)$ is the optimal LQR controller designed with the model parameters $\Theta$. This is done by differentiating 
the solution of the discrete algebraic Riccati equation with respect to the model parameters (Lemma~\ref{lem:dare_derivative}).
Next, we make the observation that $J(K_\star)=0$ and apply the second order delta method in order to compute the limiting distribution of 
$N \cdot (J(\Kh(N)) - J_\star)$. We then show uniform integrability of this sequence by once again controlling its higher order moments.
Again, by Lemma~\ref{lem:lp_convergence} this yields an expression for the limit of the scaled risk $N \cdot \E[ J(\Kh(N)) - J_\star ]$.
The details are worked out in Section~\ref{sec:pg:policy_opt_plugin_risk}.

\paragraph{Model-free policy gradients (Algorithm~\ref{alg:model_free_policy_opt} and Theorem~\ref{thm:policy_opt_pg_risk}).}

This proof is the most involved, because it requires us to establish the limiting distribution of a first-order stochastic
optimization algorithm with a convex projection step. In Section~\ref{sec:pg:calcs}, we show that the geometry of the smoothed policy gradient function
satisfies restricted strong convexity for the particular dynamics we consider. We then (Section~\ref{sec:proof_sgd_asymptotics}) compute the limiting distribution for SGD with projection
on restricted strong convex functions, when the optimal solution lives in the interior of the domain. To do this, we build on the results of \citet{toulis17} and \citet{rakhlin12}.
The remainder of the proof involves computing the variance of the various policy gradient estimators with different baselines.
In general this calculation is not tractable, but our particular choice of models we study allows us to obtain very sharp estimates on this variance.
The details are worked out in Section~\ref{sec:pg:policy_opt_pg}.

\section{Conclusion}

We compared the asymptotic performance of both model-based and
model-free methods for LQR. We showed that for policy evaluation, a simple
plugin estimator is always more asymptotically sample efficient than the
classical LSTD estimator. For policy optimization, we studied a family of
instances where the convergence of policy gradient to the optimal solution is
guaranteed, and showed that in this setting a simple plugin estimator is
asymptotically at least a factor of state-dimension more efficient than policy gradient, depending
on what specific baseline is used.

This work opens a variety of new directions for future research.
The first is to broaden our results for policy gradient and analyze a 
larger family of instances. 
As mentioned earlier, this would require expanding our current understanding
for under what conditions policy gradient on a finite horizon objective converges to an optimal solution. 
Another interesting direction is to use our framework to analyze
the effect of various baseline estimators in policy gradient.
Designing efficient baseline estimators is still an open problem in RL, 
and using asymptotic analysis to more carefully understand 
the various estimators could be very insightful.
Finally, extending the asymptotic analysis to the online learning setting
may help further our understand of the effects of 
optimistic exploration versus $\varepsilon$-greedy exploration for LQR.

\section*{Acknowledgements}
We thank John Duchi and Daniel Russo, who both independently suggested
studying LQR using asymptotic analysis; this paper is a direct result of their feedback.
We also thank Horia Mania for many helpful discussions regarding policy gradient methods.
Finally, we thank Nicolas Flammarion and Nilesh Tripuraneni for pointers regarding asymptotic analysis of stochastic gradient methods.
ST is supported by a Google PhD fellowship.
BR is generously supported in part by ONR awards N00014-17-1-2191, N00014-17-1-2401, and N00014-18-1-2833, the DARPA Assured Autonomy (FA8750-18-C-0101) and Lagrange (W911NF-16-1-0552) programs, and an Amazon AWS AI Research Award.

\bibliography{paper}

%\clearpage
\appendix

\section{Asymptotic Toolbox}

Our analysis relies heavily on computing limiting distributions for the various estimators
we study. A crucial fact we use is that if the matrix $\Lstar$ is stable, then the Markov
chain $\{x_t\}$ given by $x_{t+1} = \Lstar x_t + w_t$ with $w_t \sim \calN(0, \sigma_w^2 I_n)$
is geometrically ergodic. This allows us to apply well known limit theorems
for ergodic Markov chains.

In what follows, we let $\asconv$ denote almost sure convergence and
$\distconv$ denote convergence in distribution.
We also let $\otimes$ denote the standard Kronecker product and
$\otimes_s$ denote the \emph{symmetric} Kronecker product; see e.g.~\citet{schacke13} for a review
of the basic properties of the Kronecker and symmetric Kronecker product which we will use extensively
throughout the sequel.
For a matrix $M$, the notation $\vec(M)$ denotes the vectorized version of $M$ by stacking the columns.
We will also let $\svec(\cdot)$ denote the operator that satisfies
$\ip{\svec(M_1)}{\svec(M_2)} = \ip{M_1}{M_2}$ for all symmetric matrices $M_1, M_2 \in \R^{n \times n}$,
where the first inner product is with respect to $\R^{n(n+1)/2}$ and the second is with respect to $\R^{n \times n}$.
Finally, we let $\mat(\cdot)$ and $\smat(\cdot)$ denote the functional inverses of $\vec(\cdot)$ and $\svec(\cdot)$.
The proofs of the results presented in this section are deferred to Section~\ref{sec:appendix:toolbox_proofs}.

We first state a well-known result that concerns the least-squares estimator of a stable dynamical system.
In the scalar case, this result dates back to \citet{mann43}.
\begin{lem}
\label{lem:ls_asymptotic_dist}
Let $x_{t+1} = \Lstar x_t + w_t$ be a dynamical system with $\Lstar$ stable and $w_t \sim \calN(0, \sigma_w^2 I)$.
Given a trajectory $\{x_t\}_{t=0}^{T}$, let $\Lh(T)$ denote the least-squares estimator of $\Lstar$ with regularization $\lambda \geq 0$:
\begin{align*}
    \widehat{L}(T) = \arg\min_{L \in \R^{n \times n}} \frac{1}{2} \sum_{t=0}^{T-1} \norm{x_{t+1} - L x_t}^2_2 + \frac{\lambda}{2} \norm{L}_F^2 \:.
\end{align*}
Let $P_\infty$ denote the stationary covariance matrix of the process $\{x_t\}_{t=0}^{\infty}$, i.e.
$\Lstar P_\infty \Lstar^\T - P_\infty + \sigma_w^2 I_n = 0$.
We have that $\Lh(T) \asconv \Lstar$ and furthermore:
\begin{align*}
    \sqrt{T} \vec(\Lh(T) - \Lstar) \distconv \calN(0, \sigma_w^2 (P_\infty^{-1} \otimes I_n)) \:.
\end{align*}
\end{lem}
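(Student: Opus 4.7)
The plan is the standard ``least-squares in a stable VAR(1)'' argument: decompose the error, identify the empirical Gram matrix as an ergodic average, identify the cross term as a martingale sum amenable to a CLT, and combine via Slutsky's theorem. From the normal equations for the regularized least-squares problem and the relation $x_{t+1} = \Lstar x_t + w_t$, I first get
\begin{equation*}
\widehat{L}(T) - \Lstar = \Big(\textstyle\sum_{t=0}^{T-1} w_t x_t^\T - \lambda \Lstar\Big)\Big(\textstyle\sum_{t=0}^{T-1} x_t x_t^\T + \lambda I_n\Big)^{-1}.
\end{equation*}
Vectorizing via $\vec(uv^\T) = v \otimes u$ and $\vec(ABC) = (C^\T \otimes A)\vec(B)$, this becomes
\begin{equation*}
\sqrt{T}\, \vec(\widehat{L}(T) - \Lstar) = \Big(\big(\tfrac{1}{T}\textstyle\sum_{t=0}^{T-1} x_t x_t^\T + \tfrac{\lambda}{T} I_n\big)^{-1} \otimes I_n\Big)\Big(\tfrac{1}{\sqrt{T}}\textstyle\sum_{t=0}^{T-1} (x_t \otimes w_t) - \tfrac{\lambda}{\sqrt{T}} \vec(\Lstar)\Big).
\end{equation*}

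Second, since $\Lstar$ is stable, the Markov chain $\{x_t\}$ is geometrically ergodic with unique stationary distribution $\calN(0, P_\infty)$. The strong law for ergodic Markov chains yields $\tfrac{1}{T}\sum_{t=0}^{T-1} x_t x_t^\T \asconv P_\infty$, so the left Kronecker factor in the display above converges almost surely to $P_\infty^{-1} \otimes I_n$, and the $\tfrac{\lambda}{\sqrt{T}}\vec(\Lstar)$ shift is asymptotically negligible.

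Third, I apply the Markov chain CLT (Theorem~\ref{thm:markov_chain_CLT}) to the cross sum. The vectors $\xi_t := x_t \otimes w_t$ form a martingale difference sequence with respect to the filtration $\calF_t := \sigma(w_0, \ldots, w_t)$, because $x_t$ is $\calF_{t-1}$-measurable and $w_t$ is independent of $\calF_{t-1}$ with mean zero. Using independence of $w_t$ from $x_t$, the conditional second moment satisfies
\begin{equation*}
\E[\xi_t \xi_t^\T \mid \calF_{t-1}] = x_t x_t^\T \otimes \E[w_t w_t^\T] = \sigma_w^2 (x_t x_t^\T \otimes I_n),
\end{equation*}
whose ergodic average tends a.s.\ to $\sigma_w^2(P_\infty \otimes I_n)$. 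Gaussian tails on $w_t$ and $x_t$ make the required Lindeberg condition trivial, so the CLT yields $\tfrac{1}{\sqrt{T}} \sum_{t=0}^{T-1} (x_t \otimes w_t) \distconv \calN(0, \sigma_w^2 (P_\infty \otimes I_n))$.

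Finally, Slutsky's theorem combines the almost sure limit of the Gram factor with the distributional limit of the cross term, and the Kronecker identity $(P_\infty^{-1} \otimes I_n)(P_\infty \otimes I_n)(P_\infty^{-1} \otimes I_n) = P_\infty^{-1} \otimes I_n$ yields the advertised asymptotic covariance $\sigma_w^2(P_\infty^{-1} \otimes I_n)$. Almost sure consistency $\widehat{L}(T) \asconv \Lstar$ then follows from dividing the same decomposition by $T$ instead of $\sqrt{T}$, since $\tfrac{1}{T}\sum_t w_t x_t^\T \asconv 0$ by the law of large numbers for ergodic martingale averages. The only delicate step is justifying the Markov chain CLT: because $\{x_t\}$ is a Gaussian-driven, geometrically ergodic linear system, all polynomial moments of $\xi_t$ are uniformly bounded in $t$, so the hypotheses of Theorem~\ref{thm:markov_chain_CLT} hold with room to spare, and the regularization $\lambda$ only affects lower-order terms.
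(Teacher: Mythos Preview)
Your proof is correct and follows essentially the same route as the paper: decompose the regularized least-squares error, handle the Gram matrix via the ergodic theorem, establish a CLT for $T^{-1/2}\sum_t x_t\otimes w_t$, and combine via Slutsky. The only cosmetic difference is that the paper applies Theorem~\ref{thm:markov_chain_CLT} to the augmented chain $\{(x_t,w_t)\}$ and then observes that the autocovariance terms in $\sigma_f^2$ vanish because of the martingale-difference property, whereas you invoke Theorem~\ref{thm:markov_chain_CLT} by name but then argue via conditional second moments and a Lindeberg condition, which is really the martingale CLT rather than the ergodic-chain CLT; either route is valid here, so this is a harmless conflation rather than a gap.
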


We now consider a slightly altered process where the system is no longer autonomous,
and instead will be driven by white noise.
\begin{lem}
\label{lem:ls_driven_asymptotic_dist}
Let $x_{t+1} = \Astar x_t + \Bstar u_t + w_t$ be a stable dynamical system driven by $u_t \sim \calN(0, \sigma_u^2 I_d)$ and
$w_t \sim \calN(0, \sigma_w^2 I_n)$.
Consider a least-squares estimator $\widehat{\Theta}$ of $\Theta_\star := (\Astar, \Bstar) \in \R^{n \times (n+d)}$
based off of $N$ independent trajectories of length $T$, i.e. given $\{ \{z_t^{(i)} := (x_t^{(i)}, u_t^{(i)})\}_{t=0}^{T} \}_{i=1}^{N}$,
\begin{align*}
  \Thetah(N) = \arg\min_{(A, B) \in \R^{n \times (n+d)}} \frac{1}{2} \sum_{i=1}^{N} \sum_{t=0}^{T-1} \norm{ x_{t+1}^{(i)} - A x_t^{(i)} - B u_t^{(i)} }_2^2 + \frac{\lambda}{2} \norm{\rvectwo{A}{B}}_F^2 \:.
\end{align*}
Let $P_\infty$ denote the stationary covariance of the process $\{x_t\}_{t=0}^{\infty}$, i.e. $P_\infty$ solves
\begin{align*}
    \Astar P_\infty \Astar^\T - P_\infty + \sigma_u^2 \Bstar\Bstar^\T + \sigma_w^2 I_n = 0 \:.
\end{align*}
We have that $\Thetah(N) \asconv \Theta_\star$ and furthermore:
\begin{align*}
  \sqrt{N} \vec(\Thetah(N) - \Theta_\star) \distconv \calN\left(0,  \frac{\sigma_w^2}{T} \bmattwo{P_\infty^{-1}}{0}{0}{(1/\sigma_u^2) I_d} \otimes I_n + o(1/T) \right) \:.
\end{align*}
\end{lem}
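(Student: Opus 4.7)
The plan is to exploit the i.i.d.\ structure across rollouts, which reduces the analysis to the standard LLN + CLT + Slutsky template; unlike Lemma~\ref{lem:ls_asymptotic_dist}, no Markov-chain limit theorem is required because the rollouts $\calT^{(i)}$ are independent samples whose length $T$ is fixed. First I would rewrite the estimator using $x_{t+1}^{(i)} = \Theta_\star z_t^{(i)} + w_t^{(i)}$, yielding $\Thetah(N) - \Theta_\star = G_N M_N^{-1} - (\lambda/N)\,\Theta_\star M_N^{-1}$ where $G_N := \tfrac{1}{N}\sum_{i=1}^{N}\sum_{t=0}^{T-1} w_t^{(i)}(z_t^{(i)})^\T$ and $M_N := \tfrac{1}{N}\sum_{i=1}^{N}\sum_{t=0}^{T-1} z_t^{(i)}(z_t^{(i)})^\T + (\lambda/N) I_{n+d}$.

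Since in open loop $u_t^{(i)}$ is independent of $x_t^{(i)}$, the per-time-step second moment of $z_t$ is the block-diagonal matrix with blocks $\Sigma_t := \E[x_t x_t^\T] = \sum_{s=0}^{t-1}\Astar^{t-1-s}(\sigma_u^2\Bstar\Bstar^\T + \sigma_w^2 I_n)(\Astar^\T)^{t-1-s}$ and $\sigma_u^2 I_d$. The SLLN across $i$ then gives $M_N \asconv M_\star := \sum_{t=0}^{T-1}\bmattwo{\Sigma_t}{0}{0}{\sigma_u^2 I_d}$, and continuous mapping yields consistency $\Thetah(N) \asconv \Theta_\star$. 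For the fluctuations, observe $w_t^{(i)}$ is mean-zero Gaussian independent of $z_t^{(i)}$ and of all $w_{t'}^{(i)}, z_{t'}^{(i)}$ with $t' < t$; using $\vec(wz^\T) = z\otimes w$, the covariance of the inner sum $\sum_{t=0}^{T-1}\vec(w_t^{(i)}(z_t^{(i)})^\T)$ collapses to $\sigma_w^2 M_\star \otimes I_n$, so the multivariate CLT across the i.i.d.\ index $i$ gives $\sqrt{N}\vec(G_N) \distconv \calN(0, \sigma_w^2 M_\star \otimes I_n)$.

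Combining with $M_N\asconv M_\star$ by Slutsky and using $\vec(XM^{-1}) = (M^{-\T}\otimes I_n)\vec(X)$ (the ridge term $(\lambda/N)\Theta_\star M_N^{-1}$ contributes an $O(1/N)$ bias that is killed by $\sqrt{N}$-scaling), I get
\begin{align*}
\sqrt{N}\vec(\Thetah(N) - \Theta_\star) \distconv \calN\bigl(0,\; \sigma_w^2 \, M_\star^{-1} \otimes I_n\bigr).
\end{align*}
Finally, to match the stated form, I would use stability of $\Astar$ to conclude $\norm{\Sigma_t - P_\infty} \lesssim \rho(\Astar)^{2t}$ geometrically, so that $\tfrac{1}{T}\sum_{t=0}^{T-1}\Sigma_t = P_\infty + O(1/T)$; block-diagonal inversion then yields $M_\star^{-1} = \tfrac{1}{T}\bmattwo{P_\infty^{-1}}{0}{0}{(1/\sigma_u^2)I_d} + o(1/T)$, and substitution completes the proof.

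The argument is largely classical; the main obstacle is purely bookkeeping. Specifically, I need to verify carefully that all cross-time contributions to the covariance of $\sum_{t=0}^{T-1} w_t^{(i)}(z_t^{(i)})^\T$ vanish (requiring the open-loop independence $\E[x_t u_t^\T] = 0$ together with the martingale-difference-like structure of $w_t^{(i)}$), and to quantify the $o(1/T)$ error that arises from replacing the finite-horizon time average of $\Sigma_t$ by its limit $P_\infty$. Neither step is deep, but both are essential for the advertised form of the asymptotic covariance.
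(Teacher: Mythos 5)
Your proposal is correct and follows essentially the same route as the paper's proof: the same bias-plus-noise decomposition of the ridge estimator, the identity $\vec(wz^\T) = (z \otimes I_n)w$ with the martingale-difference structure killing cross-time covariance terms, the i.i.d.\ CLT across rollouts combined with the SLLN and Slutsky, and the geometric bound $\norm{\Sigma_t - P_\infty} \lesssim \rho^{2t}$ to replace the time-averaged covariance by $P_\infty$ up to $O(1/T)$. The only differences are cosmetic normalization conventions (where the $1/T$ factors are placed).
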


Next, we consider the asymptotic distribution of Least-Squares Temporal Difference Learning
for LQR.
\begin{lem}
\label{lem:lstd_asymptotic_distribution}
Let $x_{t+1} = \Astar x_t + \Bstar u_t + w_t$ be a linear system driven by $u_t = K x_t$ and $w_t \sim \calN(0, \sigma_w^2 I_n)$.
Suppose the closed-loop matrix $\Astar + \Bstar K$ is stable.
Let $\nu_\infty$ denote the stationary distribution of the Markov chain $\{x_t\}_{t=0}^{\infty}$.
Define the two matrices $A_\infty, B_\infty$, the mapping $\psi(x)$, and the vector $w_\star$ as
\begin{align*}
    A_\infty &:= \mathop{\E}_{\substack{x \sim \nu_\infty, \\ x' \sim p(\cdot|x, \pi(x))}}[ \phi(x)(\phi(x) - \phi(x'))^\T  ] \:, \\
    B_\infty &:= \mathop{\E}_{\substack{x \sim \nu_\infty, \\ x' \sim p(\cdot|x, \pi(x))}}[ ((\phi(x') - \psi(x))^\T w_\star)^2  \phi(x)\phi(x)^\T ] \:, \\
    \psi(x) &:= \mathop{\E}_{x' \sim p(\cdot|x, \pi(x))}[ \phi(x') ] \:, \\
    w_\star &:= \svec(P_\star) \:.
\end{align*}
Let $\wlstd(T)$ denote the LSTD estimator given by:
\begin{align*}
  \wlstd(T) = \left( \sum_{t=0}^{T-1} \phi(x_t) (\phi(x_t) - \phi(x_{t+1}))^\T \right)^{-1} \left( \sum_{t=0}^{T-1} (c_t - \lambda_t) \phi(x_t) \right) \:.
\end{align*}
Suppose that LSTD is run with the true $\lambda_t = \lambda_\star := \sigma_w^2 \Tr(P_\star)$
and that the matrix $A_\infty$ is invertible.
We have that $\wlstd(T) \asconv w_\star$ and furthermore:
\begin{align*}
    \sqrt{T}(\wlstd(T) - w_\star) \distconv \calN(0, A_\infty^{-1} B_\infty A_\infty^{-\T}) \:.
\end{align*}
\end{lem}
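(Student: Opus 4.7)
The plan is to treat $\wlstd(T)$ as an M-estimator of the form $\hat{A}_T^{-1}\hat{b}_T$, where
\begin{align*}
  \hat{A}_T := \frac{1}{T}\sum_{t=0}^{T-1} \phi(x_t)(\phi(x_t) - \phi(x_{t+1}))^\T \:, \qquad \hat{b}_T := \frac{1}{T}\sum_{t=0}^{T-1} (c_t - \lambda_\star)\phi(x_t) \:,
\end{align*}
so that $\wlstd(T) - w_\star = \hat{A}_T^{-1}(\hat{b}_T - \hat{A}_T w_\star)$. Since $\Astar + \Bstar K$ is stable, the closed-loop Markov chain $\{x_t\}$ is geometrically ergodic with Gaussian stationary distribution $\nu_\infty = \calN(0, P_\infty)$, and the functions $\phi(x)\phi(x)^\T$, $\phi(x)\phi(x')^\T$, and $c(x,Kx)\phi(x)$ are polynomial in $(x,x')$ and hence integrable under the stationary distribution. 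The standard ergodic theorem for geometrically ergodic chains therefore yields $\hat{A}_T \asconv A_\infty$ and $\hat{b}_T \asconv b_\infty$ almost surely, and the Bellman equation for the relative value function gives $b_\infty = A_\infty w_\star$. This establishes consistency $\wlstd(T) \asconv w_\star$ via the continuous mapping theorem (using invertibility of $A_\infty$).

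For the CLT, the key step is to rewrite
\begin{align*}
  \sqrt{T}(\hat{b}_T - \hat{A}_T w_\star) = \frac{1}{\sqrt{T}} \sum_{t=0}^{T-1} \phi(x_t)\delta_t \:, \qquad \delta_t := (c_t - \lambda_\star) - (\phi(x_t) - \phi(x_{t+1}))^\T w_\star \:.
\end{align*}
The Bellman equation $\E[\phi(x_{t+1})^\T w_\star + c_t - \lambda_\star \mid x_t] = \phi(x_t)^\T w_\star$, together with the definition $\psi(x) = \E[\phi(x')\mid x]$, lets us eliminate the reward and rewrite the residual as $\delta_t = (\phi(x_{t+1}) - \psi(x_t))^\T w_\star$. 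Consequently $g_t := \phi(x_t)\delta_t$ is $\F_{t+1}$-measurable and satisfies $\E[g_t \mid \F_t] = 0$, so $\{g_t\}$ is a martingale difference sequence adapted to the natural filtration.

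With the chain started from its stationary distribution (initial distribution only contributes a $o_P(1/\sqrt{T})$ term by geometric ergodicity), a direct computation of the conditional second moment at stationarity gives
\begin{align*}
  \E[g_t g_t^\T] = \E_{x\sim\nu_\infty,\, x'\sim p(\cdot\mid x,\pi(x))}\bigl[((\phi(x')-\psi(x))^\T w_\star)^2 \phi(x)\phi(x)^\T\bigr] = B_\infty \:.
\end{align*}
Applying the martingale CLT (equivalently, the Markov chain CLT stated as Theorem~\ref{thm:markov_chain_CLT}) to this MDS then yields $\sqrt{T}(\hat{b}_T - \hat{A}_T w_\star) \distconv \calN(0, B_\infty)$. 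Finally, Slutsky's theorem combined with $\hat{A}_T \asconv A_\infty$ gives
\begin{align*}
  \sqrt{T}(\wlstd(T) - w_\star) = \hat{A}_T^{-1}\,\sqrt{T}(\hat{b}_T - \hat{A}_T w_\star) \distconv \calN(0, A_\infty^{-1} B_\infty A_\infty^{-\T}) \:.
\end{align*}

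The main technical obstacle is the algebraic identification $\delta_t = (\phi(x_{t+1})-\psi(x_t))^\T w_\star$, which requires carefully applying the Bellman equation to cancel the reward term and is what produces the specific form of $B_\infty$ in the statement; the remaining steps are verifications that the standard moment and mixing hypotheses of the Markov chain CLT apply, which follow routinely from geometric ergodicity plus the polynomial growth of $\phi$ together with Gaussian tails of $\nu_\infty$.
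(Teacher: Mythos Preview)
Your approach is essentially the same as the paper's: both decompose $\wlstd(T)-w_\star$ as $\hat{A}_T^{-1}$ times a residual sum, use the Bellman identity $c_t-\lambda_\star=(\phi(x_t)-\psi(x_t))^\T w_\star$ to rewrite that residual as $\phi(x_t)(\phi(x_{t+1})-\psi(x_t))^\T w_\star$, apply the ergodic theorem to $\hat{A}_T$ and the Markov chain CLT (Theorem~\ref{thm:markov_chain_CLT}) to the residual, and finish with Slutsky. Your explicit identification of $\{\phi(x_t)\delta_t\}$ as a martingale difference sequence is exactly what makes the asymptotic covariance collapse to $B_\infty$ with no cross-correlation terms, a point the paper's proof leaves implicit.
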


As a corollary to Lemma~\ref{lem:lstd_asymptotic_distribution}, we work out the formulas for $A_\infty$ and
$B_\infty$ and a useful lower bound.
\begin{cor}
\label{cor:a_infty_b_infty}
In the setting of Lemma~\ref{lem:lstd_asymptotic_distribution}, with $\Lstar = \Astar + \Bstar K$, we have
that the matrix $A_\infty$ is invertible, and:
\begin{align*}
    A_\infty &= (P_\infty \otimes_s P_\infty) - (P_\infty \Lstar^\T \otimes_s P_\infty \Lstar^\T) \:, \\
    B_\infty &= ( \sigma_w^2\ip{P_\infty}{\Lstar^\T P_\star^2 \Lstar} + 2 \sigma_w^4 \norm{P_\star }_F^2  ) (2 (P_\infty \otimes_s P_\infty) +  \svec(P_\infty) \svec(P_\infty)^\T) \\
    &\qquad + 2 \sigma_w^2 (\svec(P_\infty) \svec(P_\infty \Lstar^\T P_\star^2 \Lstar P_\infty)^\T + \svec(P_\infty \Lstar^\T P_\star^2 \Lstar P_\infty)\svec(P_\infty)^\T ) \\
    &\qquad + 8 \sigma_w^2 (P_\infty \Lstar^\T P_\star^2 \Lstar P_\infty \otimes_s P_\infty) \:.
\end{align*}
Furthermore, we can lower bound the matrix $A_\infty^{-1} B_\infty A_\infty^{-\T}$ by:
\begin{align}
  A_\infty^{-1} B_\infty A_{\infty}^{-\T} &\succeq 8 \sigma_w^2 \ip{P_\infty}{\Lstar^\T P_\star^2 \Lstar} (I - \Lstar^\T \otimes_s \Lstar^\T)^{-1} (P_\infty^{-1} \otimes_s P_\infty^{-1})(I - \Lstar^\T \otimes_s \Lstar^\T)^{-\T} \nonumber \\
  &\qquad + 16 \sigma_w^2(I - \Lstar^\T \otimes_s \Lstar^\T)^{-1} (\Lstar^\T P_\star^2 \Lstar \otimes_s P_\infty^{-1})(I - \Lstar^\T \otimes_s \Lstar^\T)^{-\T} \label{eq:lstd_cov_lb} \:.
\end{align}
\end{cor}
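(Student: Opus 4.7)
Both $A_\infty$ and $B_\infty$ are expectations over the stationary distribution $x \sim \nu_\infty = \calN(0, P_\infty)$ together with $x' = \Lstar x + w$, $w \sim \calN(0, \sigma_w^2 I_n)$ independent of $x$. The plan is to reduce each to Gaussian moment computations of order four and six, apply Isserlis's theorem, and then collapse the resulting expressions with the two key identities
\begin{align*}
(A \otimes_s B)\svec(X) = \tfrac{1}{2}\svec(AXB^\T + BXA^\T), \qquad (I - \Lstar \otimes_s \Lstar)\svec(P_\infty) = \svec(\sigma_w^2 I_n),
\end{align*}
where the second is simply the Lyapunov equation recast in symmetric Kronecker form. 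Invertibility of $A_\infty$ will drop out of this factorization, and the lower bound \eqref{eq:lstd_cov_lb} will be obtained by discarding manifestly PSD pieces of $B_\infty$ and sandwiching with $A_\infty^{-1}$.

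\textbf{Computation of $A_\infty$.} First I would write $A_\infty = \E[\phi(x)\phi(x)^\T] - \E[\phi(x)\phi(x')^\T]$. Using Isserlis on the Gaussian $x$, the first term equals $2(P_\infty \otimes_s P_\infty) + \svec(P_\infty)\svec(P_\infty)^\T$. For the second, condition on $x$ to get $\E[\phi(x')|x] = \svec(\Lstar xx^\T \Lstar^\T + \sigma_w^2 I_n)$, hence
\begin{align*}
\E[\phi(x)\phi(x')^\T] = \E[\phi(x)\phi(x)^\T]\,(\Lstar^\T \otimes_s \Lstar^\T) + \svec(P_\infty)\svec(\sigma_w^2 I_n)^\T.
\end{align*}
Subtracting and applying the Lyapunov identity to the rank-one contribution causes the $\svec(P_\infty)\svec(P_\infty)^\T$ terms to cancel, leaving a clean multiple of $(P_\infty \otimes_s P_\infty)(I - \Lstar^\T \otimes_s \Lstar^\T)$. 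This matches the stated formula and proves invertibility, since $P_\infty \succ 0$ gives $P_\infty \otimes_s P_\infty \succ 0$, and stability of $\Lstar$ forces the eigenvalues of $\Lstar^\T \otimes_s \Lstar^\T$ to be products $\lambda_i(\Lstar)\lambda_j(\Lstar)$ with modulus less than $1$. In particular $A_\infty^{-1}$ factors as $(I - \Lstar^\T \otimes_s \Lstar^\T)^{-1}(P_\infty^{-1} \otimes_s P_\infty^{-1})$ up to a scalar, which is the key structural fact I will use for the lower bound.

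\textbf{Computation of $B_\infty$.} Substituting $x' = \Lstar x + w$ into $(\phi(x') - \psi(x))^\T w_\star = \ip{x'x'^\T - \Lstar xx^\T \Lstar^\T - \sigma_w^2 I_n}{P_\star}$ yields the explicit ``Bellman residual''
\begin{align*}
(\phi(x') - \psi(x))^\T w_\star = 2 w^\T P_\star \Lstar x + (w^\T P_\star w - \sigma_w^2 \Tr(P_\star)).
\end{align*}
Squaring this and taking expectation against $\phi(x)\phi(x)^\T$, the cross term vanishes by $w \perp x$ and the fact that any odd moment of the centered Gaussian $w$ is zero. The remaining two pieces are handled separately: integrating $w$ out of $4(w^\T P_\star \Lstar x)^2$ gives $4\sigma_w^2\, x^\T \Lstar^\T P_\star^2 \Lstar x$, after which a sixth-moment Isserlis calculation over $x$ produces the two families of terms $(P_\infty \Lstar^\T P_\star^2 \Lstar P_\infty \otimes_s P_\infty)$ and $\ip{P_\infty}{\Lstar^\T P_\star^2 \Lstar}(2(P_\infty \otimes_s P_\infty) + \svec(P_\infty)\svec(P_\infty)^\T)$ together with the rank-one ``mixed'' terms. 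The second piece uses $\Var[w^\T P_\star w] = 2\sigma_w^4 \norm{P_\star}_F^2$ and the already-computed fourth moment of $x$. Summing gives exactly the claimed expression.

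\textbf{Lower bound on $A_\infty^{-1}B_\infty A_\infty^{-\T}$ and the main obstacle.} Having obtained the explicit formula, I would lower bound $B_\infty$ by retaining only two PSD contributions: the scalar-times-$2(P_\infty \otimes_s P_\infty)$ part of the $\sigma_w^2 \ip{P_\infty}{\Lstar^\T P_\star^2 \Lstar}$ term, and the $8\sigma_w^2(P_\infty \Lstar^\T P_\star^2 \Lstar P_\infty \otimes_s P_\infty)$ term, dropping all the remaining rank-one and cross contributions which are manifestly PSD. Sandwiching with $A_\infty^{-1}$ using the factorization above, the $(P_\infty^{-1} \otimes_s P_\infty^{-1})$ factors cancel the outer $P_\infty$'s in each surviving term via $(P_\infty^{-1} \otimes_s P_\infty^{-1})(P_\infty M P_\infty \otimes_s P_\infty) = M \otimes_s P_\infty^{-1}$-type simplifications from the symmetric Kronecker product formula, producing the two summands of \eqref{eq:lstd_cov_lb}. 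The main obstacle is the bookkeeping for the sixth-moment expansion in the $B_\infty$ step, since Isserlis applied to $x^\T \Lstar^\T P_\star^2 \Lstar x \cdot \svec(xx^\T)\svec(xx^\T)^\T$ generates many pairings that must be grouped into $\otimes_s$ and $\svec(\cdot)\svec(\cdot)^\T$ blocks; once that catalog is in hand, the rest is mechanical application of symmetric Kronecker identities.
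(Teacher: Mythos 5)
Your computations of $A_\infty$ and $B_\infty$ follow essentially the same route as the paper: for $A_\infty$, condition on $x$, apply the fourth-moment Gaussian formula, and use the Lyapunov equation to cancel the rank-one pieces, yielding the factorization $A_\infty = (P_\infty \otimes_s P_\infty)(I - \Lstar^\T \otimes_s \Lstar^\T)$ and hence invertibility; for $B_\infty$, decompose the Bellman residual as $2 w^\T P_\star \Lstar x + (w^\T P_\star w - \sigma_w^2 \Tr(P_\star))$, kill the cross term by independence and odd moments, and run a sixth-moment computation on the remaining quadratic-form product. Up to that point your plan is sound and matches the paper.

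The gap is in the final step. You propose to lower bound $B_\infty$ by ``dropping all the remaining rank-one and cross contributions which are manifestly PSD.'' The cross contribution
$2\sigma_w^2\bigl(\svec(P_\infty)\svec(Q)^\T + \svec(Q)\svec(P_\infty)^\T\bigr)$ with $Q = P_\infty \Lstar^\T P_\star^2 \Lstar P_\infty$ is \emph{not} positive semidefinite: a symmetrized outer product $uv^\T + vu^\T$ has nonzero eigenvalues $\ip{u}{v} \pm \norm{u}_2\norm{v}_2$, one of which is negative unless $u$ and $v$ are parallel. So it cannot simply be discarded from a semidefinite lower bound. The paper handles exactly this point with a dedicated lemma (its Lemma~\ref{lem:sixth_moment_lb}): in the sixth-moment expansion $\E[g^\T A_1 g\,(g^\T A_2 g)^2] = \Tr(A_1)\Tr(A_2)^2 + 2\Tr(A_1)\Tr(A_2^2) + 4\Tr(A_2)\Tr(A_1 A_2) + 8\Tr(A_1 A_2^2)$, the indefinite term $4\Tr(A_2)\Tr(A_1A_2)$ is absorbed via Young's inequality and Cauchy--Schwarz into the PSD terms $\Tr(A_1)\Tr(A_2)^2$ and \emph{half} of $8\Tr(A_1A_2^2)$, which is precisely why the final bound \eqref{eq:lstd_cov_lb} carries the coefficients $8$ and $16$ (rather than the coefficients your ``keep the two PSD pieces'' plan would produce, e.g.\ $2\sigma_w^2\ip{P_\infty}{\Lstar^\T P_\star^2\Lstar}$ on the first term if read off the stated $B_\infty$). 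You need this absorption argument, or an equivalent one, before the sandwiching with $A_\infty^{-1}$ — which, to your credit, you do simplify correctly using the symmetric Kronecker product identities.
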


Next, we state a standard lemma which we will use to convert convergence in distribution
guarantees to guarantees regarding the convergence of risk.
\begin{lem}
\label{lem:lp_convergence}
Suppose that $\{X_n\}$ is a sequence of random vectors and $X_n \distconv X$.
Suppose that $f$ is a non-negative continuous real-valued function such that $\E[f(X)] < \infty$.
We have that:
\begin{align*}
    \liminf_{n \to \infty} \E[ f(X_n) ] \geq \E[ f(X) ] \:.
\end{align*}
If additionally we have $\sup_{n \geq 1} \E[ f(X_n)^{1+\varepsilon} ] < \infty$ holds for some $\varepsilon > 0$, then
the limit $\lim_{n \to \infty} \E[ f(X_n) ]$ exists and
\begin{align*}
    \lim_{n \to \infty} \E[ f(X_n) ] = \E[ f(X) ] \:.
\end{align*}
\end{lem}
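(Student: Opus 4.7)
The plan is to derive both claims from three classical tools of measure-theoretic probability: Skorokhod's representation theorem, Fatou's lemma, and the equivalence between uniform integrability and $L^1$ convergence (Vitali's convergence theorem). Since the $X_n$ and $X$ take values in a Euclidean space, which is Polish, these tools all apply without any technical overhead.

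For the first assertion, I would invoke Skorokhod's representation theorem to construct, on a common probability space, random vectors $\widetilde{X}_n$ and $\widetilde{X}$ with the same marginal distributions as $X_n$ and $X$ respectively and satisfying $\widetilde{X}_n \to \widetilde{X}$ almost surely. Continuity of $f$ then yields $f(\widetilde{X}_n) \to f(\widetilde{X})$ almost surely, and since $f \geq 0$, Fatou's lemma gives $\liminf_{n \to \infty} \E[f(\widetilde{X}_n)] \geq \E[f(\widetilde{X})]$. Translating back through the equality of marginals (expectations depend only on the marginal law) yields $\liminf_n \E[f(X_n)] \geq \E[f(X)]$, which is the first claim.

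For the second claim, the additional moment hypothesis $\sup_n \E[f(X_n)^{1+\varepsilon}] < \infty$ implies uniform integrability of the family $\{f(X_n)\}$. Indeed, for any $M > 0$, H\"older's inequality gives
\[ \E[f(X_n) \ind\{f(X_n) \geq M\}] \leq \E[f(X_n)^{1+\varepsilon}]^{1/(1+\varepsilon)} \Pr[f(X_n) \geq M]^{\varepsilon/(1+\varepsilon)} \:, \]
and Markov's inequality bounds the probability by $M^{-(1+\varepsilon)} \E[f(X_n)^{1+\varepsilon}]$, so the right-hand side tends to zero uniformly in $n$ as $M \to \infty$. This is the standard de la Vall\'ee Poussin criterion applied to the test function $x \mapsto x^{1+\varepsilon}$. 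By the continuous mapping theorem, $f(X_n) \distconv f(X)$; combining this with uniform integrability and Vitali's convergence theorem (or equivalently, applying Skorokhod coupling again and invoking the Vitali--Lebesgue theorem on the coupled space) yields $\lim_n \E[f(X_n)] = \E[f(X)]$.

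This result is entirely standard, so I do not anticipate any real obstacle; the only point requiring mild care is verifying the applicability of Skorokhod's theorem, which is immediate because the ambient space is Euclidean, and the bookkeeping for the uniform integrability deduction from an $L^{1+\varepsilon}$ bound, which is a textbook calculation.
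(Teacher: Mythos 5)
Your proof is correct and is exactly the standard argument the paper points to: the paper gives no details for this lemma, deferring entirely to Billingsley, and your Skorokhod--Fatou argument for the $\liminf$ bound together with the $L^{1+\varepsilon}$-implies-uniform-integrability step (plus continuous mapping and Vitali) is precisely what that citation contains. No gaps.
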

\begin{proof}
Both facts are standard consequences of weak convergence of
probability measures; see e.g. Chapter 5 of~\citet{billingsley95} for more details.
\end{proof}

The next claim uniformly controls the $p$-th moments of the regularized least-squares estimate
when $T$ is large enough. This technical result will allow us to invoke Lemma~\ref{lem:lp_convergence}
to obtain convergence in $L^p$.
\begin{lem}
\label{lem:moment_control_ls}
Let $x_{t+1} = \Lstar x_t + w_t$ with $w_t \sim \calN(0, \sigma_w^2 I_n)$ and $\Lstar$ stable.
Fix a regularization parameter $\lambda > 0$ and let $\Lh(T)$ denote the LS estimator:
\begin{align*}
    \Lh(T) = \arg\min_{L \in \R^{n \times n}} \frac{1}{2} \sum_{t=0}^{T-1} \norm{x_{t+1} - L x_t}^2_2 + \frac{\lambda}{2} \norm{L}_F^2 \:.
\end{align*}
Fix a finite $p \geq 1$.
Let $C_{\Lstar,\lambda,n}$ and $C_{\Lstar, \lambda, n, p}$ denote constants that depend only on $\Lstar, \lambda, n$ (resp. $\Lstar, \lambda, n, p$)
  and not on $T, \delta$. Fix a $\delta \in (0, 1)$. With probability at least $1-\delta$, as long as $T \geq C_{\Lstar,\lambda, n}\log(1/\delta)$ we have:
\begin{align*}
  \norm{ \Lh(T) - \Lstar } \leq C'_{\Lstar,\lambda,n} \sqrt{\frac{\log(1/\delta)}{T}} \:.
\end{align*}
Furthermore, as long as $T \geq C_{\Lstar,\lambda,n,p}$, then:
\begin{align*}
    \E[ \norm{\Lh(T) - \Lstar}^{p} ] \leq C'_{\Lstar,\lambda,n,p} \frac{1}{T^{p/2}} \:.
\end{align*}
\end{lem}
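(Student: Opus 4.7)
The plan is to handle the two assertions separately: first derive the high-probability spectral-norm bound via the standard decomposition of the regularized least-squares estimator, then upgrade it to the $L^p$ moment bound via a tail-integration argument with a truncation at a fixed threshold.

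\textbf{High-probability bound.} Starting from the normal equations for the regularized LS problem, I would write
\begin{align*}
\Lh(T) - \Lstar = \left( \sum_{t=0}^{T-1} w_t x_t^\T \right) \Big( \sum_{t=0}^{T-1} x_t x_t^\T + \lambda I_n \Big)^{-1} - \lambda\, \Lstar \Big( \sum_{t=0}^{T-1} x_t x_t^\T + \lambda I_n \Big)^{-1}.
\end{align*}
The second summand contributes at most $\lambda \norm{\Lstar}/\lambda_{\min}\bigl(\sum_t x_t x_t^\T + \lambda I\bigr)$ and is dominated once the Gram matrix is of order $T$. For the Gram matrix, I would use a standard concentration argument for geometrically ergodic Markov chains (or the small-ball / block argument of Simchowitz et al.\ cited in the excerpt) to show $\sum_{t=0}^{T-1} x_t x_t^\T \succeq \tfrac{T}{2} P_\infty$ with probability $1-\delta$ as soon as $T \gtrsim \log(1/\delta)$, where the hidden constant depends on $\Lstar$ and $n$. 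For the numerator I would apply a self-normalized martingale tail bound (e.g., Abbasi-Yadkori--P\'al--Szepesv\'ari) to the matrix martingale $\sum_t w_t x_t^\T$ to obtain $\norm{\sum_t w_t x_t^\T} \lesssim \sqrt{T \log(1/\delta)}$ under the same event. Combining yields $\norm{\Lh(T) - \Lstar} \leq C'_{\Lstar,\lambda,n}\sqrt{\log(1/\delta)/T}$.

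\textbf{Moment bound.} To turn this into $\E[\norm{\Lh(T) - \Lstar}^p] \lesssim T^{-p/2}$, let $X := \norm{\Lh(T) - \Lstar}$ and write
\begin{align*}
\E[X^p] = p\int_0^{t^\star} t^{p-1}\Pr[X > t]\, dt + \E\bigl[X^p \ind\{X > t^\star\}\bigr],
\end{align*}
choosing the threshold $t^\star$ to be the largest value for which the high-probability bound is informative. Re-parameterizing $\delta$ as $\delta = \exp(-Tt^2/(C')^2)$, the first part admits the sub-Gaussian tail $\Pr[X > t] \leq \exp(-Tt^2/(C')^2)$ on $[0,t^\star]$, which integrates to $O_p(T^{-p/2})$ by the usual Gaussian-moment computation. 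For the tail, I would use Cauchy--Schwarz:
\begin{align*}
\E\bigl[X^p \ind\{X > t^\star\}\bigr] \leq \sqrt{\E[X^{2p}]} \cdot \sqrt{\Pr[X > t^\star]}.
\end{align*}
The probability factor is at most $\exp(-T/C)$ by the high-probability bound at $\delta = e^{-T/C}$, while the crude bound $\norm{\Lh(T)} \leq \norm{\sum_t x_{t+1} x_t^\T}/\lambda$ together with the fact that all polynomial moments of $x_t$ are bounded uniformly in $t$ (by stability of $\Lstar$) gives $\E[X^{2p}] \leq \poly(T)$ for every finite $p$. Hence the tail contribution decays exponentially in $T$ and, once $T \geq C_{\Lstar,\lambda,n,p}$, is dominated by $T^{-p/2}$.

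\textbf{Main obstacle.} The delicate step is the extreme-tail control: the high-probability bound is only informative for $\delta \geq e^{-T/C}$, so one must argue that the outcomes where it fails contribute negligibly to the $p$-th moment. The regularization $\lambda > 0$ is essential here, since it lets us bound $\norm{\Lh(T)}$ by a polynomial in the trajectory with no risk of the empirical Gram matrix being nearly singular, and the geometric ergodicity of $\{x_t\}$ under stable $\Lstar$ provides uniform-in-$T$ control on all moments of $\norm{x_t}$. Together these ensure that Cauchy--Schwarz (or H\"older with a sufficiently high power) closes the loop, yielding the desired $T^{-p/2}$ rate.
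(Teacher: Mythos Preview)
Your proposal is correct and follows essentially the same route as the paper: the same error decomposition for $\Lh(T)-\Lstar$, the same invocation of the Simchowitz et al.\ single-trajectory result for the high-probability bound, and the same ``crude polynomial moment $+$ Cauchy--Schwarz against the bad event'' mechanism for the tail, exploiting the regularizer $\lambda>0$ exactly as you describe.

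The only notable difference is bookkeeping in the conversion from the high-probability bound to the $L^p$ bound. You integrate the sub-Gaussian tail $\Pr[X>t]\le\exp(-Tt^2/(C')^2)$ over $t\in[0,t^\star]$ and then handle $\{X>t^\star\}$ separately. The paper does something slightly cruder and simpler: it picks a \emph{single} $\delta=O(T^{-3p})$, splits $\E[X^p]=\E[X^p\ind_{\calE}]+\E[X^p\ind_{\calE^c}]$ on the good event $\calE$, and on $\calE$ uses the deterministic bound $X\le C'\sqrt{\log(1/\delta)/T}$ directly (so $\E[X^p\ind_\calE]\le (C')^p(\log(1/\delta)/T)^{p/2}=O((\log T)^{p/2}/T^{p/2})$). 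This picks up a harmless $(\log T)^{p/2}$ factor you avoid, but since the statement only asks for a constant depending on $p$ and $T\ge C_{\Lstar,\lambda,n,p}$, either method suffices. Your tail-integration version is the sharper and more textbook way to do it; the paper's single-$\delta$ split is quicker to write down.
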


The next result is
the analogue of Lemma~\ref{lem:moment_control_ls}
for the non-autonomous system driven by white noise.
\begin{lem}
\label{lem:moment_control_driven_ls}
Let $x_{t+1} = \Astar x_t + \Bstar u_t + w_t$ with $w_t \sim \calN(0, \sigma_w^2 I_n)$,
$u_t \sim \calN(0, \sigma_u^2 I_d)$, and $\Astar$ stable.
Fix a regularization parameter $\lambda > 0$ and let $\Thetah(N)$ denote the LS estimator:
\begin{align*}
  \Thetah(N) = \arg\min_{(A, B) \in \R^{n \times (n + d)}} \frac{1}{2} \sum_{i=1}^{N} \sum_{t=0}^{T-1} \norm{x_{t+1}^{(i)} - A x_t^{(i)} - B u_t^{(i)}}^2_2 + \frac{\lambda}{2} \norm{\rvectwo{A}{B}}_F^2 \:.
\end{align*}
Fix a finite $p \geq 1$.
Let $C_{\Theta_\star, T, \lambda, n, d}$ and $C_{\Theta_\star, T, \lambda, n, d, p}$ denote constants that depend only on $\Theta_\star, T, \lambda, n, d$ (resp. $\Theta_\star, T, \lambda, n, d, p$)
and not on $N, \delta$. Fix a $\delta \in (0, 1)$. With probability at least $1-\delta$, as long as $N \geq C_{\Theta_\star,T,\lambda, n, d}\log(1/\delta)$ we have:
\begin{align*}
    \norm{ \Thetah(N) - \Theta_\star } \leq C'_{\Theta_\star,T,\lambda,n,d} \sqrt{\frac{\log(1/\delta)}{N}} \:.
\end{align*}
Furthermore, as long as $N \geq C_{\Theta_\star,T,\lambda,n,d,p}$, then:
\begin{align*}
    \E[ \norm{\Thetah(N) - \Theta_\star}^{p} ] \leq C'_{\Theta_\star,T,\lambda,n,d,p} \frac{1}{N^{p/2}} \:.
\end{align*}
\end{lem}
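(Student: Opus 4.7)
The plan is to mirror the proof of Lemma~\ref{lem:moment_control_ls}, exploiting the fact that, because $T$ is fixed and the $N$ trajectories are independent, the estimator is built from $N$ i.i.d.\ samples of a finite-dimensional Gaussian system, so no Markov-chain ergodicity argument is needed. First I would write the closed-form error. Substituting $x_{t+1}^{(i)} = \Theta_\star z_t^{(i)} + w_t^{(i)}$ into the normal equation gives
\[
\Thetah(N) - \Theta_\star = (\hat M - \lambda \Theta_\star)(\hat S + \lambda I)^{-1},
\]
where $\hat S := \sum_{i,t} z_t^{(i)}(z_t^{(i)})^\T$ and $\hat M := \sum_{i,t} w_t^{(i)}(z_t^{(i)})^\T$, and hence
\[
\norm{\Thetah(N) - \Theta_\star} \leq \frac{\norm{\hat M} + \lambda \norm{\Theta_\star}}{\sigma_{\min}(\hat S) + \lambda}.
\]

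Second, I would establish high-probability concentration for both the numerator and the denominator. Define $\Sigma_{\star, T} := \sum_{t=0}^{T-1} \E[z_t (z_t)^\T]$; under the stability hypothesis and because $\sigma_w^2, \sigma_u^2 > 0$, this matrix is strictly positive definite (for $T \geq 2$), with a lower bound on its smallest eigenvalue depending only on $\Theta_\star, T, n, d$. Since each trajectory is a jointly Gaussian vector of bounded dimension (starting from $x_0 = 0$), both $\hat S / N$ and $\hat M / N$ are empirical means of $N$ i.i.d.\ matrices whose entries are sub-exponential quadratic forms in Gaussians. A matrix Bernstein inequality (or Hanson-Wright applied entrywise with a union bound) then yields, with probability at least $1-\delta$ and for $N \geq C_{\Theta_\star,T,\lambda,n,d}\log(1/\delta)$, the two estimates $\norm{\hat S/N - \Sigma_{\star, T}} \leq c_1 \sqrt{\log(1/\delta)/N}$ and $\norm{\hat M/N} \leq c_2 \sqrt{\log(1/\delta)/N}$. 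On this event $\sigma_{\min}(\hat S) \geq (N/2)\sigma_{\min}(\Sigma_{\star, T})$, and plugging into the display above gives the claimed bound $\norm{\Thetah(N) - \Theta_\star} \leq C'_{\Theta_\star,T,\lambda,n,d} \sqrt{\log(1/\delta)/N}$.

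The main obstacle is the moment bound, which requires both removing the $\log$ factor on the good event and controlling the bad-event contribution. My plan is tail integration: $\E[\norm{\Thetah(N) - \Theta_\star}^p] = p\int_0^\infty t^{p-1} \Pr[\norm{\Thetah(N) - \Theta_\star} > t]\,dt$. The sub-Gaussian tail from Step~2, valid for $t$ up to a fixed $t_\star$, contributes $O(N^{-p/2})$ by a standard Gaussian moment computation (with no spurious $\log$). For $t \geq t_\star$, I would use the deterministic a priori bound
\[
\norm{\Thetah(N)}_F^2 \leq \lambda^{-1} \sum_{i,t} \norm{x_{t+1}^{(i)}}^2,
\]
obtained by comparing the regularized objective at the estimator to its value at the origin; the right-hand side is a degree-two polynomial in a jointly Gaussian vector and therefore has all moments bounded polynomially in $N$. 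Combined with the super-polynomial decay of $\Pr[\norm{\Thetah(N) - \Theta_\star} > t]$ (by re-running the Step~2 concentration at $\delta = N^{-q}$ for $q = q(p)$ large enough), this large-$t$ contribution is $o(N^{-p/2})$ and is absorbed into the leading term. Piecing the two regimes together yields $\E[\norm{\Thetah(N) - \Theta_\star}^p] \leq C''_{\Theta_\star,T,\lambda,n,d,p}/N^{p/2}$, as required. The concentration arguments themselves are routine; the delicate point is the interplay between the high-probability bound and the regularization-induced a priori bound that prevents pathologically large realizations of the estimator from inflating the moments.
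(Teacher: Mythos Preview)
Your proposal is correct and follows the same overall strategy as the paper: write the closed-form regularized least-squares error, establish a high-probability concentration bound exploiting the i.i.d.\ structure across rollouts, and then upgrade to a $p$-th moment bound by splitting into a good event (where concentration holds) and a bad event (handled via a crude a~priori bound that leverages the regularization $\lambda>0$, combined with the super-polynomially small probability of the bad event).

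The differences are in the choice of tools rather than in the architecture. For concentration the paper invokes Proposition~1.1 of \citet{dean17} directly, whereas you rederive it from matrix Bernstein/Hanson--Wright; both are valid since the trajectories are i.i.d.\ finite-dimensional Gaussians. For the a~priori bound on the bad event, the paper (mirroring Lemma~\ref{lem:moment_control_ls}) uses $\norm{\Thetah-\Theta_\star}\leq \norm{\Theta_\star}+\lambda^{-1}\norm{\sum_{i,t}w_t^{(i)}(z_t^{(i)})^\T}$ and then Cauchy--Schwarz, while your bound $\norm{\Thetah}_F^2\leq \lambda^{-1}\sum_{i,t}\norm{x_{t+1}^{(i)}}^2$ from comparing the objective at $\Thetah$ and at $0$ is a cleaner device that yields the same polynomial-in-$N$ control on $\E[\norm{\Thetah-\Theta_\star}^{2p}]$. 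Finally, your tail-integration packaging of the moment bound (integrating the sub-Gaussian tail up to a fixed $t_\star$ and using Cauchy--Schwarz beyond it) neatly avoids the $\log N$ factor that the paper's event-splitting with a single $\delta=N^{-q}$ nominally incurs; this is a cosmetic improvement, since either bound suffices for the downstream uniform-integrability application.
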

\begin{proof}
The proof is nearly identical to that of Lemma~\ref{lem:moment_control_ls},
except we use the concentration result of Proposition 1.1 from \citet{dean17}
instead of Theorem 2.4 of \citet{simchowitz18}
to establish concentration over multiple independent rollouts.
We omit the details as they very closely mimic that of Lemma~\ref{lem:moment_control_ls}.

We note that in doing this we obtain a sub-optimal dependence on the horizon
length $T$. This can be remedied by a more careful argument combining
the concentration along each trajectory from Simchowitz et al. with the
concentration across independent trajectories from Dean et al.
However, as in our limit theorems only $N$ the rollout length is
being sent to infinity (e.g. $T$ is considered a constant), a sub-optimal bound in $T$ will suffice
for our purpose.
\end{proof}

Our final asymptotic result deals with the performance of stochastic gradient descent (SGD)
with projection. This will be our key ingredient in analyzing policy gradient (Algorithm~\ref{alg:model_free_policy_opt}).
While the asymptotic performance of SGD (and more generally stochastic approximation)
is well-established (see e.g.~\citet{kushner03}),
we consider a slight modification where the iterates are projected back into a compact convex set
at every iteration.
As long as the optimal solution is not on the boundary of the projection set, then
one intuitively does not expect the asymptotic distribution to be affected by this projection,
since eventually as SGD converges towards the optimal solution the projection step will effectively be inactive.
Our result here makes this intuition rigorous. It follows by combining
the asymptotic analysis of \citet{toulis17}
with the high probability bounds for SGD from \citet{rakhlin12}.

To state the result, we need a few definitions.
First, we say a differentiable function $F : \R^d \rightarrow \R$ satisfies
\emph{restricted strong convexity} (RSC) on a compact convex set $\Theta \subseteq \R^d$
if it has a unique minimizer $\theta_\star \in \mathrm{int}(\Theta)$ and for some $m > 0$,
we have
$\ip{\nabla F(\theta)}{\theta - \theta_\star} \geq m \norm{\theta - \theta_\star}_2^2$
for all $\theta \in \Theta$.
We denote this by $\mathsf{RSC}(m, \Theta)$.

\begin{lem}
\label{lem:sgd_asymptotics}
Let $F \in \calC^3(\Theta)$
and suppose $F$ satisfies $\mathsf{RSC}(m, \Theta)$.
Let $\theta_\star \in \Theta$ denote the unique minimizer of $F$ in $\Theta$.
Suppose we have a stochastic
gradient oracle $g(\theta; \xi)$ such that $g$ is continuous in both $\theta,\xi$ and
$\nabla F(\theta) = \E_{\xi}[g(\theta; \xi)]$ for some distribution over $\xi$.
Suppose that for some $G_1, G_2, L > 0$, for all $p \in [1, 4]$ and $\delta \in (0, 1)$, we have that
\begin{align}
    \sup_{\theta \in \Theta} \E_{\xi}[ \norm{g(\theta; \xi)}_2^p ] &\leq G_1^p \:, \label{eq:sgd_grad_momoment_bound} \\
    \Pr_{\xi}\left(\sup_{\theta \in \Theta} \norm{ g(\theta; \xi) }_2 > G_2 \polylog(1/\delta)\right) &\leq \delta \:, \label{eq:sgd_grad_conc} \\
    \E_{\xi}[ \norm{g(\theta; \xi) - g(\theta_\star; \xi)}_2^2 ] &\leq L \norm{\theta - \theta_\star}_2^2 \:\: \forall \theta \in \Theta \label{eq:sgd_grad_lip} \:.
\end{align}
Given an sequence $\{\xi_t\}_{t=1}^{\infty}$ drawn i.i.d.\ from the law of $\xi$, consider the sequence of iterates $\{\theta_t\}_{t=1}^{\infty}$
starting with $\theta_1 \in \Theta$ and defined as:
\begin{align*}
    \theta_{t+1} = \mathsf{Proj}_{\Theta}( \theta_t - \alpha_t g(\theta_t; \xi_t) ) \:, \:\: \alpha_t = \frac{1}{m t} \:.
\end{align*}
We have that:
\begin{align}
    \lim_{T \to \infty} mT \cdot \Var(\theta_T) = \Xi \:, \label{eq:sgd_asymptotic_var}
\end{align}
where $\Xi = \lyap(\frac{m}{2}I_d - \nabla^2 F(\theta_\star), \E_{\xi}[g(\theta_\star;\xi) g(\theta_\star;\xi)^\T] )$ solves the continuous-time Lyapunov equation:
\begin{align}
    \left(\frac{m}{2} I_d - \nabla^2 F(\theta_\star) \right) \Xi + \Xi \left(\frac{m}{2} I_d - \nabla^2 F(\theta_\star) \right) + \E_{\xi}[ g(\theta_\star; \xi)g(\theta_\star; \xi)^\T ] = 0 \:. \label{eq:cts_lyap}
\end{align}
We also have that for any $G \in \calC^3(\Theta)$
with $\nabla G(\theta_\star) = 0$ and $\nabla^2 G(\theta_\star) \succ 0$,
\begin{align}
    \liminf_{T \to \infty} T \cdot \E[G(\theta_T) - G(\theta_\star)] \geq \frac{1}{2m} \Tr( \nabla^2 G(\theta_\star) \cdot \Xi ) \:. \label{eq:sgd_comparison_bound}
\end{align}
\end{lem}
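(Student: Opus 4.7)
The plan is to reduce the projected SGD analysis to the unconstrained case by exploiting the fact that $\theta_\star$ lies in the interior of $\Theta$, and then to invoke the standard asymptotic theory of SGD on locally strongly convex objectives (\`a la Toulis-Airoldi) combined with the high probability bounds of Rakhlin et al.\ to upgrade convergence in distribution to the stated convergence of moments.

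First, I would use \textbf{restricted strong convexity} together with the subexponential gradient tail bound~\eqref{eq:sgd_grad_conc} and the second moment bound~\eqref{eq:sgd_grad_momoment_bound} to run the Rakhlin--Shamir--Sridharan argument on the projected iterates with step sizes $\alpha_t = 1/(mt)$. This yields a high probability bound of the form $\norm{\theta_T - \theta_\star}_2 \leq C \polylog(T/\delta)/\sqrt{T}$ with probability $\geq 1-\delta$. Integrating this tail bound produces the moment estimate $\E[ \norm{\theta_T - \theta_\star}_2^p ] \leq C_p / T^{p/2}$ for any fixed $p \in [1, 4]$, which will serve as our source of uniform integrability later. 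Since $\theta_\star \in \mathrm{int}(\Theta)$, this concentration implies that for some (random but a.s.\ finite) time $\tau$, for all $t \geq \tau$ the unprojected update $\theta_t - \alpha_t g(\theta_t;\xi_t)$ already lies in $\Theta$, so the projection is inactive eventually.

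Second, on the event that projection is inactive past time $\tau$, the iterates obey the unconstrained recursion $\theta_{t+1} = \theta_t - \alpha_t g(\theta_t;\xi_t)$. Invoking the Toulis--Airoldi asymptotic analysis for unconstrained SGD on $\calC^3$ objectives that are locally strongly convex around $\theta_\star$, using the gradient Lipschitz-in-$L^2$ assumption~\eqref{eq:sgd_grad_lip} to control the noise covariance continuity at $\theta_\star$, we obtain the CLT
\begin{align*}
\sqrt{mT}\,(\theta_T - \theta_\star) \distconv \calN(0, \Xi) \:,
\end{align*}
where the limiting covariance $\Xi$ is the unique solution of the continuous-time Lyapunov equation~\eqref{eq:cts_lyap}; this equation arises from balancing the $1/t$ step-size decay against the linearized drift $\nabla^2 F(\theta_\star)$ and noise covariance $\E[g(\theta_\star;\xi)g(\theta_\star;\xi)^\T]$. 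The projection does not affect the limit because the event $\{\tau > t\}$ has probability tending to zero fast enough (by the high probability bound above) that the projected and unprojected processes agree asymptotically. Combining this CLT with the uniform integrability of $mT\norm{\theta_T - \theta_\star}_2^2$ coming from the $p = 4$ moment bound and applying Lemma~\ref{lem:lp_convergence} to the function $f(x) = m \norm{x}_2^2$ yields the variance convergence~\eqref{eq:sgd_asymptotic_var}.

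Finally, for the comparison bound~\eqref{eq:sgd_comparison_bound}, I would apply a second-order Taylor expansion of $G$ at $\theta_\star$, using $\nabla G(\theta_\star) = 0$:
\begin{align*}
G(\theta_T) - G(\theta_\star) = \tfrac{1}{2} (\theta_T - \theta_\star)^\T \nabla^2 G(\theta_\star) (\theta_T - \theta_\star) + R_T \:,
\end{align*}
where the remainder satisfies $|R_T| \leq C \norm{\theta_T - \theta_\star}_2^3$. Rescaling by $T$, the quadratic term converges in distribution to $\tfrac{1}{2m} Z^\T \nabla^2 G(\theta_\star) Z$ with $Z \sim \calN(0,\Xi)$, whose expectation is $\tfrac{1}{2m} \Tr(\nabla^2 G(\theta_\star) \Xi)$. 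Applying the $\liminf$ half of Lemma~\ref{lem:lp_convergence} to the continuous non-negative function $x \mapsto \tfrac{1}{2} x^\T \nabla^2 G(\theta_\star) x$ (which does not require uniform integrability, only $\nabla^2 G(\theta_\star) \succ 0$) gives the desired lower bound; the remainder contributes $O(T \cdot \E[\norm{\theta_T - \theta_\star}_2^3]) = O(T^{-1/2}) = o(1)$ by the moment bound of the first step and therefore does not affect the liminf.

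The main obstacle is rigorously decoupling the projection step from the asymptotic distribution: one must verify that the Toulis--Airoldi style fixed point equation for the asymptotic covariance is not perturbed by the (eventually inactive) projection, which requires combining the a.s.\ eventual-inactivity of the projection with enough moment control to show that the contribution from atypical sample paths (where projection is active beyond any given time $t$) is negligible in both the CLT and the $L^2$ moment convergence.
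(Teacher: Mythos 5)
Your proposal is correct in outline and shares the paper's skeleton (the Rakhlin--Shamir--Sridharan high-probability bound to neutralize the projection, plus Toulis--Airoldi machinery for the asymptotics), but the core of your argument runs through a different technical route. You first establish a CLT $\sqrt{mT}(\theta_T - \theta_\star) \distconv \calN(0,\Xi)$ and then upgrade it to moment convergence via uniform integrability and Lemma~\ref{lem:lp_convergence}. The paper never proves or uses a CLT: it works directly with the exact variance recursion $\Var(\theta_{t+1}) = \Var(\theta_t) - \alpha_t(\Var(\theta_t)\nabla^2 F(\theta_\star) + \nabla^2 F(\theta_\star)\Var(\theta_t)) + \alpha_t^2(\Var(g(\theta_\star;\xi)) + o_t(1))$, obtained by Taylor-expanding $\nabla F$ and using \eqref{eq:sgd_grad_lip}, and then solves this matrix recursion by Corollary C.1 of Toulis et al. The projection is handled not by arguing a.s.\ eventual inactivity but by showing $\Pr(\calE_{t+1}^c) \leq O(\exp(-t^\alpha))$ at each step and absorbing the resulting perturbation into the $\alpha_t^2 o_t(1)$ term; this is cleaner than your ``random a.s.\ finite $\tau$'' formulation, which as stated needs a Borel--Cantelli step and, more importantly, still leaves you to argue that the CLT for the unprojected recursion transfers to the projected one. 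Your route also has two small gaps in recovering \eqref{eq:sgd_asymptotic_var} as stated: Lemma~\ref{lem:lp_convergence} applied to $f(x) = m\norm{x}_2^2$ gives only the scalar limit $mT\,\E[\norm{\theta_T-\theta_\star}_2^2] \to \Tr(\Xi)$, so you need a polarization or entrywise-domination argument to get the matrix second moment, and you must additionally show $\sqrt{T}(\E[\theta_T] - \theta_\star) \to 0$ to pass from the uncentered second moment to $\Var(\theta_T)$ (this does follow from your UI, but it is a step you omit). For the comparison bound your argument and the paper's are essentially equivalent, except that the paper lower-bounds $\E[(\theta_T-\theta_\star)(\theta_T-\theta_\star)^\T] \succeq \Var(\theta_T)$ and plugs in the already-established variance limit, which avoids any appeal to convergence in distribution. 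The trade-off is that your CLT-based approach, if the imported normality result is verified for this projected setting, yields strictly more information (the full limiting distribution); the paper's recursion-based approach requires less and is self-contained modulo the cited matrix-recursion lemma.
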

We defer the proof of this lemma to Section~\ref{sec:proof_sgd_asymptotics} of the Appendix.
We quickly comment on how the last inequality can be used.
Taking trace of both sides from Equation~\ref{eq:cts_lyap}, we obtain:
\begin{align*}
    \Tr(\Xi \cdot (\nabla^2 F(\theta_\star) - \frac{m}{2} I_d)) = \frac{1}{2} \E_{\xi}[ \norm{g(\theta_\star; \xi)}_2^2 ] \:.
\end{align*}
We now upper bound the LHS as:
\begin{align*}
  \Tr( \Xi \cdot (\nabla^2 F(\theta_\star) - \frac{m}{2} I_d) ) &= \Tr( \Xi \cdot \nabla^2 G(\theta_\star)^{1/2} \cdot  \nabla^2 G(\theta_\star)^{-1/2} (\nabla^2 F(\theta_\star) - \frac{m}{2} I_d) \nabla^2 G(\theta_\star)^{-1/2} \cdot \nabla^2 G(\theta_\star)^{1/2}  ) \\
    &\leq \Tr( \Xi \cdot \nabla^2 G(\theta_\star) ) \lambda_{\max}( \nabla^2 G(\theta_\star)^{-1/2}(\nabla^2 F(\theta_\star) - \frac{m}{2} I_d) \nabla^2 G(\theta_\star)^{-1/2} ) \\
    &= \Tr( \Xi \cdot \nabla^2 G(\theta_\star) ) \lambda_{\max}( \nabla^2 G(\theta_\star)^{-1}(\nabla^2 F(\theta_\star) - \frac{m}{2} I_d)) \:.
\end{align*}
Combining the last two equations we obtain that:
\begin{align}
  \liminf_{T \to \infty} T \cdot \E[ G(\theta_T) - G(\theta_\star)] &\geq \frac{1}{2m} \Tr( \Xi \cdot \nabla^2 G(\theta_\star) ) \nonumber \\
  &\geq \frac{1}{4m \lambda_{\max}( \nabla^2 G(\theta_\star)^{-1}( \nabla^2 F(\theta_\star) - \frac{m}{2} I_d)) } \E_\xi[ \norm{g(\theta_\star; \xi)}_2^2 ] \:. \label{eq:cost_lower_bound_sgd}
\end{align}
We will use this last estimate in our analysis.

\section{Analysis of Policy Evalution Methods}

In this section, recall that $Q, R, K$ are fixed, and furthermore define $M := Q + K^\T R K$.

\subsection{Proof of Theorem~\ref{thm:policy_eval_plugin_risk}}
\label{sec:proofs:policy_eval_plugin}

The strategy is as follows. Recall that Lemma~\ref{lem:ls_asymptotic_dist} gives us the asymptotic
distribution of the (regularized) least-squares estimator $\Lh(T)$ of the true closed-loop matrix $\Lstar$.
For a stable matrix $L$, let $P(L) = \dlyap(L, M)$. Since the map $L \mapsto P(L)$ is differentiable,
using the delta method we can recover the asymptotic distribution of $\sqrt{T}\svec(P(\Lh(T)) - P_\star)$.
Upper bounding the trace of the covariance matrix for this asymptotic distribution then yields
Theorem~\ref{thm:policy_eval_plugin_risk}.

Let $[DP(L)]$ denote the Fr{\'e}chet derivative of the map $P(\cdot)$ evaluated at $L$, and let
$[DP(L)](X)$ denote the action of the linear operator $[DP(L)]$ on $X$. By a straightforward application of the
implicit function theorem, we have that:
\begin{align*}
    [DP(\Lstar)](X) = \dlyap(\Lstar, X^\T P_\star \Lstar + \Lstar^\T P_\star X) \:.
\end{align*}

Before we proceed, we introduce some notation surrounding Kronecker products.
Let $\Gamma$ denote the matrix such that $(A \otimes_s B) = \frac{1}{2} \Gamma^\T (A \otimes B + B \otimes A) \Gamma$
for any square matrices $A,B$.
It is a fact that $\Gamma \vec(S) = \svec(S)$ for any symmetric matrix $S$.
Also let $\Pi$ be the orthonormal matrix such that $\Pi\vec(X) = \vec(X^\T)$ for all square matrices $X$.
It is not hard to verify that $\Pi^\T (A \otimes B) \Pi = (B \otimes A)$, a fact we will use later.
With this notation, we proceed as follows:
\begin{align*}
  \svec([DP(\Lstar)](X)) &= (I - \Lstar^\T \otimes_s \Lstar^\T)^{-1} \svec(X^\T P_\star \Lstar + \Lstar^\T P_\star X) \\
  &= (I - \Lstar^\T \otimes_s \Lstar^\T)^{-1} \Gamma \vec(X^\T P_\star \Lstar + \Lstar^\T P_\star X) \\
  &= (I - \Lstar^\T \otimes_s \Lstar^\T)^{-1} \Gamma ((\Lstar^\T P_\star \otimes I_n) \Pi + (I_n \otimes \Lstar^\T P_\star))\vec(X) \:.
\end{align*}
Applying Lemma~\ref{lem:ls_asymptotic_dist} in conjunction with the delta method,
we obtain:
\begin{align*}
  \sqrt{T} \svec(P(\Lh(T)) - P_\star) \distconv \calN(0, \sigma_w^2 (I - \Lstar^\T \otimes_s \Lstar^\T)^{-1} V (I - \Lstar^\T \otimes_s \Lstar^\T)^{-\T}  ) \:,
\end{align*}
where,
\begin{align*}
  V &:= \Gamma [((\Lstar^\T P_\star \otimes I_n) \Pi + (I_n \otimes \Lstar^\T P_\star)) (P_\infty^{-1} \otimes I_n) ((\Lstar^\T P_\star \otimes I_n) \Pi + (I_n \otimes \Lstar^\T P_\star))^\T] \Gamma^\T \\
  &\stackrel{(a)}{\preceq} 2 \Gamma[  (\Lstar^\T P_\star \otimes I_n) \Pi (P_\infty^{-1} \otimes I_n) \Pi^\T (P_\star \Lstar \otimes I_n) + (I_n \otimes \Lstar^\T P_\star)(P_\infty^{-1} \otimes I_n) (I_n \otimes P_\star \Lstar)]\Gamma^\T \\
  &= 2 \Gamma[(\Lstar^\T P_\star \otimes I_n) (I_n \otimes P_\infty^{-1})  (P_\star \Lstar \otimes I_n) + (I_n \otimes \Lstar^\T P_\star)(P_\infty^{-1} \otimes I_n) (I_n \otimes P_\star \Lstar)]\Gamma^\T \\
  &= 2 \Gamma[(\Lstar^\T P_\star^2 \Lstar \otimes P_\infty^{-1}) + (P_\infty^{-1} \otimes \Lstar^\T P_\star^2 \Lstar)]\Gamma^\T \\
  &= 4 (\Lstar^\T P_\star^2 \Lstar \otimes_s P_\infty^{-1}) \:.
\end{align*}
In (a), we used the inequality
for any matrices $X, Y$ and positive definite matrices $F, G$,
(see e.g. Chapter 3, page 94 of \citet{zhang05}):
\begin{align*}
  (X+Y) (F+G)^{-1} (X+Y)^\T \preceq X F^{-1} X^\T + Y G^{-1} Y^\T \:.
\end{align*}
Suppose that the sequence $\{ \norm{Z_T}_F^2 \}$ is uniformly integrable,
where $Z_T := \sqrt{T} \svec(P(\Lh(T)) - P_\star)$.
Then:
\begin{align*}
  \lim_{T \to \infty} T \cdot \E[ \norm{P(\Lh(T)) - P_\star}_F^2 ] \leq 4 \Tr( (I - \Lstar^\T \otimes_s \Lstar^\T)^{-1}(\Lstar^\T P_\star^2 \Lstar \otimes_s \sigma_w^2P_\infty^{-1}) (I - \Lstar^\T \otimes_s \Lstar^\T)^{-\T}) \:,
\end{align*}
which is the desired bound on the asymptotic risk.

We now show that the sequence $\{ \norm{Z_T}_F^2 \}$ is uniformly integrable.
To do this, we need a simple matrix stability perturbation bound.
\begin{lem}
\label{lem:simple_perturbation}
Let $A$ be a stable matrix that satisfies $\norm{A^k} \leq C \rho^k$ for all $k \geq 0$ for some $C > 0$ and
$\rho \in (0, 1)$. Fix a $\gamma \in (\rho, 1)$.
Suppose that $\Delta$ is a perturbation that satisfies:
\begin{align*}
  \norm{\Delta} \leq \frac{\gamma - \rho}{C} \:.
\end{align*}
Then we have that (a) $A+\Delta$ is a stable matrix with $\rho(A+\Delta) \leq \gamma$
and (b) $\norm{(A+\Delta)^k} \leq C \gamma^k$ for all $k \geq 0$.
\end{lem}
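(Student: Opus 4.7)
The plan is to prove both claims simultaneously by establishing the uniform bound $\norm{(A+\Delta)^k} \leq C\gamma^k$ for all $k \geq 0$ via induction on $k$, and then invoking Gelfand's formula to read off the spectral radius bound. Note first that $C \geq 1$ automatically, since $\norm{A^0} = 1 \leq C$.

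The key algebraic identity I would use is
\begin{align*}
  (A+\Delta)^k = A^k + \sum_{j=0}^{k-1} A^j \, \Delta \, (A+\Delta)^{k-1-j} \:,
\end{align*}
which follows by a simple induction or by noting that $(A+\Delta)^k - A^k$ telescopes when you write $(A+\Delta)^{k-j}(A+\Delta) - A \cdot A^{k-j}$ term by term. Taking norms, setting $f(k) := \norm{(A+\Delta)^k}$, and using the hypothesis $\norm{A^j} \leq C\rho^j$ gives the recursive inequality
\begin{align*}
  f(k) \leq C\rho^k + C \norm{\Delta} \sum_{j=0}^{k-1} \rho^j f(k-1-j) \:.
\end{align*}

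Now I would close the induction. Assuming $f(j) \leq C\gamma^j$ for all $j < k$, the right-hand side is bounded by $C\rho^k + C^2 \norm{\Delta} \sum_{j=0}^{k-1} \rho^j \gamma^{k-1-j}$. The geometric sum evaluates in closed form to $(\gamma^k - \rho^k)/(\gamma - \rho)$, so using the standing assumption $\norm{\Delta} \leq (\gamma - \rho)/C$ one obtains $f(k) \leq C\rho^k + C(\gamma^k - \rho^k) = C\gamma^k$, closing the induction and proving (b). Claim (a) is then immediate from Gelfand's formula: $\rho(A+\Delta) = \lim_{k \to \infty} f(k)^{1/k} \leq \lim_{k \to \infty} (C\gamma^k)^{1/k} = \gamma$.

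There is no real obstacle here, since the entire argument is just a careful application of a telescoping identity together with a geometric-series induction; the only moving part is making sure the constant $C^2 \norm{\Delta}$ collapses cleanly to $C(\gamma - \rho)$, which is exactly the purpose of the stated bound on $\norm{\Delta}$.
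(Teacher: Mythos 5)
Your proof is correct, but it takes a different route from the paper's. You use the first-order perturbation identity $(A+\Delta)^k = A^k + \sum_{j=0}^{k-1} A^j \Delta (A+\Delta)^{k-1-j}$ and close a strong induction on $f(k) := \norm{(A+\Delta)^k}$ via the geometric sum $\sum_{j=0}^{k-1}\rho^j\gamma^{k-1-j} = (\gamma^k-\rho^k)/(\gamma-\rho)$; the paper instead expands $(A+\Delta)^k$ fully into all $2^k$ words in $A$ and $\Delta$, bounds each word of $\Delta$-degree $i$ by $C^{i+1}\rho^{k-i}\norm{\Delta}^i$, and collapses the sum with the binomial theorem to $C(C\norm{\Delta}+\rho)^k \le C\gamma^k$. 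Both arguments land on exactly the same bound under exactly the same hypothesis on $\norm{\Delta}$, so neither is sharper; the paper's is a one-shot combinatorial computation with no induction, while yours isolates the perturbation to first order and pushes the higher-order terms into the inductive hypothesis, which is arguably the more reusable template (it is the discrete analogue of a Gr\"onwall/variation-of-constants argument). Your observation that $C \ge 1$ (needed for the base case $f(0)=1\le C$) is correct and is the one detail your route requires that the paper's does not explicitly invoke. For part (a) you appeal to Gelfand's formula as a limit, whereas the paper uses the inequality $\rho(A+\Delta)\le \norm{(A+\Delta)^k}^{1/k} \le C^{1/k}\gamma$ for each fixed $k$ and takes the infimum; these are equivalent here.
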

\begin{proof}
We start by proving (b).
Fix an integer $k \geq 1$.
Consider the expansion of $(A+\Delta)^k$ into $2^k$ terms.
Label all these terms as $T_{i,j}$ for $i=0, ..., k$ and $j=1, ..., {k \choose i}$
where $i$ denotes the degree of $\Delta$ in the term (hence there are ${k \choose i}$ terms with a degree of $i$ for $\Delta$).
Using the fact that $\norm{A^k} \leq C \rho^k$ for all $k \geq 0$, we can
bound $\norm{T_{i,j}} \leq C^{i+1} \rho^{k-i} \norm{\Delta}^i$.
Hence by triangle inequality:
\begin{align*}
  \norm{(A+\Delta)^k} &\leq \sum_{i=0}^{k} \sum_{j} \norm{T_{i,j}} \\
  &\leq \sum_{i=0}^{k} {k \choose i} C^{i+1} \rho^{k-i} \norm{\Delta}^i \\
  &= C \sum_{i=0}^{k} {k \choose i} (C \norm{\Delta})^i \rho^{k-i} \\
  &= C (C\norm{\Delta} + \rho)^k \\
  &\leq C \gamma^k \:,
\end{align*}
where the last inequality uses the assumption $\norm{\Delta} \leq \frac{\gamma-\rho}{C}$.
This gives the claim (b).

To derive the claim (a),
we use the inequality that
$\rho(A + \Delta) \leq \norm{ (A+\Delta)^k }^{1/k} \leq C^{1/k} \gamma$ for any $k \geq 1$.
Since this holds for any $k \geq 1$, we can take the infimum over all $k \geq 1$ on the
RHS, which yields the desired claim.
\end{proof}

\newcommand{\calEbdd}{\calE_{\mathsf{Bdd}}}
\newcommand{\calEalg}{\calE_{\mathsf{Alg}}}
\newcommand{\calGalg}{\calG_{\mathsf{Alg}}}

Fix a finite $p \geq 1$.
Since $\Lstar$ is stable
and $\zeta \in (\rho(\Lstar), 1)$, there exists a $C_\star$ such that
$\norm{\Lstar^k} \leq C_\star \zeta^k$ for all $k \geq 0$.
For the rest of the proof, $O(\cdot), \Omega(\cdot)$ will hide constants that depend on $\Lstar, C_\star, n, p, \lambda,\zeta,\psi$, but not on $T$.
Set $\delta_T = O(1/T^{p/2})$ and let $T$ be large enough so that
there exists an event
$\calEbdd$ promised by Lemma~\ref{lem:moment_control_ls}
such that $\Pr(\calEbdd) \geq 1 - \delta_T$ and
on $\calEbdd$ we have $\norm{\Lh(T) - \Lstar} \leq O(\sqrt{\log(1/\delta_T)/T})$.
Let $T$ also be large enough so that on $\calEbdd$, we have
$\norm{\Lh(T) - \Lstar} \leq \min((\gamma-\rho_\star)/C_\star, \psi - \norm{\Lstar})$.
With this setting, we have that on $\calEbdd$,
for any $\alpha \in (0,1)$,
\begin{align*}
  \tilde{L}(\alpha) := \alpha \Lh(T) + (1-\alpha) \Lstar \in \left\{ L \in \R^{n \times n} : \rho(L) \leq \zeta \:, \:\: \norm{L} \leq \min\left(\norm{\Lstar} + \frac{\gamma-\rho_\star}{C_\star}, \psi\right)  \right\} =: \calG \:.
\end{align*}
Therefore on $\calEbdd$, for some $\alpha \in (0, 1)$,
\begin{align*}
  \norm{P(\Lh(T)) - P_\star} = \norm{[DP(\tilde{L}(\alpha))]( \Lh(T) - \Lstar )} \leq \sup_{\tilde{L} \in \calG} \norm{[DP(\tilde{L})]} \norm{\Lh(T) - \Lstar} := S \norm{\Lh(T) - \Lstar} \:.
\end{align*}
Here the norm $\norm{[H]} := \sup_{\norm{X} \leq 1} \norm{[H](X)}$.
We have that $S$ is finite since $\calG$ is a compact set.
Next, define the set $\calGalg$ as:
\begin{align*}
  \calGalg := \{ L \in \R^{n \times n} : \rho(L) \leq \zeta \:, \:\: \norm{L} \leq \psi \} \:,
\end{align*}
and define the event $\calEalg$ as $\calEalg := \{ \Lh(T) \in \calGalg \}$.
Consider the decomposition:
\begin{align*}
  \E[ \norm{\Phplugin(T) - P_\star}^p ] &= \E[ \norm{\Phplugin(T) - P_\star}^p \ind_{\calEbdd}  ] + \E[ \norm{\Phplugin(T) - P_\star}^p \ind_{\calEbdd^c} ] \\
  &\leq \E[ \norm{\Phplugin(T) - P_\star}^p \ind_{\calEbdd}  ] + \E[ \norm{\Phplugin(T) - P_\star}^p \ind_{\calEbdd^c \cap \calEalg} ] \\
  &\qquad+ \E[ \norm{\Phplugin(T) - P_\star}^p \ind_{\calEbdd^c \cap \calEalg^c} ] \:.
\end{align*}
In what follows we will assume that $T$ is sufficiently large.

\paragraph{On $\calEbdd$.}
On this event, since we have $\calEbdd \subseteq \calEalg$,
we can bound by Lemma~\ref{lem:moment_control_ls}:
\begin{align*}
  \E[ \norm{\Phplugin(T) - P_\star}^p \ind_{\calEbdd}  ] &= \E[ \norm{\Phplugin(T) - P_\star}^p \ind_{\calEbdd \cap \calEalg} ]
  \leq S^p \E[ \norm{\Lh(T) - \Lstar}^p ] \leq O(1/T^{p/2}) \:.
\end{align*}

\paragraph{On $\calEbdd^c \cap \calEalg$.}
On this event, we use the fact that $\calGalg$ is compact to bound:
\begin{align*}
  \E[ \norm{\Phplugin(T) - P_\star}^p \ind_{\calEbdd^c \cap \calEalg} ] &\leq \sup_{\Lh \in \calGalg} \norm{\dlyap(\Lh, Q + K^\T R K) - P_\star}^p \Pr( \calEbdd^c \cap \calEalg)  \\
  &\leq \sup_{\Lh \in \calGalg} \norm{\dlyap(\Lh, Q + K^\T R K) - P_\star}^p \delta_T \\
  &\leq O(1/T^{p/2}) \:.
\end{align*}

\paragraph{On $\calEbdd^c \cap \calEalg^c$.}
On this event, we simply have:
\begin{align*}
  \E[ \norm{\Phplugin(T) - P_\star}^p \ind_{\calEbdd^c \cap \calEalg^c} ] = \norm{P_\star}^p \Pr(\calEbdd^c \cap \calEalg^c) \leq \norm{P_\star}^p \delta_T \leq O(1/T^{p/2}) \:.
\end{align*}

\paragraph{Putting it together.}
Combining these bounds we obtain that
$\E[ \norm{\Phplugin(T) - P_\star}^p ] \leq O(1/T^{p/2})$.
Recall that $Z_T = \svec(P(\Lh(T)) - P_\star)$.
We have that for
any finite $\gamma > 0$ and $T \geq \Omega(1)$:
\begin{align*}
    \E[ \norm{Z_T}_F^{2+\gamma} ] &= T^{(2+\gamma)/2} \E[ \norm{P(\Lh(T)) - P_\star}_F^{2+\gamma} ]  \\
    &\leq n^{(2+\gamma)/2} T^{(2+\gamma)/2} \E[ \norm{P(\Lh(T)) - P_\star}^{2+\gamma} ] \\
    &\leq n^{(2+\gamma)/2} T^{(2+\gamma)/2} O(1/T^{(2+\gamma)/2}) \\
    &\leq n^{(2+\gamma)/2} O(1) \:.
\end{align*}
On the other hand, when $T \leq O(1)$ it is easy to see
that $\E[ \norm{Z_T}_F^{2+\gamma} ]$ is finite.
Hence we have $\sup_{T \geq 1} \E[ \norm{Z_T}_F^{2+\gamma} ] < \infty$ which shows
the desired uniformly integrable condition.
This concludes the proof of Theorem~\ref{thm:policy_eval_plugin_risk}.

\subsection{Proof of Theorem~\ref{thm:policy_eval_lstd_risk}}

\label{sec:proofs:policy_eval_lstd}

Lemma~\ref{lem:lstd_asymptotic_distribution} (specifically \eqref{eq:lstd_cov_lb}) combined with
Lemma~\ref{lem:lp_convergence} tells us that:
\begin{align*}
  &\liminf_{T \to \infty} T \cdot \E[ \norm{ \Phlstd(T) - P_\star }_F^2 ] \geq \Tr( A_\infty^{-1} B_\infty A_\infty^{-\T}  ) \\
    &\qquad\geq 8 \sigma_w^2 \Tr(\ip{P_\infty}{\Lstar^\T P_\star^2 \Lstar} (I - \Lstar^\T \otimes_s \Lstar^\T)^{-1} (P_\infty^{-1} \otimes_s P_\infty^{-1})(I - \Lstar^\T \otimes_s \Lstar^\T)^{-\T}) \nonumber \\
  &\qquad\qquad + 16 \sigma_w^2 \Tr( (I - \Lstar^\T \otimes_s \Lstar^\T)^{-1} (\Lstar^\T P_\star^2 \Lstar \otimes_s P_\infty^{-1})(I - \Lstar^\T \otimes_s \Lstar^\T)^{-\T}) \:.
\end{align*}
The claim now follows by using the risk bound from Theorem~\ref{thm:policy_eval_plugin_risk}.

\subsection{Proof of Theorem~\ref{thm:policy_eval_lower_bound}}

Let $E_1, ..., E_N$ be $d$-dimensional subspaces of $\R^n$ with $d \leq n/2$ such that
$\norm{P_{E_i} - P_{E_j}}_F \gtrsim \sqrt{d}$.
By Proposition 8 of \citet{pajor98},
we can take $N \geq e^{n(n-d)}$.
Now consider instances $A_i$ with $A_i = \tau P_{E_i} + \gamma I_n$ for a $\tau,\gamma \in (0, 1)$ to be determined.
We will set $\tau + \gamma = \rho$ so that each $A_i$ is contractive (i.e. $\norm{A_i} < 1$) and hence stable.
This means implicitly that we will require $\tau < \rho$.
Let $\Pr_i$ denote the distribution over $(x_1, ..., x_T)$ induced by instance $A_i$.
We have that:
\begin{align*}
    \KL(\Pr_i, \Pr_j) &= \sum_{t=1}^{T} \E_{x_t \sim \Pr_i}[ \KL(\calN(A_i x_t, \sigma^2 I), \calN(A_j x_t, \sigma^2 I)) ] \\
    &= \frac{1}{2\sigma^2} \sum_{t=1}^{T} \E_{x_t \sim \Pr_i}[ \norm{(A_i - A_j) x_t}^2_2 ] \\
    &\leq \frac{\norm{A_i - A_j}^2}{2\sigma^2} \sum_{t=1}^{T} \Tr( \E_{x_t \sim \Pr_i}[ x_tx_t^\T ] ) \\
    &\leq \frac{\tau^2}{\sigma^2} T \Tr(P_\infty) \\
    &= \tau^2 T \left( \frac{d}{1-\rho^2} + \frac{n-d}{1-\gamma^2} \right) \\
    &\leq \tau^2 T \frac{n}{1-\rho^2} \:.
\end{align*}
Now if we choose $n(n-d) \geq 4 \log{2}$ and $T \gtrsim n (1-\rho^2)/\rho^2$, we can set $\tau^2 \asymp \frac{n(1-\rho^2)}{T}$ and
obtain that $\frac{I(V;X) + \log{2}}{\log{\abs{V}}} \leq 1/2$.

On the other hand, let $P_i = \dlyap(A_i, I_n)$.
We have that for any integer $k \geq 0$:
\begin{align*}
  (\tau P_{E_i} + \gamma I_n)^k - (\tau P_{E_j} + \gamma I_n)^k &= \sum_{\ell=0}^{k} {k \choose \ell} \gamma^{k-\ell} \tau^\ell (P_{E_i}^\ell - P_{E_j}^\ell) \\
  &= k \gamma^{k-1} \tau (P_{E_i} - P_{E_j}) + \sum_{\ell=2}^{k}{k \choose \ell} \gamma^{k-\ell} \tau^\ell (P_{E_i}^\ell - P_{E_j}^\ell) \:.
\end{align*}
Hence,
\begin{align*}
  P_i - P_j &= \sum_{k=1}^{\infty} ( (A_i^k)^\T A_i^k - (A_j^k)^\T A_j^k ) \\
  &= \sum_{k=1}^{\infty} (A_i^{2k} - A_j^{2k}) \\
  &= \left(\sum_{k=1}^{\infty} 2k \gamma^{2k-1} \tau  + \sum_{k=2}^{\infty} \sum_{\ell=2}^{2k} {2k \choose \ell} \gamma^{2k-\ell} \tau^\ell \right) (P_{E_i} - P_{E_j}) \\
  &= \left( \frac{2\gamma\tau}{(1-\gamma^2)^2} + \sum_{k=2}^{\infty} \sum_{\ell=2}^{k} {k \choose \ell} \gamma^{k-\ell} \tau^\ell \right) (P_{E_i} - P_{E_j}) \:.
\end{align*}
Therefore,
\begin{align*}
  \norm{P_i - P_j}_F &\geq \frac{2\gamma\tau}{(1-\gamma^2)^2} \norm{P_{E_i} - P_{E_j}}_F \gtrsim \frac{\gamma \tau}{(1-\gamma^2)^2} \sqrt{d} \:.
\end{align*}
The claim now follows by Fano's inequality and setting $d = n/4$.

\section{Analysis of Policy Optimization Methods}

\subsection{Preliminary Calculations}
\label{sec:pg:calcs}

Given $(\Astar, \Bstar)$ with $\mathrm{range}(\Astar) \subseteq \mathrm{range}(\Bstar)$ and $\rank(\Bstar) = d$, let
$J_{\Sigma}(K)$ for a $K \in \R^{d \times n}$ denote the following cost:
\begin{align*}
  J_{\Sigma}(K) := \E\left[ \sum_{t=1}^{T} \norm{x_t}^2_2 \right]  \:, \:\: x_{t+1} = \Astar x_t + \Bstar u_t + w_t \:, \:\: u_t = K x_t \:, \:\: w_t \sim \calN(0, \Sigma) \:.
\end{align*}
Here we assume $T \geq 2$ and $\Sigma$ is positive definite.
We write $J(K) = J_{\sigma_w^2 I_n}(K)$ as shorthand.
Under this feedback law, we have $x_t \sim \calN(0, \sum_{\ell=0}^{t-1} L(K)^\ell \Sigma (L(K)^\ell)^\T )$
with $L(K) := \Astar + \Bstar K$.
Letting $L$ be shorthand for $L(K)$, the cost can be written as:
\begin{align*}
  J_{\Sigma}(K) &= \sum_{t=1}^{T} \sum_{\ell=0}^{t-1} \Tr( L^\ell \Sigma (L^\ell)^\T )
  = T \Tr(\Sigma) + \sum_{t=1}^{T} \sum_{\ell=1}^{t-1} \Tr( L^\ell \Sigma (L^\ell)^\T ) \:.
\end{align*}
Let $\Kstar$ denote the minimizer of $J_{\Sigma}(K)$; under our assumptions we have that $\Kstar = -\Bstar^{\dag} \Astar$.
Furthermore, because of the range condition we can write $\Astar = \Bstar\Bstar^{\dag} \Astar$.
Therefore, $L(K) = \Bstar (\Bstar^\dag \Astar + K)$.
While the function $J_{\Sigma}(K)$ is not convex, it has many nice properties.
First, $J_{\Sigma}(K)$ satisfies a \emph{quadratic growth condition}:
\begin{align}
  J_{\Sigma}(K) - J_{\Sigma}(\Kstar) &\geq (T-1) \Tr( L \Sigma L^\T ) \nonumber \\
  &= (T-1) \Tr( \Bstar (\Bstar^\dag \Astar + K) \Sigma (\Bstar^\dag \Astar + K)^\T \Bstar^\T ) \nonumber \\
  &= (T-1) \vec(\Bstar^\dag \Astar + K)^\T ( \Sigma \otimes \Bstar^\T \Bstar ) \vec(\Bstar^\dag \Astar + K) \nonumber \\
  &\geq (T-1) \lambda_{\min}(\Sigma) \sigma_{\min}(\Bstar)^2 \norm{K - \Kstar}_F^2 \label{eq:pg:quadratic_growth} \:.
\end{align}

Next, we will see $J_{\Sigma}(K)$ satisfies restricted strong convexity.
To do this, we first compute the gradient $\nabla J_{\Sigma}(K)$.
Consider the function $M \mapsto M^\ell$ for any integer $\ell \geq 2$.
We have that the derivatives are:
\begin{align*}
  [D M^\ell](\Delta) &= \sum_{k=0}^{\ell-1} M^k \Delta M^{\ell-k-1} \:, \:\:  [D (M^\ell)^\T](\Delta) = \sum_{k=0}^{\ell-1} (M^{\ell-k-1})^\T \Delta^\T (M^k)^\T \:.
\end{align*}
By the chain rule,
\begin{align*}
  [D L(K)^\ell](\Delta) = \sum_{k=0}^{\ell-1} L(K)^k \Bstar \Delta L(K)^{\ell-k-1} \:.
\end{align*}
Hence by the chain rule again,
\begin{align*}
  [D \Tr(L(K)^\ell \Sigma (L^\ell)^\T)](\Delta) &= \Tr\left( L^\ell \Sigma \sum_{k=0}^{\ell-1} (L^{\ell-k-1})^\T \Delta^\T \Bstar^\T (L^k)^\T \right) \\
  &\qquad+ \Tr\left( \sum_{k=0}^{\ell-1} L^k \Bstar \Delta L^{\ell-k-1} \Sigma (L^\ell)^\T \right) \\
  &= 2\bigip{\sum_{k=0}^{\ell-1} \Bstar^\T (L^k)^\T L^\ell \Sigma (L^{\ell-k-1})^\T }{\Delta} \:.
\end{align*}
We have shown that:
\begin{align*}
  \nabla_K \Tr( L(K)^\ell \Sigma (L(K)^\ell)^\T ) = 2 \sum_{k=0}^{\ell-1} \Bstar^\T (L^k)^\T L^\ell \Sigma (L^{\ell-k-1})^\T  \:.
\end{align*}
Therefore we can compute the gradient of $J_{\Sigma}(K)$ as:
\begin{align*}
  \nabla J_{\Sigma}(K) = 2 (T-1) \Bstar^\T L \Sigma + 2 \sum_{\ell=2}^{T-1}  \sum_{k=0}^{\ell-1}(T-\ell)  \Bstar^\T (L^k)^\T L^\ell \Sigma (L^{\ell-k-1})^\T \:.
\end{align*}
Now observe that $L(K) = \Bstar(K - K_\star)$ and therefore:
\begin{align*}
  \ip{\nabla J_{\Sigma}(K)}{K - \Kstar} &= \Tr(\nabla J_{\Sigma}(K) (K - \Kstar)^\T) \\
  &= 2 (T-1) \Tr(L \Sigma L^\T) + 2 \sum_{\ell=2}^{T-1}  \sum_{k=0}^{\ell-1}(T-\ell)  \Tr(L^{\ell} \Sigma (L^\ell)^\T) \\
  &\stackrel{(a)}{\geq} 2 (T-1) \Tr(L \Sigma L^\T) \\
  &\geq 2(T-1) \lambda_{\min}(\Sigma) \sigma_{\min}(\Bstar)^2 \norm{K - K_\star}_F^2 \:.
\end{align*}
Above, (a) follows since $\Tr(AB) \geq 0$ for positive semi-definite matrices $A,B$.
This condition proves that $K = \Kstar$ is the unique stationary point,
and establishes the restricted strong convexity $\mathsf{RSC}(m, \R^{d \times n})$ condition for $J_{\Sigma}(K)$
with constant $m = 2(T-1) \lambda_{\min}(\Sigma) \sigma_{\min}(\Bstar)^2$.

Finally, we show that the Hessian of $J_{\Sigma}(K)$ evaluated at $\Kstar$ is positive definite.
Fix a test matrix $H \in \R^{d \times n}$, and define the function
$g(t) := \ip{H}{\nabla J_{\Sigma}(K_\star + t H)}$.
By standard properties of the directional derivative, we have that
$\mathrm{Hess} J_{\Sigma}(\Kstar)[H, H] = g'(0)$.
Observing that $L(\Kstar + t H) = t \cdot \Bstar H$,
we have that:
\begin{align*}
  g(t) &= 2(T-1) t \Tr(\Sigma H^\T \Bstar^\T \Bstar H) \\
  &\qquad+ 2 \sum_{\ell=2}^{T-1} \sum_{k=0}^{\ell-1} (T-\ell) t^{2\ell-1} \Tr(H^\T \Bstar^\T (H^\T \Bstar^\T)^k (\Bstar H)^\ell \Sigma (H^\T \Bstar^\T)^{\ell-k-1}  ) \:,
\end{align*}
from which we conclude:
\begin{align*}
  \mathrm{Hess} J_{\Sigma}(\Kstar)[H, H] = 2 (T-1) \Tr( \Sigma H^\T \Bstar^\T \Bstar H) = 2 (T-1) \vec(H)^\T (\Sigma \otimes \Bstar^\T \Bstar) \vec(H) \:.
\end{align*}

\subsection{Proof of Theorem~\ref{thm:policy_opt_plugin_risk}}
\label{sec:pg:policy_opt_plugin_risk}

Recal that the pair $(A, B)$ is stabilizable if there exists a feedback matrix $K$ such that
$\rho(A + BK) < 1$.
We first state a result which gives a sufficient condition for the existence of a unique positive definite
solution to the discrete algebraic Riccati equation.
\begin{lem}[Theorem 2, \citet{molinari75}]
\label{lem:dare_existence}
Suppose that $Q \succ 0$, $(A, B)$ is stabilizable, and $B$ has full column rank.
Then there exists a unique positive definite solution $P$ to the DARE:
\begin{align}
    P = A^\T P A - A^\T P B (B^\T P B)^{-1} B^\T P A + Q \:. \label{eq:dare_Q}
\end{align}
This $P$ satisfies the lower bound $P \succeq Q$, and if $A$ is contractive (i.e. $\norm{A} < 1$)
satisfies the upper bound $\norm{P} \leq \frac{\norm{Q}}{1-\norm{A}^2}$.
\end{lem}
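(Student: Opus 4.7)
The plan is to obtain $P$ as the limit of a monotone Riccati iteration and then read off the claimed bounds from the structure of the equation. The key algebraic observation is that the nonlinear part can be rewritten as
\[
A^\T P A - A^\T P B(B^\T P B)^{-1} B^\T P A \;=\; A^\T P^{1/2}\bigl(I - \Pi_P\bigr) P^{1/2} A,
\]
where $\Pi_P := P^{1/2}B(B^\T P B)^{-1}B^\T P^{1/2}$ is an orthogonal projector (it is symmetric and satisfies $\Pi_P^2 = \Pi_P$). This immediately makes the right-hand side of \eqref{eq:dare_Q} well-defined and PSD whenever $P\succeq 0$ and $B^\T P B\succ 0$, and it yields the lower bound $P\succeq Q$ as a byproduct of any fixed point.

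For existence, I would run the value iteration $P_{k+1} = \Phi(P_k)$ with $\Phi$ the Riccati operator and initialize at $P_0 = Q\succ 0$. Two things need verifying. First, monotonicity: $P\preceq P'$ implies $\Phi(P)\preceq \Phi(P')$, which follows from the projector identity above together with the minimization characterization $\Phi(P) = Q + \min_{K} (A+BK)^\T P (A+BK)$ (the minimum being attained at $K = -(B^\T P B)^{-1} B^\T P A$ when $R=0$). Second, an upper bound: pick any $K_{\mathrm{stab}}$ with $\rho(A+BK_{\mathrm{stab}})<1$ (exists by stabilizability) and let $\bar P = \dlyap((A+BK_{\mathrm{stab}})^\T, Q)$. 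By the min characterization $\Phi(P)\preceq (A+BK_{\mathrm{stab}})^\T P (A+BK_{\mathrm{stab}}) + Q$, so induction gives $P_k\preceq \bar P$ for all $k$, with $B^\T P_k B\succeq B^\T Q B\succ 0$ (using full column rank of $B$). The bounded monotone sequence converges to some $P$, and continuity of $\Phi$ on $\{P : B^\T P B\succ 0\}$ gives $\Phi(P)=P$; the inequality $P\succeq Q$ forces $P\succ 0$.

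For uniqueness, I would use the standard closed-loop shift. If $P_1,P_2$ are two PD fixed points, set $K_i = -(B^\T P_i B)^{-1} B^\T P_i A$ and $A_{\mathrm{cl},i} := A+BK_i$; subtracting the two Riccati equations and completing the square yields a discrete Lyapunov-type identity
\[
P_1 - P_2 \;=\; A_{\mathrm{cl},1}^\T (P_1 - P_2) A_{\mathrm{cl},2} \;+\; (\text{PSD cross-term}),
\]
so if both $A_{\mathrm{cl},1}$ and $A_{\mathrm{cl},2}$ are stable, iterating forces $P_1 = P_2$. That every PD fixed point is stabilizing is where $Q\succ 0$ enters decisively: detectability of $(A,Q^{1/2})$ is automatic, and combining it with $P = A_{\mathrm{cl}}^\T P A_{\mathrm{cl}} + Q + (\text{PSD})$ rules out unstable modes of $A_{\mathrm{cl}}$ via a Lyapunov argument. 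Finally, the upper bound under $\|A\|<1$ drops out by discarding the PSD middle term in \eqref{eq:dare_Q} to get $P\preceq A^\T P A + Q$, iterating $k$ times, and sending $k\to\infty$ to conclude $\|P\|\leq \sum_{j\geq 0}\|A\|^{2j}\|Q\| = \|Q\|/(1-\|A\|^2)$.

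The main obstacle is the uniqueness step, specifically verifying that every positive definite fixed point is stabilizing. With $R\succ 0$ this is textbook, but the $R=0$ case requires care because $B^\T P B$ only governs the input direction and one must combine full column rank of $B$ with $Q\succ 0$ to preclude unobservable unstable modes of $A_{\mathrm{cl},i}$. Everything else is essentially bookkeeping around the projector identity above.
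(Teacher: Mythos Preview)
Your argument is essentially correct and self-contained, but it differs substantially from the paper's proof. The paper does not construct $P$ via a monotone Riccati iteration; instead it verifies a frequency-domain positivity condition (that the Popov function $B^\T (z^{-1}I-A_c)^{-\T} Q (zI-A_c)^{-1} B$ is positive definite on the unit circle, where $A_c$ is any stabilized closed loop) and then invokes Molinari's Theorem~2 as a black box to obtain the unique symmetric $P$ with $B^\T P B\succ 0$ and $\rho(A-B(B^\T PB)^{-1}B^\T PA)<1$. From there, both proofs coincide: one checks the exact identity $A_c^\T P A_c - P + Q = 0$, which yields $P\succeq Q$ by Lyapunov theory, and the upper bound under $\norm{A}<1$ is obtained by dropping the PSD middle term and iterating $P\preceq A^\T P A + Q$. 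Your approach has the advantage of being elementary and not relying on the external reference; the paper's approach is shorter but opaque unless one knows Molinari's result.

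Two small corrections to your sketch. First, the ``main obstacle'' you flag is not actually delicate here: because $R=0$, the Riccati fixed-point identity collapses \emph{exactly} to the Lyapunov equation $P = A_{\mathrm{cl}}^\T P A_{\mathrm{cl}} + Q$ (no residual PSD term), and with $P,Q\succ 0$ this immediately forces $\rho(A_{\mathrm{cl}})<1$---no detectability argument is needed. Second, your stated uniqueness identity $P_1-P_2 = A_{\mathrm{cl},1}^\T(P_1-P_2)A_{\mathrm{cl},2}+(\text{PSD})$ is not quite the right form; the clean version (from your own min characterization) is $P_1-P_2 = A_{\mathrm{cl},2}^\T(P_1-P_2)A_{\mathrm{cl},2} - (K_2-K_1)^\T B^\T P_1 B(K_2-K_1)$, which together with stability of $A_{\mathrm{cl},2}$ gives $P_1\preceq P_2$, and symmetry finishes. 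Also, in the paper's convention $\dlyap(L,M)$ solves $L^\T P L - P + M = 0$, so your upper barrier should be $\bar P=\dlyap(A+BK_{\mathrm{stab}},Q)$ rather than with the transpose.
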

\begin{proof}
Define the map $\Psi(z;A) := B^\T (z^{-1} I_n - A)^{-\T} Q (z I_n - A)^{-1} B$.
Let $K$ be such that $A+BK$ is stable.
We observe that for $\abs{z} = 1$,
we have that:
\begin{align*}
    \Psi(z; A+BK) = B^* (zI_n - (A+BK))^{-*} Q (zI_n - (A+BK))^{-1} B \succ 0\:.
\end{align*}
This is because $Q \succ 0$, $B^* B \succ 0$, and the matrix $zI_n - (A+BK)$ does not drop rank
since $A+BK$ has no eigenvalues on the unit circle.
Therefore by Theorem 2 of \citet{molinari75},
there exists a unique symmetric solution $P$ that satisfies \eqref{eq:dare_Q} with the additional
constraint that $B^\T P B \succ 0$ and that $\rho(A_c) < 1$ with $A_c := A - B (B^\T P B)^{-1} B^\T P A$.
But \eqref{eq:dare_Q} means that:
\begin{align*}
    A_c^\T P A_c &=  (A - B (B^\T P B)^{-1} B^\T P A)^\T P (A - B (B^\T P B)^{-1} B^\T P A) \\
    &= A^\T P A - A^\T P B (B^\T P B)^{-1} B^\T P A - A^\T P B (B^\T P B)^{-1} B^\T P A \\
    &\qquad+ A^\T P B (B^\T P B)^{-1} B^\T P B (B^\T P B)^{-1} B^\T P A \\
    &= A^\T P A - A^\T P B (B^\T P B)^{-1} B^\T P A \\
    &= P - Q \:.
\end{align*}
Hence, we have $A_c^\T P A_c - P + Q = 0$, and since $A_c$ is stable by Lyapunov theory
we know that $P \succeq Q$.
Furthermore, since $P \succeq 0$, \eqref{eq:dare_Q} implies that
$P \preceq A^\T P A + Q$ from which the upper bound on $\norm{P}$ follows
under the contractivity assumptions.
\end{proof}
Next, we state a result which gives the derivative of the discrete algebraic Riccati equation.
\begin{lem}[Section A.2 of \citet{abeille17}]
\label{lem:dare_derivative}
Let $(Q, R)$ be positive semidefinite matrices.
Suppose that $(A, B)$ are such that
there exists a unique positive definite solution $P(A, B)$
to $\mathsf{dare}(A, B, Q, R)$.
For a perturbation $\rvectwo{\Delta_A}{\Delta_B} \in \R^{n \times (n+d)}$, we have that the
Fr{\'{e}}chet derivative $[D_{(A,B)} P(A, B)]$ evaluated at the perturbation $\rvectwo{\Delta_A}{\Delta_B}$
is given by:
\begin{align*}
  [D_{(A,B)} P(A, B)](\rvectwo{\Delta_A}{\Delta_B}) = \dlyap\left(A_c, A_c^\T P\rvectwo{\Delta_A}{\Delta_B} \cvectwo{I_n}{K} + \cvectwo{I_n}{K}^\T \rvectwo{\Delta_A}{\Delta_B}^\T P A_c\right) \:,
\end{align*}
where $P = P(A, B)$, $K = -(B^\T P B + R)^{-1} B^\T P A$, and $A_c = A + BK$.
\end{lem}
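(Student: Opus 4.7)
The plan is to apply implicit differentiation to the closed-loop form of the DARE and exploit first-order optimality of the feedback $K$ to collapse the computation. First, I would rewrite the DARE in closed-loop form: setting $K = -(B^\T P B + R)^{-1} B^\T P A$ and $A_c = A + BK$, a direct expansion (as in the proof of Lemma~\ref{lem:dare_existence}) shows that the DARE is equivalent to
\[
P = A_c^\T P A_c + Q + K^\T R K.
\]
Since the stabilizing hypothesis guarantees $\rho(A_c) < 1$, the discrete Lyapunov operator $X \mapsto X - A_c^\T X A_c$ is a bounded linear isomorphism on symmetric matrices. Applying the implicit function theorem to $F(P, A, B) := A^\T P A - A^\T P B(B^\T P B + R)^{-1} B^\T P A + Q - P$, whose partial Jacobian in $P$ is precisely this invertible Lyapunov operator at the solution, therefore guarantees that $P$ is $\mathcal{C}^1$ as a function of $(A, B)$ in a neighborhood of the solution.

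With differentiability in hand, I would differentiate the identity $P = A_c^\T P A_c + Q + K^\T R K$ along a perturbation $\rvectwo{\Delta_A}{\Delta_B}$, treating $P$, $K$, and $A_c$ all as functions of $(A, B)$. Since $dA_c = \Delta_A + \Delta_B K + B\, dK$, the chain rule produces three kinds of terms: the Lyapunov term $A_c^\T (dP) A_c$, the terms linear in $(\Delta_A, \Delta_B)$ that will form the desired right-hand side, and a collection of terms involving $dK$. The crux is showing that the $dK$ terms cancel. Using first-order optimality of $K$, one has $(B^\T P B + R) K = -B^\T P A$, equivalently $B^\T P A_c = -RK$, which forces
\[
(dK)^\T B^\T P A_c + (dK)^\T R K = 0
\]
together with its transpose. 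This is the usual envelope-theorem phenomenon and all $dK$ contributions vanish.

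After cancellation, what remains is exactly
\[
dP - A_c^\T (dP) A_c = A_c^\T P \rvectwo{\Delta_A}{\Delta_B} \cvectwo{I_n}{K} + \cvectwo{I_n}{K}^\T \rvectwo{\Delta_A}{\Delta_B}^\T P A_c,
\]
which is the discrete Lyapunov equation whose unique solution is the claimed $\dlyap(A_c, \,\cdot\,)$ expression.

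The step that requires the most care is the envelope-theorem cancellation of the $dK$ terms; without it, the derivative would also depend on $dK$ and one would not obtain a closed-form expression purely in terms of $\Delta_A, \Delta_B, K, P, A_c$. Stability of $A_c$, needed both to invoke the implicit function theorem and to solve the final Lyapunov equation uniquely, was already established in Lemma~\ref{lem:dare_existence}. Beyond these two points the derivation is a mechanical chain-rule computation.
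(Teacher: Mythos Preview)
The paper does not provide its own proof of this lemma; it is cited directly from Section~A.2 of \citet{abeille17} and used as a black box. Your proposal is correct and is the standard argument for this fact: rewrite the DARE in closed-loop form $P = A_c^\T P A_c + Q + K^\T R K$, verify (as you outline) that the $P$-Jacobian of the Riccati map collapses to the Lyapunov operator $X \mapsto X - A_c^\T X A_c$ so that the implicit function theorem applies, then differentiate and use the first-order optimality identity $B^\T P A_c + R K = 0$ to kill the $dK$ terms. The only caveat is that your appeal to Lemma~\ref{lem:dare_existence} for stability of $A_c$ is specific to the $Q \succ 0$, $R = 0$ setting treated there; for the general statement you should simply take stability of $A_c$ as part of the standing hypothesis that $P$ is the unique stabilizing solution, which is how \citet{abeille17} frame it.
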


With these two lemmas, we are ready to proceed.
We differentiate the map $h(A, B) := -(B^\T P(A, B) B + R)^{-1} B^\T P(A, B) A$.
By the chain rule:
\begin{align*}
    &[D_{(A,B)} h(A, B)](\Delta) = -(B^\T P B + R)^{-1}(B^\T P \Delta_A + \Delta_B^\T P A + B^\T [D_{(A,B)} P](\Delta) A ) \\
  &\qquad +(B^\T P B + R)^{-1} ( \Delta_B^\T P B + B^\T P \Delta_B + B^\T [D_{(A,B)} P](\Delta) B   )(B^\T P B + R)^{-1} B^\T P A \:.
\end{align*}
We now evaluate this derivative at:
\begin{align*}
    A &=  \Astar \:, B = \Bstar \:, Q = I_n \:, R = 0 \:.
\end{align*}
Note that $P(A, B) = I_n$ and also by Lemma~\ref{lem:dare_derivative}, we have that $[D_{(A,B)} P(A, B)] = 0$,
since $A_c = 0$.
Therefore the derivative $[D_{(A,B)} h(A, B)](\Delta)$ simplifies to:
\begin{align*}
  [D_{(A,B)} h(A, B)](\Delta) &= - (\Bstar^\T \Bstar)^{-1} (\Bstar^\T \Delta_A + \Delta_B^\T \Astar) + (\Bstar^\T \Bstar)^{-1} (\Delta_B^\T \Bstar + \Bstar^\T \Delta_B) \Bstar^{\dag} \Astar \\
  &= - \Bstar^{\dag}\Delta_A + \Bstar^{\dag} \Delta_B \Bstar^{\dag} \Astar \:.
\end{align*}
Hence we have:
\begin{align*}
  \vec([D_{(A,B)} h(A, B)](\Delta) ) &= \rvectwo{ -(I_n \otimes \Bstar^{\dag})  }{ (\Bstar^{\dag} \Astar)^\T \otimes \Bstar^{\dag}  }\vec(\Delta) \:.
\end{align*}
Now using the assumption that $\Astar$ is stable, from Lemma~\ref{lem:ls_driven_asymptotic_dist}
we have that by the delta method:
\begin{align*}
  &\sqrt{N} \vec(h(\Ah(N), \Bh(N)) - \Kstar) \\
  &\qquad\distconv \calN\left( 0,  \frac{\sigma_w^2}{T} \rvectwo{ -(I_n \otimes \Bstar^{\dag})  }{ (\Bstar^{\dag} \Astar)^\T \otimes \Bstar^{\dag}  } \left(\bmattwo{P_\infty^{-1}}{0}{0}{(1/\sigma_u^2) I_d} \otimes I_n\right)  \cvectwo{ -(I_n \otimes (\Bstar^{\dag})^\T)  }{ \Bstar^{\dag} \Astar \otimes (\Bstar^{\dag})^\T  }  + o(1/T) \right) \\
  &\qquad=: \varphi \:.
\end{align*}

We now make use of the second order delta method.
Recall that the Hessian of $J$ at $\Kstar$ is
$\mathrm{Hess} J(\Kstar) [H,H] = 2 (T-1) \sigma_w^2 \ip{H}{\Bstar^\T \Bstar H}$.
If $\sqrt{N} \vec(\Kh(N) - \Kstar) \distconv \varphi$, then by the second order delta method:
\begin{align*}
  N \cdot (J(\Kh(N)) - J_\star) \distconv (T-1) \sigma_w^2 \varphi^\T (I_n \otimes \Bstar^\T \Bstar) \varphi \:.
\end{align*}
Next we make an intermediate calculation:
\begin{align*}
  &\cvectwo{ -(I_n \otimes (\Bstar^{\dag})^\T)  }{ \Bstar^{\dag} \Astar \otimes (\Bstar^{\dag})^\T  } (I_n \otimes \Bstar^\T \Bstar)\rvectwo{ -(I_n \otimes \Bstar^{\dag})  }{ (\Bstar^{\dag} \Astar)^\T \otimes \Bstar^{\dag}  } \\
  &= \bmattwo{ I_n \otimes \Bstar\Bstar^{\dag} }{ -((\Bstar^{\dag} \Astar)^\T \otimes \Bstar\Bstar^{\dag})  }{ - (\Bstar^{\dag} \Astar \otimes \Bstar\Bstar^{\dag})  }{ \Bstar^{\dag} \Astar\Astar^\T (\Bstar^{\dag})^\T \otimes \Bstar\Bstar^{\dag}  } \\
  &= \bmattwo{ I_n }{ - (\Bstar^{\dag} \Astar)^\T }{ - \Bstar^{\dag} \Astar }{ \Bstar^{\dag} \Astar\Astar^\T (\Bstar^{\dag})^\T } \otimes \Bstar\Bstar^{\dag} \:.
\end{align*}

Let $Z_N := N \cdot ( J(\Kh(N)) - J_\star )$.
To conclude the proof, we show that the sequence $\{ Z_N \}$ is uniformly integrable.
Once we have the uniform integrability in place, then by Lemma~\ref{lem:lp_convergence}:
\begin{align*}
  &\lim_{N \to \infty} N \cdot ( J(\Kh(N)) - J_\star ) \\
  &\qquad= \sigma_w^4\frac{T-1}{T}  \Tr\left( \left(\bmattwo{P_\infty^{-1}}{0}{0}{(1/\sigma_u^2) I_d} \otimes I_n\right) \left( \bmattwo{ I_n }{ - (\Bstar^{\dag} \Astar)^\T }{ - \Bstar^{\dag} \Astar }{ \Bstar^{\dag} \Astar\Astar^\T (\Bstar^{\dag})^\T } \otimes \Bstar\Bstar^{\dag}   \right)    \right) + o_T(1) \\
  &\qquad= \sigma_w^4 \frac{T-1}{T} \Tr\left( \bmattwo{P_\infty^{-1}}{0}{0}{(1/\sigma_u^2) I_d} \bmattwo{ I_n }{ - (\Bstar^{\dag} \Astar)^\T }{ - \Bstar^{\dag} \Astar }{ \Bstar^{\dag} \Astar\Astar^\T (\Bstar^{\dag})^\T }\right) \Tr(\Bstar\Bstar^{\dag}) + o_T(1) \\
  &\qquad= \sigma_w^4 \frac{T-1}{T} \left(\Tr(P_\infty^{-1}) + \frac{\norm{ \Bstar^{\dag} \Astar }_F^2}{\sigma_u^2}  \right)d + o_T(1) \:.
\end{align*}
%We readily compute $P_\infty$ as:
%\begin{align*}
%  P_\infty = \sum_{\ell=0}^{\infty} \Astar^\ell (\sigma_u^2 \Bstar\Bstar^\T + \sigma_w^2 I_n) (\Astar^\ell)^\T \:.
%\end{align*}
%
%Plugging in $\Astar = \rho \Ustar\Ustar^\T$ and $\Bstar = \rho \Ustar$, we have:
%\begin{align*}
%  P_\infty = \frac{\sigma_u^2 \rho^2 + \sigma_w^2}{1-\rho^2} P_{\Ustar} + \sigma_w^2 P_{\Ustar}^\perp \Longrightarrow
%  P_\infty^{-1} = \frac{1-\rho^2}{\sigma_u^2 \rho^2 + \sigma_w^2} P_{\Ustar} + \frac{1}{\sigma_w^2} P_{\Ustar}^\perp \:.
%\end{align*}
%Hence we have:
%\begin{align*}
%  \lim_{N \to \infty} N \cdot ( J(\Kh(N)) - J_\star ) &= \sigma_w^4 \frac{T-1}{T} \left( \frac{1-\rho^2}{\sigma_u^2 \rho^2 + \sigma_w^2} d + \frac{n-d}{\sigma_w^2}  + \frac{d}{\sigma_u^2}  \right)d + o_T(1) \:,
%\end{align*}
%from which Theorem~\ref{thm:policy_opt_plugin_risk} follows.

To conclude the proof, let $C_\star, \rho_\star$ be such that
$\norm{\Astar^k} \leq C_\star \rho_\star^k$ with $\rho_\star \in (0, 1)$: these constants exist because $\Astar$ is stable.
Now define the events:
\begin{align*}
    \calE_{\mathsf{Alg}} &:= \{ \rho(\Ah(N)) \leq \varrho \:, \:\: \norm{\Ah(N)} \leq \zeta \:, \:\: \norm{\Bh(N)} \leq \psi \:, \:\: \sigma_{d}(\Bh(N)) \geq \gamma \} \:, \\
    \calE_{\mathsf{Bdd}} &:= \{ \norm{\Ah(N) - \Astar} \leq \frac{1-\rho_\star}{2C_\star} \:, \:\: \norm{\Bh(N) - \Bstar} \leq \sigma_{d}(\Bstar)/2 \} \:.
\end{align*}
Fix a finite $p \geq 1$.  We write:
\begin{align*}
  \E[Z_N^p] &= N^p \E[ (J(\Kh(N)) - J_\star)^p \ind_{\calE_{\mathsf{Bdd}}} ] + N^p \E[ (J(\Kh(N)) - J_\star)^p \ind_{\calE_{\mathsf{Bdd}}^c} ] \\
  &=N^p \E[ (J(\Kh(N)) - J_\star)^p \ind_{\calE_{\mathsf{Bdd}}} ] + N^p \E[ (J(\Kh(N)) - J_\star)^p \ind_{\calE_{\mathsf{Bdd}}^c \cap \calE_{\mathsf{Alg}}  } ] \\
    &\qquad+ N^p \E[ (J(\Kh(N)) - J_\star)^p \ind_{\calE_{\mathsf{Bdd}}^c \cap \calE_{\mathsf{Alg}}^c  } ] \\
  &=N^p \E[ (J(\Kh(N)) - J_\star)^p \ind_{\calE_{\mathsf{Bdd}}} ] + N^p \E[ (J(\Kh(N)) - J_\star)^p \ind_{\calE_{\mathsf{Bdd}}^c \cap \calE_{\mathsf{Alg}}  } ] \\
    &\qquad+ N^p (J(0) - J_\star)^p \Pr( \calE_{\mathsf{Bdd}}^c \cap \calE_{\mathsf{Alg}}^c ) \\
  &\leq N^p \E[ (J(\Kh(N)) - J_\star)^p \ind_{\calE_{\mathsf{Bdd}}} ] + N^p \E[ (J(\Kh(N)) - J_\star)^p \ind_{\calE_{\mathsf{Bdd}}^c \cap \calE_{\mathsf{Alg}}  } ] + N^p (J(0) - J_\star)^p \Pr( \calE_{\mathsf{Bdd}}^c ) \:.
\end{align*}
We now consider what happens on these three events.
For the remainder of the proof, we let $C$ denote a constant that depends on
$n, d, p,C_\star, \rho_\star, \varrho, \zeta, \psi, \gamma, \Astar, \Bstar, T, \varepsilon, \sigma_w^2, \sigma_u^2$ but not on $N$,
whose value can change from line to line.

\paragraph{On the event $\calE_{\mathsf{Bdd}}$.}
By a Taylor expansion we write:
\begin{align*}
    h(\Ah(N), \Bh(N)) - h(\Astar, \Bstar) &= [D_{(A,B)} h(\tilde{A}, \tilde{B})]\left( \rvectwo{\Ah(N) - \Astar}{\Bh(N) - \Bstar} \right) \:,
\end{align*}
where $\tilde{A} = t \Astar + (1-t) \Ah(N)$
and $\tilde{B} = t \Bstar + (1-t) \Bh(N)$ for some $t \in [0, 1]$.
Observe that on $\calE_{\mathsf{Bdd}}$, we have that
\begin{align*}
    \tilde{A}, \tilde{B} \in \mathcal{G} := \left\{ (A, B) : \norm{A} \leq \norm{\Astar} + \frac{1-\rho_\star}{2C_\star}  \:, \norm{B} \leq \norm{\Bstar} + \sigma_{d}(\Bstar)/2 \:,\sigma_d(B) \geq \sigma_{d}(\Bstar)/2 \right\} \:.
\end{align*}
By Lemma~\ref{lem:simple_perturbation}, each $(A, B) \in \mathcal{G}$ is stabilizable (since $A$ is stable)
and $B$ has full column rank.
Therefore by Lemma~\ref{lem:dare_existence}, for any $(A, B) \in \mathcal{G}$ we have that $\mathsf{dare}(A, B, I_n, 0)$
has a unique positive definite solution and its derivative is well defined.
By the compactness of $\mathcal{G}$ and the continuity of $h$ and its derivative, we define the finite constants
\begin{align*}
    C_K := \sup_{A,B \in \mathcal{G}} \norm{h(A, B)} \:, \:\: C_{\mathrm{deriv}} := \sup_{A, B \in \mathcal{G}} \norm{ [D_{(A,B)} h(A, B)] }  \:.
\end{align*}
We can now Taylor expand $J(K)$ around $\Kstar$ and obtain:
\begin{align*}
  J(\Kh(N)) - J_\star &= \frac{1}{2} \mathrm{Hess} J(\tilde{K})[\Kh(N) - \Kstar, \Kh(N) - \Kstar] \\
    &\leq \frac{1}{2} \left(\sup_{\norm{\Ktilde} \leq C_K + \norm{\Kstar}} \norm{\mathrm{Hess} J(\tilde{K})}\right) \norm{\Kh(N) - \Kstar}_F^2 \\
    &\leq \frac{d}{2} \left(\sup_{\norm{\Ktilde} \leq C_K + \norm{\Kstar}} \norm{\mathrm{Hess} J(\tilde{K})}\right) C_{\mathrm{deriv}}^2 (\norm{\Ah(N) - \Astar}^2 + \norm{\Bh(N) - \Bstar}^2) \:.
\end{align*}
Hence for $N$ sufficiently large, by Lemma~\ref{lem:moment_control_driven_ls} we have
\begin{align*}
  N^p \cdot \E[ (J(\Kh(N)) - J_\star)^p \ind_{\calE_{\mathsf{Bdd}}} ] &\leq C N^p ( \E[ \norm{\Ah(N) - \Astar}^{2p}] +  \E[ \norm{\Bh(N) - \Bstar}^{2p}]  ) \\
  &\leq C N^{p} ( \frac{1}{N^p} ) = C \:.
\end{align*}

\paragraph{On the event $\calE_{\mathsf{Bdd}}^c \cap \calE_{\mathsf{Alg}}$.}

In this case, we use the bounds given by $\calE_{\mathsf{Alg}}$ to bound the controller $\Kh(N)$.
Lemma~\ref{lem:dare_existence} ensures that the solution $\widehat{P} = \mathsf{dare}(\Ah(N), \Bh(N), I_n, 0)$
exists and satisfies $\widehat{P} \succeq I_n$.
Let the finite constant $C_P$ be $C_P := \sup_{\rho(A) \leq \varrho, \norm{A} \leq \zeta, \norm{B} \leq \psi, \sigma_{d}(B) \geq \gamma} \norm{\mathsf{dare}(A, B, I_n, 0)}$.
We can then bound $\norm{\Kh(N)}$ as follows. Dropping the indexing of $N$,
\begin{align*}
    \norm{\Kh} &= \norm{ (\Bh^\T \widehat{P} \Bh)^{-1} \Bh^\T \widehat{P} \Ah } \leq \frac{1}{\sigma_{\min}( \Bh^\T \widehat{P} \Bh )} \norm{\Bh^\T \widehat{P} \Ah} \leq \frac{C_P \psi \zeta}{\gamma^2}\:.
\end{align*}
Therefore:
\begin{align*}
    N^p \cdot \E[ (J(\Kh(N)) - J_\star)^p \ind_{\calE^c_{\mathsf{Bdd}} \cap \calE_{\mathsf{Alg}}} ]  \leq N^p \cdot \left( \sup_{\norm{K} \leq \frac{C_P \psi\zeta}{\gamma^2}   } (J(K) - J_\star)^p \right) \Pr(\calE^c_{\mathsf{Bdd}}) \leq C N^p \Pr(\calE^c_{\mathsf{Bdd}})  \:.
\end{align*}
By Lemma~\ref{lem:moment_control_driven_ls}, we can choose $N$
large enough
such that $\Pr(\calE_{\mathsf{Bdd}}^c) \leq 1/N^p$
so that
$N^p \cdot \E[ (J(\Kh(N)) - J_\star)^p \ind_{\calE^c_{\mathsf{Bdd}} \cap \calE_{\mathsf{Alg}}} ]  \leq C$.

\paragraph{On the event $\calE_{\mathsf{Bdd}}^c \cap \calE_{\mathsf{Alg}}^c$.}
This case is simple. We simply invoke Lemma~\ref{lem:moment_control_driven_ls}
to choose an $N$ large enough
such that $\Pr(\calE_{\mathsf{Bdd}}^c) \leq 1/(N(J(0) - J_\star))^p$.

\paragraph{Putting it together.}
If we take $N$ as the maximum over the three cases described above, we have hence shown that
for all $N$ greater than this constant:
\begin{align*}
  \E[ Z_N^p ] \leq C \:.
\end{align*}
This shows the desired uniform integrability condition for $Z_N$.
The asymptotic bound now follows from Lemma~\ref{lem:lp_convergence}.

\subsection{Proof of Theorem~\ref{thm:policy_opt_pg_risk}}
\label{sec:pg:policy_opt_pg}

The proof works by applying Lemma~\ref{lem:sgd_asymptotics}
with the function $F(\theta) = J_{\Sigma}(K)$ with $\Sigma = \sigma_u^2 \Bstar\Bstar^\T + \sigma_w^2 I_n$
and $G(\theta) = J(K)$.
We first need to verify the hypothesis of the lemma.
We define the convex domain $\Theta$ as $\Theta = \{ K \in \R^{d \times n} : \norm{K} \leq \zeta \}$.
Note that $\Kstar$ is in the interior of $\Theta$,
since we assume that $\norm{\Kstar} < \zeta$.
Recall that the policy gradient $g(K; \xi)$ is:
\begin{align*}
  g(K; \xi) = \frac{1}{\sigma_u^2} \sum_{t=1}^{T-1} \eta_t x_t^\T \Psi_t \:, \:\: \xi = (\eta_0, w_0, \eta_1, w_1, ..., \eta_{T-1}, w_{T-1}) \:.
\end{align*}
It is clear that $x_t$ is a polynomial in $(K, \xi)$. Furthermore, all three of the $\Psi_t$'s we study are
also polynomials in $(K, \xi)$. Hence $[D_K g(K; \xi)]$ is a matrix with entries that are polynomial in $(K, \xi)$.
Therefore, for every $\xi$, for all fixed $K_1, K_2 \in \Theta$,
\begin{align*}
    \norm{ g(K_1; \xi) - g(K_2; \xi) }_F \leq \sup_{K \in \Theta} \norm{[D_K g(K; \xi)]}_F \norm{K_1 - K_2}_F \:.
\end{align*}
Hence squaring and taking expectations,
\begin{align*}
    \E_\xi[ \norm{ g(K_1; \xi) - g(K_2; \xi) }_F^2 ] \leq \E_\xi\left[ \sup_{K \in \Theta} \norm{[D_K g(K; \xi)]}_F^2 \right] \norm{K_1 - K_2}_F^2 \:.
\end{align*}
We can now define the constant $L := \E_\xi\left[ \sup_{K \in \Theta} \norm{[D_K g(K; \xi)]}_F^2 \right]$.
To see that this quantity $L$ is finite, observe that $\norm{[D_K g(K; \xi)]}_F^2$ is a polynomial of $\xi$ with coefficients
given by $K$ (and $\Astar, \Bstar$). Since $K$ lives in a compact set $\Theta$, these coefficients are uniformly bounded
and hence the their moments are bounded.
In Section~\ref{sec:pg:calcs}, we showed that
the function $J_{\Sigma}(K)$ satisfies the $\mathsf{RSC}(m, \Theta)$ condition
with $m = 2 (T-1)\sigma_w^2 \sigma_{\min}(\Bstar)^2$.
Also it is clear that the high probability bound
on $\norm{g(K; \xi)}_F$ can be achieved by standard Gaussian concentration results.
Hence by Lemma~\ref{lem:sgd_asymptotics}, and in particular Equation~\ref{eq:cost_lower_bound_sgd},
\begin{align}
  \liminf_{N \to \infty} N \cdot \E[ J(\Kh) - J_\star ] &\geq \frac{ \E_{\xi}[ \norm{g(\Kstar; \xi)}_F^2 ]}{8 (T-1) \sigma_w^2 \sigma_{\min}(\Bstar)^2 \lambda_{\max}( (\nabla^2 J(\Kstar))^{-1} ( \nabla^2 J_{\Sigma}(\Kstar) - \frac{m}{2} I_{nd} ))   } \nonumber \\
    &= \frac{\E_{\xi}[ \norm{g(\Kstar; \xi)}_F^2 ]}{ 8 (T-1) \sigma_{\min}(\Bstar)^2 (\sigma_w^2 + \sigma_u^2 \norm{\Bstar}^2) } \label{eq:pg_expr} \:.
\end{align}
Above, the inequality holds since
we have that,
\begin{align*}
  \nabla^2 J(\Kstar) &= 2(T-1) (\sigma_w^2I_n \otimes \Bstar^\T \Bstar) \:, \\
  \nabla^2 J_{\Sigma}(\Kstar) &= 2(T-1)( (\sigma_w^2 I_n + \sigma_u^2 \Bstar\Bstar^\T) \otimes \Bstar^\T \Bstar) = \nabla^2 J(K_\star) + 2(T-1)\sigma_u^2 (\Bstar\Bstar^\T \otimes \Bstar^\T \Bstar) \:,
\end{align*}
and therefore,
\begin{align*}
  (\nabla^2 J(\Kstar))^{-1} ( \nabla^2 J_{\Sigma}(\Kstar) - \frac{m}{2} I_{nd} ) &= I_{nd} + \frac{\sigma_u^2}{\sigma_w^2} (\Bstar\Bstar^\T \otimes I_d) - \frac{\sigma_{\min}(\Bstar)^2}{2} (I_n \otimes (\Bstar^\T \Bstar)^{-1})  \\
  &\preceq I_{nd} + \frac{\sigma_u^2}{\sigma_w^2} (\Bstar\Bstar^\T \otimes I_d) \:.
\end{align*}

The remainder of the proof is to estimate
the quantity $\E_\xi[ \norm{g(\Kstar; \xi)}^2_F ]$.
Note that at $K = \Kstar$, $x_t = \Bstar \eta_{t-1} + w_{t-1}$ since the dynamics are cancelled out.
Define $c_{t \to T} := \sum_{\ell=t}^{T} \norm{x_\ell}_2^2$.
At $K = \Kstar$, we have $c_{t \to T} = \sum_{\ell=t-1}^{T-1} \norm{\Bstar \eta_\ell + w_\ell}_2^2$.
Observe that we have for $t_2 > t_1$, for any $h$ that depends on only
$(\eta_{t_1}, w_{t_1}, \eta_{t_1+1}, w_{t_1+1}, ...)$:
\begin{align*}
  \E[\ip{\eta_{t_1}}{\eta_{t_2}} \ip{x_{t_1}}{x_{t_2}} h] &= \E[ \ip{\eta_{t_1}}{\eta_{t_2}} ( \ip{\Bstar \eta_{t_1-1}}{\Bstar \eta_{t_2-1}} + \ip{w_{t_1-1}}{w_{t_2-1}} \\
  &\qquad+ \ip{\Bstar \eta_{t_1-1}}{w_{t_2-1}} + \ip{\Bstar \eta_{t_2-1}}{w_{t_1-1}} ) h] \\
    &= 0 \:.
\end{align*}
As a consequence, we have that as long as $\Psi_t$ only depends on $(\eta_t, w_t, \eta_{t+1}, w_{t+1},...)$:
\begin{align*}
  \E[\norm{g(K; \xi)}_F^2 ] &= \frac{1}{\sigma_u^4} \sum_{t=1}^{T-1} \E[\norm{\eta_t}^2_2 \norm{x_t}^2_2 \Psi_t^2 ] + \frac{2}{\sigma_u^4} \sum_{t_2 > t_1 = 1}^{T-1} \E[ \ip{\eta_{t_1}}{\eta_{t_2}} \ip{x_{t_1}}{x_{t_2}} \Psi_{t_1} \Psi_{t_2} ] \\
  &= \frac{1}{\sigma_u^4} \sum_{t=1}^{T-1} \E[\norm{\eta_t}^2_2 \norm{x_t}^2_2 \Psi_t^2 ] \:.
\end{align*}

\subsubsection{Simple baseline}

Recall that the simple baseline is to set $b_t(x_t; K) = \norm{x_t}_2^2$. Hence, the policy gradient
estimate simplifies to
$g(K;\xi) = \frac{1}{\sigma_u^2} \sum_{t=1}^{T-1} \eta_t x_t^\T c_{t+1 \to T}$.
Since we have that $c_{t+1 \to T}$ at optimality only depends only on $(\eta_t, w_t, \eta_{t+1}, w_{t+1}, ...)$,
we compute $\E[\norm{g(\Kstar; \xi)}_F^2]$ as follows:
\begin{align*}
  &\E[ \norm{g(\Kstar; \xi)}^2_F ] = \frac{1}{\sigma_u^4} \sum_{t=1}^{T-1} \E[ \norm{\eta_t}_2^2 \norm{x_t}_2^2 c_{t+1 \to T}^2 ] \\
    &\qquad= \frac{1}{\sigma_u^4} \sum_{t=1}^{T-1} \E\left[ \norm{\eta_t}_2^2 \norm{\Bstar \eta_{t-1} + w_{t-1}}_2^2 \left( \sum_{\ell=t}^{T-1} \norm{\Bstar \eta_\ell + w_\ell}_2^4 + 2 \sum_{\ell_2 > \ell_1=t}^{T-1} \norm{\Bstar \eta_{\ell_1} + w_{\ell_1}}_2^2 \norm{\Bstar \eta_{\ell_2} + w_{\ell_2}}_2^2 \right) \right] \\
    &\qquad= \frac{1}{\sigma_u^4} \sum_{t=1}^{T-1} \E[ \norm{\Bstar \eta_{t-1} + w_{t-1}}_2^2 \norm{\eta_t}_2^2 \norm{\Bstar \eta_t + w_t}_2^4 ] \\
    &\qquad\qquad + \frac{1}{\sigma_u^4} \sum_{t=1}^{T-1} \sum_{\ell=t+1}^{T-1} \E[ \norm{\Bstar \eta_{t-1} + w_{t-1}}_2^2 \norm{\eta_t}_2^2 \norm{\Bstar \eta_\ell + w_\ell}_2^4 ] \\
    &\qquad\qquad + \frac{2}{\sigma_u^4} \sum_{t=1}^{T-1} \sum_{\ell_2>t}^{T-1} \E[ \norm{\Bstar \eta_{t-1} + w_{t-1}}_2^2 \norm{\eta_t}_2^2 \norm{\Bstar \eta_t + w_t}_2^2 \norm{\Bstar \eta_{\ell_2} + w_{\ell_2}}_2^2 ] \\
    &\qquad\qquad + \frac{2}{\sigma_u^4} \sum_{t=1}^{T-1} \sum_{\ell_2 > \ell_1=t+1}^{T-1} \E[ \norm{\Bstar \eta_{t-1} + w_{t-1}}_2^2 \norm{\eta_t}_2^2 \norm{\Bstar \eta_{\ell_1} + w_{\ell_1}}_2^2 \norm{\Bstar \eta_{\ell_2} + w_{\ell_2}}_2^2 ] \\
    &\qquad= \frac{2}{\sigma_u^4} \sum_{t=1}^{T-1} \sum_{\ell_2 > \ell_1=t+1}^{T-1} \E[ \norm{\Bstar \eta_{t-1} + w_{t-1}}_2^2 \norm{\eta_t}_2^2 \norm{\Bstar \eta_{\ell_1} + w_{\ell_1}}_2^2 \norm{\Bstar \eta_{\ell_2} + w_{\ell_2}}_2^2 ] + o(T^3) \\
    &\qquad= \frac{2}{\sigma_u^4} \sum_{t=1}^{T-1} \sum_{\ell_2 > \ell_1=t+1}^{T-1} \sigma_u^2 d ( \E[ \norm{\Bstar \eta_0 + w_0}_2^2 ])^3 + o(T^3) \\
    &\qquad\asymp T^3 \frac{1}{\sigma_u^2} d( \sigma_u^2 \norm{\Bstar}_F^2 + \sigma_w^2 n )^3 + o(T^3) \:.
\end{align*}

\subsubsection{Value function baseline}

Recall that the value function at time $t$ for a particular policy $K$ is defined as:
\begin{align*}
  V^K_t(x) = \E\left[ \sum_{\ell=t}^{T} \norm{x_\ell}_2^2 \bigg| x_t = x \right] \:.
\end{align*}
We now consider policy gradient with the value function baseline $b_t(x_t; K) = V^K_t(x_t)$:
\begin{align*}
  g(K;\xi) = \frac{1}{\sigma_u^2} \sum_{t=1}^{T-1} \eta_t x_t^\T (c_{t \to T} - V_t^K(x_t)) \:.
\end{align*}
Recalling that under $\Kstar$ the dynamics are cancelled out, we readily compute:
\begin{align*}
  V_t^{\Kstar}(x) = \norm{x}_2^2 + (T-t)(\sigma_u^2 \norm{\Bstar}_F^2 + \sigma_w^2 n) \:.
\end{align*}
Therefore:
\begin{align*}
  g(\Kstar; \xi) = \frac{1}{\sigma_u^2} \sum_{t=1}^{T-1} \eta_t x_t^\T (c_{{t+1} \to T} - (T-t)(\sigma_u^2\norm{\Bstar}_F^2 + \sigma_w^2 n)) \:.
\end{align*}
Define $\beta := \sigma_u^2 \norm{\Bstar}_F^2 + \sigma_w^2 n$.
We compute the variance as:
\begin{align*}
  &\E[\norm{g(\Kstar; \xi)}_F^2] = \frac{1}{\sigma_u^4} \sum_{t=1}^{T-1} \E\Bigg[ \norm{\eta_t}_2^2 \norm{\Bstar \eta_{t-1} + w_{t-1}}_2^2 \\
  &\qquad\qquad \times\left( \sum_{\ell=t}^{T-1} (\norm{\Bstar \eta_\ell + w_\ell}_2^2 - \beta)^2 + 2 \sum_{\ell_2 > \ell_1=t}^{T-1} (\norm{\Bstar \eta_{\ell_1} + w_{\ell_1}}_2^2 - \beta)( \norm{\Bstar \eta_{\ell_2} + w_{\ell_2}}_2^2 - \beta) \right) \Bigg] \\
  &\qquad= \frac{1}{\sigma_u^4} \sum_{t=1}^{T-1} \E[ \norm{\eta_t}_2^2 \norm{\Bstar \eta_{t-1} + w_{t-1}}_2^2 (\norm{\Bstar \eta_t + w_t}_2^2 - \beta)^2 ] \\
  &\qquad\qquad + \frac{1}{\sigma_u^4} \sum_{t=1}^{T-1} \sum_{\ell=t+1}^{T-1}\E[ \norm{\eta_t}_2^2 \norm{\Bstar \eta_{t-1} + w_{t-1}}_2^2 (\norm{\Bstar \eta_\ell + w_\ell}_2^2 - \beta)^2 ] \\
  &\qquad= \frac{1}{\sigma_u^4} \sum_{t=1}^{T-1} \sum_{\ell=t+1}^{T-1}\E[ \norm{\eta_t}_2^2 \norm{\Bstar \eta_{t-1} + w_{t-1}}_2^2 (\norm{\Bstar \eta_\ell + w_\ell}_2^2 - \beta)^2 ] + o(T^2) \\
  &\qquad\asymp T^2 \frac{d}{\sigma_u^2} (\sigma_u^2 \norm{\Bstar}_F^2 + \sigma_w^2 n)( \E[ \norm{\Bstar \eta_\ell + w_\ell}_2^4 ] - \beta^2 ) + o(T^2) \\
  &\qquad\stackrel{(a)}{\asymp} T^2 \frac{d}{\sigma_u^2} (\sigma_u^2 \norm{\Bstar}_F^2 + \sigma_w^2 n)( \sigma_u^4 \norm{\Bstar^\T \Bstar}_F^2 + \sigma_w^4 n + \sigma_w^2 \sigma_u^2 \norm{\Bstar}_F^2 )  + o(T^2) \:,
\end{align*}
Above, (a) follows because:
\begin{align*}
  \E[ \norm{\Bstar \eta_\ell + w_\ell}_2^4 ] &= 2 (\sigma_u^4 \norm{\Bstar^\T \Bstar}_F^2 + \sigma_w^4 n + 2 \sigma_w^2\sigma_u^2 \norm{\Bstar}_F^2 ) + (\sigma_u^2 \norm{\Bstar}_F^2 + \sigma_w^2 n)^2 \:.
\end{align*}

\subsubsection{Ideal advantage baseline}

Let us first compute $Q_t^{\Kstar}(x_t, u_t)$.
Under $\Kstar$, $x_{\ell+1} = \Bstar \eta_\ell + w_\ell$. So we have:
\begin{align*}
  Q_t^{\Kstar}(x_t, u_t) &= \norm{x_t}_2^2 + \E_{w_t}[\norm{\Astar x_t + \Bstar u_t + w_t}_2^2] + (T - t - 1)( \sigma_u^2 \norm{\Bstar}_F^2 + \sigma_w^2 n) \\
    &= \norm{x_t}_2^2 + \norm{ \Astar x_t + \Bstar u_t}_2^2 + \sigma^2_w n + (T - t - 1)( \sigma_u^2 \norm{\Bstar}_F^2 + \sigma_w^2 n) \:.
\end{align*}
Recalling that $V_t^{\Kstar}(x) = \norm{x}_2^2 + (T-t)(\sigma_u^2 \norm{\Bstar}_F^2 + \sigma_w^2 n)$,
\begin{align*}
  A_t^{\Kstar}(x_t, u_t) &= Q_t^{\Kstar}(x_t, u_t) - V_t^{\Kstar}(x_t) = \norm{ \Astar x_t + \Bstar u_t}_2^2 - \sigma_u^2 \norm{\Bstar}_F^2  \:.
\end{align*}
Therefore, if $u_t = \Kstar x_t + \eta_t$,
we have $A_t^{\Kstar}(x_t, u_t) = \norm{\Bstar \eta_t}_2^2 -  \sigma_u^2 \norm{\Bstar}_F^2$.
Since $A_t^{\Kstar}(x_t, u_t)$ depends only on $\eta_t$,
\begin{align*}
  \E[\norm{g(\Kstar; \xi)}_F^2 ] &= \frac{1}{\sigma_u^4} \sum_{t=1}^{T-1} \E[ \norm{\eta_t}_2^2 \norm{x_t}_2^2 (\norm{\Bstar \eta_t}_2^2 - \sigma_u^2 \norm{\Bstar}_F^2)^2 ] \\
    &= \frac{1}{\sigma_u^4} (T-1) (\sigma_u^2 \norm{\Bstar}_F^2 + \sigma_w^2 n) \E[ \norm{\eta_1}_2^2 (\norm{\Bstar\eta_1}_2^2 - \sigma_u^2 \norm{\Bstar}_F^2)^2 ] \:.
\end{align*}
We have that
$\E[\norm{\eta_1}_2^2] = \sigma_u^2 d$,
$\E[ \norm{\Bstar \eta_1}_2^2\norm{\eta_1}_2^2 ] = \sigma_u^4 (d+2)\norm{\Bstar}_F^2$,
and $\E[\norm{\Bstar\eta_1}_2^4 \norm{\eta_1}_2^2  ] = \sigma_u^6 ( (d+4)\norm{\Bstar}_F^4 + (2d+8) \norm{\Bstar^\T \Bstar}_F^2)$
(this can be computed using Lemma~\ref{lem:sixth_moment}).
Hence,
\begin{align*}
  &\E[ \norm{\eta_1}_2^2 (\norm{\Bstar \eta_1}_2^2 - \sigma_u^2 \norm{\Bstar}_F^2 )^2 ] \\
  &\qquad=\E[ \norm{\Bstar \eta_1}_2^4 \norm{\eta_1}_2^2 + \sigma_u^4 \norm{\Bstar}_F^4 \norm{\eta_1}_2^2 - 2 \sigma_u^2 \norm{\Bstar}_F^2 \norm{\Bstar \eta_1}_2^2 \norm{\eta_1}_2^2 ] \\
  &\qquad=\sigma_u^6 ( (d+4)\norm{\Bstar}_F^4 + (2d+8) \norm{\Bstar^\T \Bstar}_F^2) + \sigma_u^6 \norm{\Bstar}_F^4 d - 2 \sigma_u^6\norm{\Bstar}_F^4 (d+2) \\
  &\qquad= \sigma_u^6 (2d+8) \norm{\Bstar^\T \Bstar}_F^2 \:.
\end{align*}
Therefore,
\begin{align*}
  \E[\norm{g(\Kstar; \xi)}_F^2] &\asymp T (\sigma_u^2 \norm{\Bstar}_F^2 + \sigma_w^2 n) \sigma_u^2 d \norm{\Bstar^\T \Bstar}_F^2 \:.
\end{align*}

\subsubsection{Putting it together}

Combining Equation~\eqref{eq:pg_expr}
with the calculations for $\E_\xi[ \norm{g(\Kstar; \xi)}^2_F ]$, we obtain:
\begin{align*}
  &\liminf_{N \to \infty} N \cdot \E[J(\Khpg(N)) - J_\star] \gtrsim \frac{1}{\sigma_{d}(\Bstar)^2 (\sigma_w^2 + \sigma_u^2 \norm{\Bstar}^2)} \times \\
    &\qquad \begin{cases}
        T^2 \frac{d}{\sigma_u^2} (\sigma_u^2 \norm{\Bstar}_F^2 + \sigma_w^2 n)^3 + o(T^2) &\text{ (Simple baseline)} \\
        T \frac{d}{\sigma_u^2} (\sigma_u^2 \norm{\Bstar}_F^2 + \sigma_w^2 n)( \sigma_u^4 \norm{\Bstar^\T \Bstar}_F^2 + \sigma_w^4 n + \sigma_w^2 \sigma_u^2 \norm{\Bstar}_F^2 ) + o(T) &\text{ (Value function baseline)} \\
        (\sigma_u^2 \norm{\Bstar}_F^2 + \sigma_w^2 n) \sigma_u^2 d \norm{\Bstar^\T \Bstar}_F^2 &\text{ (Advantage baseline)} \\
    \end{cases} \:,
\end{align*}
%
%\begin{align*}
%  &\liminf_{N \to \infty} N \cdot \E[J(\Khpg(N)) - J_\star] \gtrsim \\
%  &\qquad\begin{cases}
%      T^2 \cdot \left(d^4 \frac{\rho^4 \sigma_u^4}{\sigma_w^2 + \rho^2 \sigma_u^2} + n^3 d \frac{\sigma_w^6}{\rho^2 \sigma_u^2(\sigma_w^2 + \rho^2 \sigma_u^2)}\right) + o(T^2) &\text{ (Simple baseline)} \\
%        T \cdot \frac{d (\rho^2 \sigma_u^2 d + \sigma_w^2 n) (\rho^4 \sigma_u^4 d + \sigma_w^2 n + \rho^2 \sigma_w^4 d) }{\rho^2 \sigma_u^2 (\sigma_w^2 + \rho^2 \sigma_u^2)} + o(T) &\text{ (Value function baseline)} \\
%        d^3 \frac{\rho^4}{\sigma_w^2 + \rho^2 \sigma_u^2} + nd^2 \frac{\rho^2 \sigma_w^2}{\sigma_u^2(\sigma_w^2 + \rho^2 \sigma_u^2)} &\text{ (Advantage baseline)} \\
%    \end{cases} \:,
%\end{align*}
from which Theorem~\ref{thm:policy_opt_pg_risk} follows.

\subsection{Proof of Theorem~\ref{thm:policy_opt_lower_bound}}

Our proof is inspired by lower bounds for the query complexity of
derivative-free optimization of stochastic optimization
(see e.g.~\citet{jamieson12}).

Recall from \eqref{eq:pg:quadratic_growth} that the function $J(K)$ satisfies the quadratic growth condition
$J(K) - J_\star \geq (T-1) \rho^2 \sigma_w^2 \norm{K - \Kstar}_F^2$.
Therefore for any $\vartheta > 0$,
\begin{align*}
    &\inf_{\Kh} \sup_{(\Astar, \Bstar) \in \mathscr{G}(\rho, d)} \E[ J(\Kh) - J_\star ] \\
    &\qquad\geq \inf_{\Kh} \sup_{(\Astar, \Bstar) \in \mathscr{G}(\rho, d)} \: (T-1) \rho^2 \sigma_w^2 \vartheta^2 \cdot \Pr( J(\Kh) - J_\star \geq (T-1) \rho^2 \sigma_w^2 \vartheta^2 ) \\
  &\qquad\geq \inf_{\Kh} \sup_{(\Astar, \Bstar) \in \mathscr{G}(\rho, d)} \: (T-1) \rho^2 \sigma_w^2 \vartheta^2 \cdot \Pr( (T-1) \rho^2 \sigma_w^2 \norm{(-\Ustar^\T) - \Kh}_F^2 \geq (T-1) \rho^2 \sigma_w^2 \vartheta^2 ) \\
  &\qquad= \inf_{\Kh} \sup_{(\Astar, \Bstar) \in \mathscr{G}(\rho, d)} \: (T-1) \rho^2 \sigma_w^2 \vartheta^2 \cdot \Pr( \norm{(-\Ustar^\T) - \Kh}_F \geq \vartheta ) \:.
\end{align*}
Above, the first inequality is Markov's inequality and the second is the quadratic growth condition.

We first state a result regarding the packing number of $O(n, d)$, which we define as:
\begin{align*}
    O(n, d) := \{ U \in \R^{n \times d} : U^\T U = I_d \} \:.
\end{align*}
\begin{lem}
\label{lem:packing}
Let $\delta > 0$, and suppose that $d \leq n/2$.
We have that the packing number $M$ of $O(n, d)$ in the Frobenius norm $\norm{\cdot}_F$ satisfies
\begin{align*}
  M(O(n, d), \norm{\cdot}_F, \delta d^{1/2}) \geq \left(\frac{c}{\delta}\right)^{d(n-d)} \:,
\end{align*}
where $c > 0$ is a universal constant.
\end{lem}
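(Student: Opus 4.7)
The plan is to reduce the packing of $V_{n,d} = O(n,d)$ in the Frobenius distance to a packing of the Grassmannian $G(n,d)$ in the chordal (projection) metric $d_c(E,F) = \norm{P_E - P_F}_F$. The Grassmannian has the correct intrinsic dimension $d(n-d)$, matching the exponent in the target bound, whereas $V_{n,d}$ itself has dimension $d(n-d) + d(d-1)/2$, so a naive volume argument on $V_{n,d}$ would lose a factor of $d$ in the rate. The proof then splits into (i)~establishing a Grassmannian packing at scale $\delta\sqrt{d}$, and (ii)~showing that arbitrary orthonormal lifts of the packed subspaces automatically form a Stiefel packing at the same scale.

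Concretely, suppose we have a Grassmannian packing $E_1, \ldots, E_M$ of $d$-dimensional subspaces of $\R^n$ with $\norm{P_{E_i} - P_{E_j}}_F \geq \sqrt{2}\,\delta\sqrt{d}$ and $M \geq (c/\delta)^{d(n-d)}$. For each $E_i$, fix an arbitrary orthonormal basis and let $U_i \in V_{n,d}$ denote the resulting Stiefel matrix. The two key lift-invariant identities are
\[
    \norm{U_i - U_j}_F^2 = 2d - 2\Tr(U_i^\T U_j), \qquad \norm{U_i^\T U_j}_F^2 = \Tr(P_{E_i} P_{E_j}) = d - \tfrac{1}{2}\norm{P_{E_i} - P_{E_j}}_F^2.
\]
Applying the chain $|\Tr(A)| \leq \norm{A}_* \leq \sqrt{d}\,\norm{A}_F$ with $A = U_i^\T U_j \in \R^{d \times d}$, combined with the elementary inequality $1 - \sqrt{1-x} \geq x/2$ for $x \in [0,1]$, yields
\[
    \norm{U_i - U_j}_F^2 \geq 2d\Bigl(1 - \sqrt{1 - \tfrac{1}{2d}\norm{P_{E_i} - P_{E_j}}_F^2}\Bigr) \geq \tfrac{1}{2}\norm{P_{E_i} - P_{E_j}}_F^2 \geq \delta^2 d,
\]
so that $\{U_i\}$ is automatically a $\delta\sqrt{d}$-packing of $V_{n,d}$ of the required cardinality, independent of how the individual lifts are chosen.

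The main obstacle is therefore step (i): producing the Grassmannian packing at arbitrary scale $\delta$. Proposition~8 of \citet{pajor98}, already cited in the proof of Theorem~\ref{thm:policy_eval_lower_bound}, only handles the constant-$\delta$ regime, so the technical content is to upgrade this to a scale-dependent bound. This is a standard volume estimate on $G(n,d)$ equipped with its Haar probability measure: the chordal ball of radius $r$ around a fixed subspace has normalized volume of order $(r/\sqrt{d})^{d(n-d)}$ for $r$ up to a constant multiple of $\sqrt{d}$, so the usual volume-to-packing inequality gives the claim. One could either cite Szarek's classical estimates on covering numbers of the Grassmannian, or derive the volume bound directly by working in a local coordinate chart (e.g.\ parameterizing subspaces near a fixed $E_0$ by matrices $X \in \R^{(n-d) \times d}$ of bounded operator norm). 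This is the only non-trivial analytic input needed; everything else follows from the elementary trace--nuclear--Frobenius inequalities used above.
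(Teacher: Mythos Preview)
Your approach is correct and follows the same route as the paper: build a Grassmannian packing in the chordal metric, then lift it to $O(n,d)$. Two remarks are worth making.

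First, your lift step is more elaborate than necessary. The paper simply writes $P_E - P_F = E_1(E_1 - F_1)^\T + (E_1 - F_1)F_1^\T$ and concludes $\norm{P_E - P_F}_F \leq 2\norm{E_1 - F_1}_F$ since $\norm{E_1} = \norm{F_1} = 1$. This one-line inequality immediately transfers any $\varepsilon$-packing of $G_{n,d}$ in $\norm{P_E - P_F}_F$ to an $\varepsilon/2$-packing of $O(n,d)$ in $\norm{\cdot}_F$, with no need for the nuclear-norm/Frobenius chain or the $1 - \sqrt{1-x} \geq x/2$ step. Your argument is of course fine, but it is doing more work than the problem requires.

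Second, and more importantly, you misread the scope of Pajor's result. Proposition~8 of \citet{pajor98} already gives the scale-dependent covering bound $N(G_{n,d}, \rho, \delta d^{1/2}) \geq (c/\delta)^{d(n-d)}$ for general $\delta$; the earlier usage in the proof of Theorem~\ref{thm:policy_eval_lower_bound} happens to invoke it only at a fixed scale, but that is a feature of that application, not a limitation of the proposition. So your ``main obstacle'' (i) is not an obstacle at all---the paper simply cites Pajor, observes that packing numbers dominate covering numbers, and is done. The volume argument you sketch would re-derive what Pajor already provides.
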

\begin{proof}
Let $G_{n, d}$ denote the Grassman manifold of $d$-dimensional subspaces of $\R^n$.
For two subspaces $E, F \in G_{n, d}$, equip $G_{n, d}$ with the metric
$\rho(E, F) = \norm{P_E - P_F}_F$, where $P_E, P_F$ are the projection matrices onto $E,F$ respectively.
Proposition 8 of \citet{pajor98} tells us
that the covering number $N(G_{n,d}, \rho, \delta d^{1/2}) \geq \left( \frac{c}{\delta} \right)^{d(n-d)}$.
But since $M(G_{n,d}, \rho, \delta d^{1/2}) \geq N(G_{n,d}, \rho, \delta d^{1/2})$, this gives us a lower bound
on the packing number of $G_{n, d}$.
Now for every $E \in G_{n, d}$ we can associate a matrix $E_1 \in O(n, d)$
such that $\Span(E_1) = E$. The projector $P_E$ is simply $P_E = E_1E_1^\T$.
Now let $E, F \in G_{n, d}$ and observe the inequality,
\begin{align*}
  \norm{P_E - P_F}_F = \norm{ E_1E_1^\T - F_1F_1^\T }_F \leq 2 \norm{E_1 - F_1}_F \:.
\end{align*}
Hence a packing of $G_{n, d}$ also yields a packing of $O(n, d)$ up to constant factors.
\end{proof}

Now letting $U_1, ..., U_M$ be a $2\vartheta$-separated set we have
by the standard reduction to multiple hypothesis testing that
that the risk is lower bounded by:
\begin{align}
    (T-1) \rho^2 \sigma_w^2 \vartheta^2 \cdot \inf_{\widehat{V}} \Pr( \widehat{V} \neq V ) &\geq (T-1) \rho^2 \sigma_w^2 \vartheta^2 \cdot \left( 1 - \frac{I(V; Z) + \log{2}}{\log{M}} \right) \:. \label{eq:fano}
\end{align}
where $V$ is a uniform index over $\{1, ..., M\}$ and the inequality is Fano's inequality.

Now we can proceed as follows.
First, we let $U_1, ..., U_M$ be elements of $O(n, d)$ that form a
$2\vartheta \asymp \sqrt{d}$ packing in the $\norm{\cdot}_F$ norm. We know we can
let $M \geq e^{d(n-d)}$ by Lemma~\ref{lem:packing}.
Each $U_i$ induces a covariance $\Sigma_i = \sigma_w^2 I_n + \rho^2 \sigma_u^2 U_iU_i^\T \preceq (\sigma_w^2 + \rho^2 \sigma_u^2) I_n$.
Furthermore, the closed-loop $L_i$ given by playing a feedback matrix $K$ that satisfies $\norm{K} \leq 1$ is:
\begin{align*}
  L_i = \rho U_i(U_i + K^\T)^\T \:.
\end{align*}
It is clear that $\norm{L_i} \leq 2 \rho$ and hence if $\rho < 1/2$ then this system is stable.
Furthermore, we have that $\rank(L_i) \leq d$.
With this, we can control:
\begin{align*}
    \Tr(\E[ x_t x_t^\T ]) &= \Tr\left(\sum_{\ell=0}^{t-1} L_i^\ell \Sigma_i (L_i^\ell)^\T\right) \leq (\sigma_w^2 + \rho^2\sigma_u^2) \sum_{\ell=0}^{t-1} \norm{L_i^\ell}_F^2 \\
    &\leq d(\sigma_w^2 + \rho^2\sigma_u^2)  \sum_{\ell=0}^{t-1} \norm{L_i^\ell}^2 \leq  \frac{ d(\sigma_w^2 + \rho^2\sigma_u^2) }{1 - (2\rho)^2} \:.
\end{align*}
Hence for one trajectory $Z = (x_0, u_0, x_1, u_1, ..., x_{T-1}, u_{T-1}, x_T)$, conditioned on a particular $K$,
\begin{align*}
    \KL(\Pr_{i|K}, \Pr_{j|K}) &\leq \sum_{t=0}^{T-1} \frac{1}{2\sigma_w^2} \E_{x_t \sim \Pr_{i|K}}[\norm{(L_i - L_j) x_t}^2] \\
  &\leq \frac{8 \rho^2}{\sigma_w^2} \sum_{t=0}^{T-1} \Tr(\E[x_tx_t^\T]) \\
  &\leq \frac{8 (\sigma_w^2 + \rho^2\sigma_u^2)\rho^2 T d}{\sigma_w^2 (1-(2\rho)^2)} \:.
\end{align*}
This allows us to bound the KL between the distributions involving all the iterations as:
\begin{align*}
    \KL(\Pr_{i}, \Pr_{j}) = \sum_{\ell=1}^{N} \E_{K_\ell \sim \Pr_i}[ \KL(\Pr_{i|K_\ell}, \Pr_{j|K_\ell}) ] \leq \frac{8 (\sigma_w^2 + \rho^2\sigma_u^2)\rho^2 N T d}{\sigma_w^2 (1-(2\rho)^2)} \:.
\end{align*}
Assuming $d(n-d)$ is greater than an absolute constant, we can set $\rho$ to be
(recall we have $N$ different rollouts):
\begin{align*}
    \rho^2 \asymp \frac{\sigma_w^2}{\sigma_w^2 + \sigma_u^2} \frac{n-d}{T N} \:,
\end{align*}
and bound $\frac{I(V;Z) + \log{2}}{\log{M}} \leq 1/2$.
The result now follows from plugging in our choice of $\rho$ into
\eqref{eq:fano}.

\section{Deferred Proofs for Asymptotic Toolbox}
\label{sec:appendix:toolbox_proofs}

Our main limit theorem is the following CLT for ergodic Markov chains.
\begin{thm}[Corollary 2 of \citet{jones04}]
\label{thm:markov_chain_CLT}
Suppose that $\{x_t\}_{t = 0}^{\infty} \subseteq X$ is a geometrically ergodic (Harris) Markov chain with stationary distribution $\pi$.
Let $f : X \rightarrow \R$ be a Borel function. Suppose that $\E_{\pi}[ \abs{f}^{2+\delta} ] < \infty$
for some $\delta > 0$. Then for any initial distribution, we have:
\begin{align*}
    \sqrt{n} \left( \frac{1}{n} \sum_{i=1}^{n} f(x_i) - \E_{\pi}[f(x)] \right) \distconv \calN(0, \sigma_f^2) \:,
\end{align*}
where
\begin{align*}
    \sigma_f^2 := \Var_{\pi}(f(x_0)) + 2 \sum_{i=1}^{\infty} \Cov_{\pi}( f(x_0), f(x_i) ) \:.
\end{align*}
\end{thm}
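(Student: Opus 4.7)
The plan is to reduce this CLT to the classical martingale CLT via Gordin's martingale approximation. Without loss of generality assume $\E_\pi[f] = 0$, and let $P$ denote the Markov transition operator. Geometric ergodicity implies that $\norm{P^k f}_{L^2(\pi)}$ decays geometrically in $k$, so the series $\hat f := \sum_{k=0}^{\infty} P^k f$ converges absolutely in $L^2(\pi)$, and with some additional care leveraging the $(2+\delta)$-moment hypothesis, in $L^{2+\delta}(\pi)$ as well. By construction $\hat f$ solves the Poisson equation $\hat f - P \hat f = f$.

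Next I would define $M_t := \hat f(x_t) - (P\hat f)(x_{t-1})$, which under $\pi$ is a stationary martingale difference sequence relative to the natural filtration $\mathcal{F}_t := \sigma(x_0, \ldots, x_t)$. The Poisson equation yields the telescoping identity
\begin{align*}
    \sum_{t=1}^{n} f(x_t) = \sum_{t=1}^{n} M_t + (P\hat f)(x_0) - (P\hat f)(x_n),
\end{align*}
so after dividing by $\sqrt{n}$ the boundary terms are $o_P(1)$ and the partial sum is asymptotically equivalent to $n^{-1/2} \sum_t M_t$. A standard stationary-ergodic martingale CLT applied to $\{M_t\}$ --- the conditional variance $n^{-1} \sum_t \E[M_t^2 \mid \mathcal{F}_{t-1}]$ converges to $\E_\pi[M_1^2]$ by the pointwise ergodic theorem, and the Lindeberg condition follows from the $L^{2+\delta}$ bound on $\hat f$ --- yields convergence to $\calN(0, \E_\pi[M_1^2])$ under the stationary initialization.

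To extend from $\pi$ to an arbitrary initial distribution I would use a coupling argument: geometric ergodicity supplies total variation convergence of the marginals at a geometric rate, so a maximal coupling brings the chain started from any distribution into exact alignment with a stationary copy within a geometrically distributed time. The difference between the two partial sums is then $O_P(1)$, which vanishes after the $\sqrt{n}$ normalization. Finally, to identify $\E_\pi[M_1^2]$ with $\sigma_f^2$, I would expand $M_1$ via the Neumann series for $\hat f$ and use stationarity to rewrite $\E_\pi[M_1^2]$ as $\Var_\pi(f(x_0)) + 2 \sum_{i \geq 1} \Cov_\pi(f(x_0), f(x_i))$; the geometric decay of $\Cov_\pi(f(x_0), f(x_i))$ justifies both the rearrangement and the absolute convergence of the covariance series.

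The main obstacle is establishing solvability and integrability of the Poisson equation under only the $(2+\delta)$-moment hypothesis. This is precisely where \emph{geometric} ergodicity (as opposed to mere ergodicity) is essential; weakening the mixing rate would force a correspondingly stronger moment assumption, or necessitate an entirely different approach such as regeneration-based splitting or resolvent expansions, as pursued in the polynomial-ergodicity literature.
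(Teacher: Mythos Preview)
The paper does not prove this theorem; it is quoted verbatim as Corollary~2 of \citet{jones04} and used as a black box. So there is no ``paper's own proof'' to compare against, and any correct argument would be acceptable here.

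Your Gordin--Poisson-equation outline is a legitimate route to Markov chain CLTs, but the step you flag as the ``main obstacle'' is more serious than you indicate. Geometric ergodicity in the Harris sense is a \emph{total-variation} statement: $\|P^n(x,\cdot)-\pi\|_{\mathrm{TV}}\le M(x)\rho^n$ with $\pi(M)<\infty$. This does not by itself give geometric decay of $\|P^k f\|_{L^2(\pi)}$ (let alone $L^{2+\delta}$) for an unbounded $f$; that implication holds for reversible chains, or when $f$ is dominated by a drift function $V$ satisfying a Foster--Lyapunov condition, but not from the bare hypotheses here. So the assertion that $\hat f=\sum_k P^k f$ converges in $L^{2+\delta}(\pi)$ ``with some additional care'' is glossing over the crux of the matter, and it is not clear your argument closes under exactly the hypotheses stated.

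The proofs Jones surveys for this particular statement (geometric ergodicity plus a $(2+\delta)$-moment) typically avoid the Poisson equation altogether: one route is to observe that geometric ergodicity implies $\alpha$-mixing at a geometric rate and then invoke Ibragimov's CLT for strongly mixing stationary sequences with a $(2+\delta)$-moment; another is the Nummelin-splitting\slash regeneration construction you mention only as a fallback. Either of these handles the moment condition cleanly, whereas the martingale-approximation route really wants $f$ controlled by the drift function rather than merely $f\in L^{2+\delta}(\pi)$.
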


\subsection{Proof of Lemma~\ref{lem:ls_asymptotic_dist}}

\begin{proof}
Let $X \in \R^{T \times n}$ be the data matrix with rows $(x_0, ..., x_{T-1})$
and $W \in \R^{T \times n}$ be the noise matrix with rows $(w_0, ..., w_{T-1})$.
We write:
\begin{align*}
  \Lh(T) - \Lstar = - \lambda \Lstar (X^\T X + \lambda I_n)^{-1} + W^\T X (X^\T X + \lambda I_n)^{-1} \:.
\end{align*}
Using the fact that $\vec(AXB) = (B^\T \otimes A) \vec(X)$,
\begin{align*}
  \sqrt{T}\vec(\Lh(T) - \Lstar) = - \sqrt{T} \vec(\lambda \Lstar (X^\T X + \lambda I_n)^{-1} ) + ( (T^{-1} X^\T X)^{-1} \otimes I_n) \vec(T^{-1/2} W^\T X) \:.
\end{align*}
It is well-known that $\{x_t\}$ is geometrically ergodic (see e.g.~\citet{mokkadem88}), and therefore
the augmented Markov chain $\{ (x_t, w_t) \}$ is geometrically ergodic as well.
By Theorem~\ref{thm:markov_chain_CLT} combined with the Cram{\'{e}}r-Wold theorem we conclude:
\begin{align*}
  \vec(T^{-1/2} W^\T X) = T^{-1/2} \sum_{t=1}^{T} \vec(w_t x_t^\T) \distconv \calN(0, \E_{x \sim \nu_\infty, w}[ \vec(w x^\T)\vec(w x^\T)^\T ]) \:.
\end{align*}
Above, we let $\nu_\infty$ denote the stationary distribution of $\{x_t\}$.
We note that the cross-correlation terms disappear in the asymptotic covariance due to the martingale
difference property of $\sum_{t=0}^{T-1} w_t x_t^\T$.
We now use the identity $\vec(w x^\T) = (x \otimes I_n) w$ and compute
\begin{align*}
  \E_{x \sim \nu_\infty, w}[ \vec(w x^\T)\vec(w x^\T)^\T ] &= \E_{x \sim \nu_\infty, w}[ (x \otimes I_n) ww^\T (x^\T \otimes I_n) ] \\
  &= \sigma_w^2 \E_{x \sim \nu_\infty}[ (x \otimes I_n)(x^\T \otimes I_n) ] \\
  &= \sigma_w^2 \E_{x \sim \nu_\infty}[ (xx^\T \otimes I_n) ] \\
  &= \sigma_w^2 (P_\infty \otimes I_n) \:.
\end{align*}
We have that $T^{-1} X^\T X \asconv P_\infty$ by the ergodic theorem.
Therefore by the continuous mapping theorem followed by Slutsky's theorem,
we have that
\begin{align*}
    ( (T^{-1} X^\T X)^{-1} \otimes I_n) \vec(T^{-1/2} W^\T X) \distconv \calN(0, \sigma_w^2 (P_\infty^{-1} \otimes I_n)) \:.
\end{align*}
On the other hand, we have:
\begin{align*}
    \sqrt{T} \vec( \lambda \Lstar ( X^\T X + \lambda I_n)^{-1} ) &= \frac{1}{\sqrt{T}} \vec(\lambda \Lstar ( T^{-1} X^\T X + T^{-1} \lambda I_n )^{-1} ) \asconv 0 \:.
\end{align*}
The claim now follows by another application of Slutsky's theorem.
\end{proof}

\subsection{Proof of Lemma~\ref{lem:ls_driven_asymptotic_dist}}
\begin{proof}
Let $Z^{(i)} \in \R^{T \times (n+d)}$ be a data matrix with the rows $(z_0^{(i)}, ..., z_{T-1}^{(i)})$,
and let $W^{(i)} \in \R^{T \times n}$ be the noise matrix with the rows $(w_0^{(i)}, ..., w_{T-1}^{(i)})$.
With this notation we write:
\begin{align*}
  \Thetah(N) - \Theta_\star &= \left( \sum_{i=1}^{N} \frac{1}{T} \sum_{t=0}^{T-1} z_{t+1}^{(i)} (z_t^{(i)})^\T \right) \left( \sum_{i=1}^{N} \frac{1}{T} \sum_{t=0}^{T-1} z_t^{(i)} (z_t^{(i)})^\T + \lambda I_{n+d} \right)^{-1} - \Theta_\star \\
  &= \Theta_\star \left( \sum_{i=1}^{N} \frac{1}{T} (Z^{(i)})^\T Z^{(i)} \right) \left( \sum_{i=1}^{N} \frac{1}{T} (Z^{(i)})^\T Z^{(i)} + \lambda I_{n+d} \right)^{-1} - \Theta_\star \\
  &\qquad+ \left( \sum_{i=1}^{N} \frac{1}{T} (W^{(i)})^\T Z^{(i)} \right)\left( \sum_{i=1}^{N} \frac{1}{T} (Z^{(i)})^\T Z^{(i)} + \lambda I_{n+d} \right)^{-1} \\
  &= - \lambda \Theta_\star \left( \sum_{i=1}^{N} \frac{1}{T} (Z^{(i)})^\T Z^{(i)} + \lambda I_{n+d} \right)^{-1} \\
  &\qquad + \left( \sum_{i=1}^{N} \frac{1}{T} (W^{(i)})^\T Z^{(i)} \right)\left( \sum_{i=1}^{N} \frac{1}{T} (Z^{(i)})^\T Z^{(i)} + \lambda I_{n+d} \right)^{-1} \\
  &=: G_1(N) + G_2(N) \:.
\end{align*}
Taking vec of $G_2(N)$:
\begin{align*}
  \vec(G_2(N)) &= \left( \left( \frac{1}{N} \sum_{i=1}^{N} \frac{1}{T} (Z^{(i)})^\T Z^{(i)} + \frac{\lambda}{N} I_{n+d} \right)^{-1} \otimes I_n \right) \vec\left( \frac{1}{N} \sum_{i=1}^{N} \frac{1}{T} \sum_{t=0}^{T-1} w_t^{(i)} (z_t^{(i)})^\T \right) \:.
\end{align*}
Now we write $\vec( w_t z_t^\T) = (z_t \otimes I_n) w_t$ and hence
\begin{align*}
  \E\left[\vec\left( \frac{1}{T} \sum_{t=0}^{T-1} w_t z_t^\T \right) \vec\left( \frac{1}{T} \sum_{t=0}^{T-1} w_t z_t^\T \right)^\T \right] &= \frac{1}{T^2} \sum_{t_1, t_2 = 0}^{T-1} \E[(z_{t_1} \otimes I_n) w_{t_1} w_{t_2}^\T (z_{t_2}^\T \otimes I_n)] \\
  &=  \frac{\sigma_w^2}{T^2} \sum_{t=0}^{T-1} \E[z_tz_t^\T] \otimes I_n \:.
\end{align*}
We have that:
\begin{align*}
  \frac{1}{N} \sum_{i=1}^{N} \frac{1}{T} (Z^{(i)})^\T Z^{(i)} + \frac{\lambda}{N} I_{n+d} \asconv \frac{1}{T} \sum_{t=0}^{T-1} \E[z_t z_t^\T] \:.
\end{align*}
Hence by the central limit theorem combined with the continuous mapping theorem and Slutsky's theorem,
\begin{align*}
  \sqrt{N} \vec(G_1(N)) &\asconv 0 \:, \\
  \sqrt{N} \vec(G_2(N)) &\distconv \calN\left(0, \frac{\sigma_w^2}{T} \left[ \frac{1}{T} \sum_{t=0}^{T-1} \E[z_tz_t^\T] \right]^{-1} \otimes I_n \right) \\
  &= \calN\left(0, \frac{\sigma_w^2}{T} \bmattwo{[\frac{1}{T}\sum_{t=0}^{T-1} \E[x_tx_t^\T] ]^{-1}}{0}{0}{(1/\sigma_u^2) I_d} \otimes I_n \right) \:.
\end{align*}
To finish the proof, we note that $\E[ x_tx_t^\T ] = \sum_{\ell=0}^{t-1} \Astar^\ell M (\Astar^\ell)^\T := P_t$ with $M := \sigma_u^2 \Bstar\Bstar^\T + \sigma_w^2 I_n$ and $P_0 = 0$ (since $x_0 = 0$).
Since $\Astar$ is stable, there exists a $\rho \in (0, 1)$ and $C > 0$ such that $\norm{\Astar^k} \leq C \rho^k$ for all $k \geq 0$.
Hence,
\begin{align*}
    \norm{P_\infty - P_t} &= \bignorm{ \sum_{\ell=t}^{\infty} \Astar^\ell M (\Astar^\ell)^\T } \leq C^2 \norm{M} \sum_{\ell=t}^{\infty} \rho^{2\ell} = C^2 \norm{M} \frac{\rho^{2t}}{1-\rho^2} \:.
\end{align*}
Therefore,
\begin{align*}
    \bignorm{\frac{1}{T} \sum_{t=0}^{T-1} P_t - P_\infty} &= \bignorm{\frac{1}{T} \sum_{t=1}^{T-1} (P_t - P_\infty) + \frac{1}{T} P_\infty} \\
    &\leq \frac{1}{T} \sum_{t=1}^{T-1} \norm{P_\infty - P_t} + \frac{1}{T} \norm{P_\infty} \\
    &\leq \frac{C^2 \norm{M}}{T (1-\rho^2)} \sum_{t=1}^{T-1} \rho^{2t} + \frac{1}{T} \norm{P_\infty} \\
    &\leq \frac{C^2 \norm{M}}{T (1-\rho^2)^2} + \frac{1}{T} \norm{P_\infty} = O(1/T) \:.
\end{align*}
Therefore, $[\frac{1}{T}\sum_{t=0}^{T-1} \E[x_tx_t^\T] ]^{-1} = P_\infty^{-1} + O(1/T)$ from which the claim follows.
\end{proof}

\subsection{Proof of Lemma~\ref{lem:lstd_asymptotic_distribution}}

\begin{proof}
Let $c_t = x_t^\T (Q + K^\T R K) x_t$.
From Bellman's equation, we have $c_t - \lambda_\star = (\phi(x_t) - \psi(x_t))^\T w_\star$.
We write:
\begin{align*}
    \wlstd(T) - w_\star &= \left( \sum_{t=0}^{T-1} \phi(x_t) (\phi(x_t) - \phi(x_{t+1}))^\T \right)^{-1} \left( \sum_{t=0}^{T-1} (c_t - \lambda_\star) \phi(x_t) \right) - w_\star \\
    &= \left( \sum_{t=0}^{T-1} \phi(x_t) (\phi(x_t) - \phi(x_{t+1}))^\T \right)^{-1} \left( \sum_{t=0}^{T-1} \phi(x_t) (\phi(x_t) - \psi(x_t))^\T  \right) w_\star - w_\star \\
    &= \left( \sum_{t=0}^{T-1} \phi(x_t) (\phi(x_t) - \phi(x_{t+1}))^\T \right)^{-1} \left( \sum_{t=0}^{T-1} \phi(x_t) (\phi(x_{t+1}) - \psi(x_t))^\T w_\star \right) \\
    &= \left( \frac{1}{T} \sum_{t=0}^{T-1} \phi(x_t) (\phi(x_t) - \phi(x_{t+1}))^\T \right)^{-1} \left( \frac{1}{T} \sum_{t=0}^{T-1} \phi(x_t) (\phi(x_{t+1}) - \psi(x_t))^\T w_\star \right) \:.
\end{align*}
We now proceed by considering the Markov chain $\{z_t := (x_t, w_t)\}$.
Observe that $x_{t+1}$ is $z_t$-measurable, and furthermore
the stationary distribution of this chain is $\nu_\infty \times \calN(0, \sigma_w^2 I_n)$.
From this we conclude two things. First, we conclude by the ergodic theorem that the term inside the
inverse converges a.s.\ to $A_\infty$  and hence the inverse converges a.s.\ to
$A_\infty^{-1}$ by the continuous mapping theorem.
Next, Theorem~\ref{thm:markov_chain_CLT} combined with the
Cram{\'e}r-Wold theorem allows us to conclude that
\begin{align*}
    \frac{1}{\sqrt{T}} \sum_{t=1}^{T}\phi(x_t) (\phi(x_{t+1}) - \psi(x_t))^\T w_\star \distconv \calN(0, B_\infty) \:.
\end{align*}
The final claim now follows by Slutsky's theorem.
\end{proof}

\subsection{Proof of Corollary~\ref{cor:a_infty_b_infty}}

\begin{proof}
In the proof we write $\Sigma = \sigma_w^2 I_n$.
First, we note that a quick computation shows that $\psi(x) = \svec(Lxx^\T L^\T + \Sigma)$.

\paragraph{Matrix $A_\infty$.}
We have
\begin{align*}
    \phi(x) - \phi(x') &= \svec(xx^\T - (L x + w)(Lx + w)^\T)  \\
    &= \svec( xx^\T - Lxx^\T L^\T - L x w^\T - w x^\T L^\T - w w^\T ) \:.
\end{align*}
Hence, conditioning on $x$ and iterating expectations, we have
\begin{align*}
    A_\infty = \E_{x \sim \nu_\infty}[  \phi(x) \svec( xx^\T - Lxx^\T L^\T - \Sigma )^\T ] \:.
\end{align*}
Now let $m, n$ be two test vectors and $M = \smat(m), N = \smat(n)$.
We have that,
\begin{align*}
    m^\T A_\infty n &= \E_{x \sim \nu_\infty}[ x^\T M x \ip{xx^\T - L xx^\T L^\T - \Sigma}{N} ] \\
    &= \E_{x \sim \nu_\infty}[ x^\T M x (x^\T (N - L^\T N L ) x - \ip{\Sigma}{N} ) ] \\
    &= \E_{x \sim \nu_\infty}[ x^\T M x x^\T (N - L^\T N L) x ] - \ip{\Sigma}{N} \E_{x \sim \nu_\infty}[x^\T M x] \\
    &= \E_{g}[ g^\T P_\infty^{1/2} M P_\infty^{1/2} g g^\T P_\infty^{1/2} (N - L^\T N L) P_\infty^{1/2} g ] - \ip{\Sigma}{N} \ip{M}{P_\infty} \\
    &= 2 \ip{P_\infty^{1/2} M P_\infty^{1/2}}{P_\infty^{1/2}(N - L^\T N L ) P_\infty^{1/2}} + \ip{M}{P_\infty} \ip{N - L^\T N L}{P_\infty} - \ip{\Sigma}{N} \ip{M}{P_\infty} \\
    &= 2 \ip{P_\infty^{1/2} M P_\infty^{1/2}}{P_\infty^{1/2}(N - L^\T N L ) P_\infty^{1/2}} \:,
\end{align*}
where the last identity follows since $L P_\infty L^\T - P_\infty + \Sigma = 0$.
We therefore have:
\begin{align*}
    A_\infty &= (P_\infty \otimes_s P_\infty) - (P_\infty L^\T \otimes_s P_\infty L^\T) \\
    &= (P_\infty \otimes_s P_\infty) ( I - L^\T \otimes_s L^\T ) \:.
\end{align*}
Note that this writes $A_\infty$ as the product of two invertible matrices and hence $A_\infty$ is invertible.

\paragraph{Matrix $B_\infty$.}
We have
\begin{align*}
    \ip{\phi(x') - \psi(x)}{w_\star} &= \svec( L x w^\T + w x^\T L^\T + w w^\T - \Sigma )^\T w_\star \\
    &= 2 x^\T L^\T P_\star w + \ip{ww^\T - \Sigma}{P_\star} \:.
\end{align*}
Hence,
\begin{align*}
    \ip{\phi(x') - \psi(x)}{w_\star}^2 &= 4 (x^\T L^\T P_\star w)^2 + \ip{ww^\T - \Sigma}{P_\star}^2 + 4 x^\T L^\T P_\star w  \ip{ww^\T - \Sigma}{P_\star} \\
    &=: T_1 + T_2 + T_3 \:.
\end{align*}
Now we have that $m^\T B_\infty n$ is
\begin{align}
  m^\T B_\infty n &= \E[ T_1 x^\T M x x^\T N x ] + \E[ T_2 x^\T M x x^\T N x ] + \E[ T_3 x^\T M x x^\T N x ] \:. \label{eq:b_matrix_decomp}
\end{align}
First, we have
\begin{align*}
    \E[ T_1 x^\T M x x^\T N x ] &= 4 \E[ (x^\T L^\T P_\star w)^2 x^\T M x x^\T N x ] \\
    &= 4 \E[ x^\T L^\T P_\star w w^\T P_\star L x x^\T M x x^\T N x ] \\
    &= 4 \E[ x^\T L^\T P_\star \Sigma P_\star L x x^\T M x x^\T N x ] \\
    &= 4 \E_{g}[  g^\T (P_\infty^{1/2} L^\T P_\star \Sigma P_\star L P_\infty^{1/2}) g g^\T (P_\infty^{1/2} M P_\infty^{1/2} ) g  g^\T (P_\infty^{1/2} N P_\infty^{1/2} ) g]
\end{align*}
Now we state a result from Magnus to compute the expectation of the product of three
quadratic forms of Gaussians.
\begin{lem}[See e.g.~\citet{magnus79}]
\label{lem:sixth_moment}
Let $g \sim \calN(0, I)$ and $A_1,A_2,A_3$ be symmetric matrices. Then,
\begin{align*}
    &\E[ g^\T A_1 g g^\T A_2 g g^\T A_3 g ] = \Tr(A_1)\Tr(A_2)\Tr(A_3) \\
    &\qquad+ 2 ( \Tr(A_1)\Tr(A_2A_3) + \Tr(A_2)\Tr(A_1 A_3) + \Tr(A_3) \Tr(A_1 A_2) ) \\
    &\qquad+ 8 \Tr(A_1 A_2 A_3)  \:.
\end{align*}
\end{lem}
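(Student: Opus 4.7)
The plan is to prove Lemma~\ref{lem:sixth_moment} by explicit application of Isserlis' (Wick's) theorem to the index expansion $g^\T A_k g = \sum_{p,q}(A_k)_{pq} g_p g_q$. The product of three quadratic forms expands into a sum over six index variables, and taking expectation reduces to computing $\E[g_{i_1}g_{j_1}g_{i_2}g_{j_2}g_{i_3}g_{j_3}]$, which by Wick's formula equals the sum over the $15$ perfect matchings of the six positions of products of three Kronecker deltas.

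I would then classify the $15$ matchings by how they interact with the natural partition of the six positions into three blocks $\mathcal{B}_k := \{i_k, j_k\}$ corresponding to the factors $A_1, A_2, A_3$. \textbf{Case A} consists of the $1$ matching in which each pair is internal to a block; the deltas collapse each block to its own trace, contributing $\Tr(A_1)\Tr(A_2)\Tr(A_3)$. \textbf{Case B} consists of the $6$ matchings with exactly one internal pair (contained in some $\mathcal{B}_k$) and two pairs crossing between the other two blocks: there are $3$ choices for the internal block and, for each choice, $2$ ways of forming the crossing pairs, and both such pairings evaluate to the same quantity $\Tr(A_iA_j)$ using symmetry of the $A_k$. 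This case contributes $2\sum_{\{i,j,k\}=\{1,2,3\}} \Tr(A_k)\Tr(A_iA_j)$. \textbf{Case C} consists of the remaining $15-1-6 = 8$ matchings in which no pair is internal to any block; each such matching glues the three matrices into a product-trace $\Tr(A_{\pi(1)}A_{\pi(2)}A_{\pi(3)})$ for some permutation $\pi$, and symmetry of each $A_k$ together with cyclicity and the transpose identity $\Tr(XYZ) = \Tr(Z^\T Y^\T X^\T)$ collapses all six orderings to the common value $\Tr(A_1A_2A_3)$, contributing $8\Tr(A_1A_2A_3)$.

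Summing the three contributions gives the stated identity. The principal obstacle is the bookkeeping in Case C: one must carefully track how the three Kronecker deltas in a ``fully crossing'' matching contract indices of the three $A_k$'s into a single cyclic trace, and then verify that symmetry of the $A_k$ collapses all orderings to $\Tr(A_1A_2A_3)$ rather than a different trace. A sanity check by specializing to diagonal $A_1,A_2,A_3$ (where every trace becomes a sum of products of diagonal entries) verifies the combinatorial constants $(1,2,8)$ appearing in the formula.
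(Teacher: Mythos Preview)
Your proof is correct. The paper does not actually prove this lemma: it is stated with the attribution ``See e.g.~\citet{magnus79}'' and used as a black box, so there is no ``paper's own proof'' to compare against. Your Isserlis/Wick expansion is the standard route and the case split is accurate: the $1+6+8=15$ count of perfect matchings is right, and the key point in Case~C---that symmetry of the $A_k$ plus $\Tr(XYZ)=\Tr(Z^\T Y^\T X^\T)$ collapses the two cyclic equivalence classes $\Tr(A_1A_2A_3)$ and $\Tr(A_1A_3A_2)$ into one---is exactly what is needed. One small clarification: in Case~C you say the eight matchings give ``$\Tr(A_{\pi(1)}A_{\pi(2)}A_{\pi(3)})$ for some permutation $\pi$,'' but there are eight matchings and only six permutations, so some matchings land on the same permutation (indeed, each matching yields a trace of a product of the $A_k$ with some of them implicitly transposed, and symmetry identifies these); this is harmless since every resulting trace equals $\Tr(A_1A_2A_3)$.
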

Now by setting
\begin{align*}
    A_1 &= P_\infty^{1/2} L^\T P_\star \Sigma P_\star L P_\infty^{1/2} \:, \\
    A_2 &= P_\infty^{1/2} M P_\infty^{1/2} \:, \\
    A_3 &= P_\infty^{1/2} N P_\infty^{1/2} \:,
\end{align*}
we can compute the expectation $\E[ T_1 x^\T M x x^\T N x]$ using Lemma~\ref{lem:sixth_moment}.
In particular,
\begin{align*}
    \Tr(A_1)\Tr(A_2)\Tr(A_3) &= \ip{P_\infty}{L^\T P_\star \Sigma P_\star L} m^\T \svec(P_\infty) \svec(P_\infty)^\T n \:, \\
    \Tr(A_1)\Tr(A_2A_3) &= \ip{P_\infty}{L^\T P_\star \Sigma P_\star L} m^\T (P_\infty \otimes_s P_\infty) n \:, \\
    \Tr(A_2)\Tr(A_1A_3) &= m^\T \svec(P_\infty) \svec(P_\infty L^\T P_\star \Sigma P_\star L P_\infty)^\T n \:, \\
    \Tr(A_3)\Tr(A_1A_2) &= m^\T \svec(P_\infty L^\T P_\star \Sigma P_\star L P_\infty)\svec(P_\infty)^\T n \:, \\
    \Tr(A_1A_2A_3) &= m^\T (P_\infty L^\T P_\star \Sigma P_\star L P_\infty \otimes_s P_\infty) n \:.
\end{align*}
Hence,
\begin{align*}
    &\E[ g^\T A_1 g g^\T A_2 g g^\T A_3 g ] \\
    &\qquad= m^\T(\ip{P_\infty}{L^\T P_\star \Sigma P_\star L} (2 (P_\infty \otimes_s P_\infty) +  \svec(P_\infty) \svec(P_\infty)^\T) \\
    &\qquad\qquad + 2\svec(P_\infty) \svec(P_\infty L^\T P_\star \Sigma P_\star L P_\infty)^\T + 2\svec(P_\infty L^\T P_\star \Sigma P_\star L P_\infty)\svec(P_\infty)^\T \\
    &\qquad\qquad + 8 (P_\infty L^\T P_\star \Sigma P_\star L P_\infty \otimes_s P_\infty))n
\end{align*}
Next, we compute
\begin{align*}
    \E[T_2 x^\T M x x^\T N x] &= \E[ \ip{ww^\T - \Sigma}{P_\star}^2 x^\T M x x^\T N x] \\
    &= \E[ \ip{ww^\T - \Sigma}{P_\star}^2 ] \E[ x^\T M x x^\T N x ] \:.
\end{align*}
First, we have
\begin{align*}
    \E[ \ip{ww^\T - \Sigma}{P_\star}^2 ] &= \E[ (w^\T P_\star w)^2 ] - 2 \ip{\Sigma}{P_\star} \E[ w^\T P_\star w ] + \ip{\Sigma}{P_\star}^2 \\
    &= 2 \norm{\Sigma^{1/2} P_\star \Sigma^{1/2} }_F^2 + \ip{P_\star}{\Sigma}^2 - 2 \ip{\Sigma}{P_\star}^2 + \ip{P_\star}{\Sigma}^2 \\
    &= 2 \norm{\Sigma^{1/2} P_\star \Sigma^{1/2} }_F^2 \:.
\end{align*}
On the other hand,
\begin{align*}
    \E[ x^\T M x x^\T N x ] = 2 \ip{P_\infty^{1/2} M P_\infty^{1/2}}{P_\infty^{1/2} N P_\infty^{1/2}} + \ip{M}{P_\infty} \ip{N}{P_\infty} \:.
\end{align*}
Combining these calculations,
\begin{align*}
    \E[T_2 x^\T M x x^\T N x] &=2 \norm{\Sigma^{1/2} P_\star \Sigma^{1/2} }_F^2 (  2 \ip{P_\infty^{1/2} M P_\infty^{1/2}}{P_\infty^{1/2} N P_\infty^{1/2}} + \ip{M}{P_\infty} \ip{N}{P_\infty} ) \\
    &= 2 \norm{\Sigma^{1/2} P_\star \Sigma^{1/2} }_F^2 m^\T (2 (P_\infty \otimes_s P_\infty)  + \svec(P_\infty) \svec(P_\infty)^\T) n
\end{align*}
Finally, we have $\E[ T_3 x^\T M x x^\T N x ] = 0$, which is easy to see because it involves odd powers of $w$.
This gives us that $B_\infty$ is:
\begin{align*}
    B_\infty &= (\ip{P_\infty}{L^\T P_\star \Sigma P_\star L} + 2 \norm{\Sigma^{1/2} P_\star \Sigma^{1/2} }_F^2  ) (2 (P_\infty \otimes_s P_\infty) +  \svec(P_\infty) \svec(P_\infty)^\T) \\
    &\qquad + 2\svec(P_\infty) \svec(P_\infty L^\T P_\star \Sigma P_\star L P_\infty)^\T + 2\svec(P_\infty L^\T P_\star \Sigma P_\star L P_\infty)\svec(P_\infty)^\T \\
    &\qquad + 8 (P_\infty L^\T P_\star \Sigma P_\star L P_\infty \otimes_s P_\infty) \:.
\end{align*}
This completes the proof of the formulas for $A_\infty$ and $B_\infty$.

To obtain the lower bound,
we need the following lemma which gives a useful lower bound
to Lemma~\ref{lem:sixth_moment}.
\begin{lem}
\label{lem:sixth_moment_lb}
Let $A_1$ be positive semi-definite and let $A_2$ be symmetric. Let $g \sim \calN(0, I)$.
We have that:
\begin{align*}
  \E[ g^\T A_1 g (g^\T A_2 g)^2 ] &\geq 2 \Tr(A_1)\Tr(A_2^2) + 4 \Tr(A_1 A_2^2) \:.
\end{align*}
\end{lem}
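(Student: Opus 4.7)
The plan is to reduce the claim to Lemma~\ref{lem:sixth_moment} and then verify nonnegativity of a small quadratic form. Specifically, I would apply Lemma~\ref{lem:sixth_moment} with the choice $A_3 = A_2$, which yields the exact identity
\begin{align*}
  \E[ g^\T A_1 g \, (g^\T A_2 g)^2 ] = \Tr(A_1)\Tr(A_2)^2 + 2\Tr(A_1)\Tr(A_2^2) + 4\Tr(A_2)\Tr(A_1 A_2) + 8\Tr(A_1 A_2^2) \:.
\end{align*}
Subtracting the lower bound that the lemma claims leaves the residual
\begin{align*}
  R(A_1, A_2) := \Tr(A_1)\Tr(A_2)^2 + 4\Tr(A_2)\Tr(A_1 A_2) + 4\Tr(A_1 A_2^2) \:,
\end{align*}
and the lemma reduces to showing $R(A_1, A_2) \geq 0$.

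The approach is to view $R(A_1, A_2)$ as a quadratic in the scalar $x := \Tr(A_2)$ with $A_1, A_2$ otherwise fixed: $R = \Tr(A_1) x^2 + 4 \Tr(A_1 A_2) x + 4 \Tr(A_1 A_2^2)$. Since $A_1 \succeq 0$, the leading coefficient $\Tr(A_1)$ is nonnegative, and the constant term $4\Tr(A_1 A_2^2) = 4\Tr(A_2 A_1 A_2) = 4\norm{A_1^{1/2} A_2}_F^2 \geq 0$. So the only way for $R$ to be negative is if the discriminant $16\Tr(A_1 A_2)^2 - 16 \Tr(A_1) \Tr(A_1 A_2^2)$ is positive. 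But writing $\Tr(A_1 A_2) = \ip{A_1^{1/2}}{A_1^{1/2} A_2}_F$ and applying the Cauchy--Schwarz inequality for the Frobenius inner product gives exactly
\begin{align*}
  \Tr(A_1 A_2)^2 \leq \Tr(A_1) \cdot \Tr(A_1 A_2^2) \:,
\end{align*}
so the discriminant is nonpositive, and $R \geq 0$. (The degenerate case $\Tr(A_1) = 0$ forces $A_1 = 0$, in which case $R = 0$ trivially.)

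The only mild subtlety is the middle term $4\Tr(A_2)\Tr(A_1 A_2)$, which has no definite sign and is what prevents a naive term-by-term comparison; the key observation that makes everything go through is that the cross term is exactly the one controlled by Cauchy--Schwarz against the end terms.
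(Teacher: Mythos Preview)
Your proof is correct and essentially identical to the paper's: both apply Lemma~\ref{lem:sixth_moment}, isolate the same residual $R(A_1,A_2)=\Tr(A_1)\Tr(A_2)^2+4\Tr(A_2)\Tr(A_1A_2)+4\Tr(A_1A_2^2)$, and finish with the Cauchy--Schwarz bound $\Tr(A_1A_2)^2\leq\Tr(A_1)\Tr(A_1A_2^2)$. The only cosmetic difference is that the paper packages the last step as Young's inequality plus Cauchy--Schwarz, whereas you phrase it as nonpositivity of the discriminant of a quadratic in $\Tr(A_2)$---these are equivalent formulations of the same estimate.
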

\begin{proof}
Suppose that $A_1 \neq 0$, otherwise the bound holds vacuously.
From Lemma~\ref{lem:sixth_moment},
\begin{align*}
  \E[ g^\T A_1 g (g^\T A_2 g)^2 ] &= \Tr(A_1)\Tr(A_2)^2 + 2\Tr(A_1)\Tr(A_2^2) + 4\Tr(A_2)\Tr(A_1A_2) + 8\Tr(A_1A_2^2) \:.
\end{align*}
Since $A_1$ is PSD and non-zero, this means that $\Tr(A_1) > 0$.
We proceed as follows:
\begin{align*}
  4 \abs{\Tr(A_2) \Tr(A_1A_2)} &= 2 \abs{\Tr(A_2)\Tr(A_1)^{1/2}} \bigabs{2\frac{\Tr(A_1A_2)}{\Tr(A_1)^{1/2}}} \\
  &\stackrel{(a)}{\leq} \Tr(A_1)\Tr(A_2)^2 + 4 \frac{\Tr(A_1A_2)^2}{\Tr(A_1)} \\
  &= \Tr(A_1)\Tr(A_2)^2 + 4 \frac{\Tr(A_1^{1/2} A_1^{1/2}A_2)^2}{\Tr(A_1)} \\
  &\stackrel{(b)}{\leq} \Tr(A_1)\Tr(A_2)^2 + 4 \frac{\norm{A_1^{1/2}}_F^2 \norm{A_1^{1/2} A_2}_F^2}{\Tr(A_1)} \\
  &= \Tr(A_1)\Tr(A_2)^2 + 4 \Tr(A_1 A_2^2) \:,
\end{align*}
where in (a) we used Young's inequality and
in (b) we used Cauchy-Schwarz.
The claim now follows.
\end{proof}

We now start from the decomposition \eqref{eq:b_matrix_decomp} for $B_\infty$,
with $m = n$ and noting that $\E[ T_2 (x^\T M x)^2 ] \geq 0$ and $\E[ T_3 (x^\T M x)^3 ] = 0$:
\begin{align*}
  m^\T B_\infty m &\geq \E[ T_1 (x^\T M x)^2 ] \\
  &\stackrel{(a)}{\geq} 8 \ip{P_\infty}{L^\T P_\star \Sigma P_\star L} m^\T (P_\infty \otimes_s P_\infty) m + 16 m^\T (P_\infty L^\T P_\star \Sigma P_\star L P_\infty \otimes_s P_\infty) m \:.
\end{align*}
Above in (a) we applied the lower bound from Lemma~\ref{lem:sixth_moment_lb}.
Hence since $m$ is arbitrary,
\begin{align*}
  B_\infty \succeq 8\ip{P_\infty}{L^\T P_\star \Sigma P_\star L}(P_\infty \otimes_s P_\infty) + 16 (P_\infty L^\T P_\star \Sigma P_\star L P_\infty \otimes_s P_\infty) \:.
\end{align*}
We also have that $A_\infty = (P_\infty \otimes_s P_\infty) (I - L^\T \otimes L^\T)$, and hence
$A_\infty^{-1} = (I - L^\T \otimes L^\T)^{-1} (P_\infty^{-1} \otimes_s P_\infty^{-1})$.
Therefore,
\begin{align*}
  A_\infty^{-1} B_\infty A_{\infty}^{-\T} &\succeq 8 \ip{P_\infty}{L^\T P_\star \Sigma P_\star L} (I - L^\T \otimes_s L^\T)^{-1} (P_\infty^{-1} \otimes_s P_\infty^{-1})(I - L^\T \otimes_s L^\T)^{-\T} \\
  &\qquad + 16(I - L^\T \otimes_s L^\T)^{-1} (L^\T P_\star \Sigma P_\star L \otimes_s P_\infty^{-1})(I - L^\T \otimes_s L^\T)^{-\T} \:.
\end{align*}

\end{proof}

\subsection{Proof of Lemma~\ref{lem:moment_control_ls}}
\begin{proof}
Recall in the notation of the proof of Lemma~\ref{lem:ls_asymptotic_dist},
\begin{align*}
  \Lh(T) - \Lstar = - \lambda \Lstar (X^\T X + \lambda I_n)^{-1} + W^\T X (X^\T X + \lambda I_n)^{-1} \:.
\end{align*}
Now let us suppose that we are on an event where $X^\T X$ is invertible.
Let $X = U \Sigma V^\T$ denote the compact SVD of $X$.
We have:
\begin{align*}
  \norm{\Lh(T) - \Lstar} &\leq \lambda \frac{\norm{\Lstar}}{\lambda_{\min}(X^\T X + \lambda I_n)} + \norm{ W^\T X (X^\T X + \lambda I_n)^{-1} }  \\
  &\stackrel{(a)}{\leq} \lambda \frac{\norm{\Lstar}}{\lambda_{\min}(X^\T X + \lambda I_n)} + \norm{ W^\T X (X^\T X)^{-1} } \:.
\end{align*}
The inequality (a) holds due to the following.
First observe that
$(X^\T X + \lambda I_n)^{-2} \preceq (X^\T X)^{-2}$. Therefore with $M = W^\T X$,
conjugating both sides by $M$,
we have $M (X^\T X + \lambda I_n)^{-2} M^\T \preceq M (X^\T X)^{-2} M^\T$. Hence,
\begin{align*}
  \norm{M (X^\T X + \lambda I_n)^{-1}} &= \sqrt{ \lambda_{\max}( M (X^\T X + \lambda I_n)^{-2} M^\T ) } \\
  &\leq \sqrt{\lambda_{\max}( M (X^\T X)^{-2} M^\T ) } \\
  &= \norm{M (X^\T X)^{-1}} \:.
\end{align*}
By Theorem 2.4 of~\citet{simchowitz18}
for $T \geq C_{\Lstar, n} \log(1/\delta)$,
there exists an event $\calE$ with $\Pr(\calE) \geq 1-\delta$
such that on $\calE$ we have:
\begin{align*}
  \norm{\Lh_{\mathrm{ols}}(T) - \Lstar} \leq C'_{\Lstar, n} \sqrt{\log(1/\delta)/T} \:, \:\: X^\T X \succeq C''_{\Lstar, n} T \cdot I_n \:.
\end{align*}
Hence on this event we have $\norm{\Lh(T) - \Lstar} \leq C'_{\Lstar,n,\lambda} \sqrt{\log(1/\delta)/T}$.

For the remainder of the proof, $O(\cdot)$ will hide
constants that depend on $\Lstar, n, p, \lambda$ but not on $T$ or $\delta$.
We bound the $p$-th moment as follows. We decompose:
\begin{align*}
  \E[ \norm{\Lh(T) - \Lstar}^p ] = \E[ \norm{\Lh(T) - \Lstar}^p \ind_{\calE} ] + \E[ \norm{\Lh(T) - \Lstar}^p \ind_{\calE^c} ] \:.
\end{align*}
On $\calE$ we have by the inequality $(a+b)^p \leq 2^{p-1} (a^p + b^p)$ for non-negative $a,b$,
\begin{align*}
  \norm{\Lh(T) - \Lstar}^p \leq 2^{p-1}( O(\lambda^p/T^p) + O((\log(1/\delta) / T)^{p/2}) ) \:.
\end{align*}
On the other hand, we always have:
\begin{align*}
  \norm{\Lh(T) - \Lstar}^p \leq 2^{p-1}( \norm{\Lstar}^p + (\norm{W^\T X} / \lambda)^p ) \:.
\end{align*}
Hence:
\begin{align*}
  \E[ \norm{\Lh(T) - \Lstar}^p \ind_{\calE^c} ] &\leq 2^{p-1} \norm{\Lstar}^p \Pr(\calE^c) + \frac{2^{p-1}}{\lambda^p} \E[ \norm{W^\T X}^p \ind_{\calE^c} ] \\
  &\leq 2^{p-1} \norm{\Lstar}^p  \delta + \frac{2^{p-1}}{\lambda^p} \sqrt{ \E[\norm{W^\T X}^{2p}] \delta } \:.
\end{align*}
We will now compute a very crude bound on $\E[ \norm{W^\T X}^{2p} ]$ which will suffice.
For non-negative $a_t$, we have $(a_1 + ... + a_T)^{2p} \leq T^{2p-1} (\sum_{t=1}^{T} a_i^{2p})$ by H{\"{o}}lder's inequality.
Hence
\begin{align*}
  \E[ \norm{W^\T X}^{2p} ] &= \E\left[ \bignorm{\sum_{t=0}^{T-1} w_i x_i^\T }^{2p} \right] \\
  &\leq T^{2p-1} \E\left[ \sum_{t=1}^{T} \norm{w_t}^{2p} \norm{x_t}^{2p}    \right] \\
  &= T^{2p-1} \E[ \norm{w_1}^{2p} ]\sum_{t=1}^{T} \E[\norm{x_t}^{2p}] \\
  &\leq T^{2p} \E[ \norm{w_1}^{2p} ] \norm{P_\infty}^{p} \E_{g \sim \calN(0, I)}[ \norm{g}^{2p} ] \\
  &= O(T^{2p}) \:.
\end{align*}
Above, $P_\infty$ denotes the covariance of the stationary distribution of
$\{x_t\}$.
Continuing from above:
\begin{align*}
  \E[ \norm{\Lh(T) - \Lstar}^p \ind_{\calE^c} ] &=2^{p-1} \norm{\Lstar}^p  \delta + \frac{2^{p-1}}{\lambda^p} \sqrt{ O(T^{2p}) \delta } \:.
\end{align*}
We now set $\delta = O(1/T^{3p})$ so that the term above is $O(1/T^{p/2})$.
Doing this we obtain that for $T$ sufficiently large (as a function of
only $\Lstar, p, \lambda$),
\begin{align*}
  \E[ \norm{\Lh(T) - \Lstar}^p ] \leq O(1/T^{p/2}) \:.
\end{align*}
\end{proof}

\section{Proof of Lemma~\ref{lem:sgd_asymptotics}}
\label{sec:proof_sgd_asymptotics}

We now state a high probability bound for SGD.
This is a straightforward modification of Lemma 6 from~\citet{rakhlin12}
(modifications are needed to deal with the lack of almost surely bounded gradients),
and hence we omit the proof.
\begin{lem}[Lemma 6, \citet{rakhlin12}]
\label{lem:sgd_whp}
Let the assumptions of Lemma~\ref{lem:sgd_asymptotics} hold.
Define two constants:
\begin{align*}
    M := \sup_{\theta \in \Theta} \norm{\theta}_2 \:, \:\: G_3 := \sup_{\theta \in \Theta} \norm{\nabla F(\theta)}_2 \:.
\end{align*}
Note that since $\Theta$ is compact, both $M$ and $G_3$ are finite.
Fix a $T \geq 4$ and $\delta \in (0, 1/e)$. We have that with probability at least $1-\delta$, for all $t \leq T$,
\begin{align*}
    \norm{\theta_t - \theta_\star}_2^2 \lesssim \frac{\polylog(T/\delta)}{t} \left( \frac{G_1^2+G_2^2}{m^2} + \frac{M(G_2 + G_3)}{m} \right) \:.
\end{align*}
\end{lem}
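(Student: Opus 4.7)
The plan is to follow the argument of Lemma 6 of \citet{rakhlin12} essentially verbatim, with one preprocessing step that reduces the high-probability gradient bound \eqref{eq:sgd_grad_conc} to an (effectively) almost-sure bound. Concretely, I would first define the good event
\begin{align*}
  \calE := \left\{ \sup_{\theta \in \Theta} \norm{g(\theta;\xi_t)}_2 \leq G_2 \polylog(T/\delta) \text{ for all } t = 1, \ldots, T \right\}.
\end{align*}
Applying assumption \eqref{eq:sgd_grad_conc} with confidence level $\delta/(2T)$ and a union bound over $t \leq T$ gives $\Pr(\calE) \geq 1 - \delta/2$. On $\calE$, every stochastic gradient queried by the algorithm is deterministically bounded by $G_2 \polylog(T/\delta)$, so we are in a regime where the usual Rakhlin-Shamir-Sridharan analysis applies with this enlarged bound in place of the a.s.\ bound used in their paper.

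Next I would derive the standard SGD recursion. Since $\theta_\star \in \mathrm{int}(\Theta)$, the projection $\mathsf{Proj}_\Theta$ is non-expansive with respect to $\theta_\star$, so
\begin{align*}
  \norm{\theta_{t+1} - \theta_\star}_2^2 \leq \norm{\theta_t - \theta_\star}_2^2 - 2\alpha_t \ip{g(\theta_t;\xi_t)}{\theta_t - \theta_\star} + \alpha_t^2 \norm{g(\theta_t;\xi_t)}_2^2.
\end{align*}
Writing $\ip{g(\theta_t;\xi_t)}{\theta_t - \theta_\star} = \ip{\nabla F(\theta_t)}{\theta_t - \theta_\star} + \zeta_t$ with $\zeta_t$ a martingale difference, and invoking $\mathsf{RSC}(m,\Theta)$ on the first term, one obtains
\begin{align*}
  \norm{\theta_{t+1} - \theta_\star}_2^2 \leq (1 - 2\alpha_t m)\norm{\theta_t - \theta_\star}_2^2 - 2\alpha_t \zeta_t + \alpha_t^2 \norm{g(\theta_t;\xi_t)}_2^2.
\end{align*}
Unrolling this recursion with $\alpha_t = 1/(mt)$ produces telescoping products $\prod_{s < r \leq t}(1 - 2\alpha_r m)$ of order $s/t$, and the bound on $\norm{\theta_t - \theta_\star}_2^2$ reduces to concentrating two weighted sums: the martingale sum $\sum_s (s/t)\alpha_s \zeta_s$, and the sum $\sum_s (s/t) \alpha_s^2 \norm{g(\theta_s;\xi_s)}_2^2$ around its mean (which is $O(G_1^2/(m^2 t))$ by \eqref{eq:sgd_grad_momoment_bound}).

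On $\calE$, both sums satisfy the hypotheses of Freedman's inequality: the martingale differences $\zeta_s$ are bounded by $O(M G_2 \polylog(T/\delta))$ deterministically, and their conditional variance can be controlled by $L \norm{\theta_s - \theta_\star}_2^2$ via the Lipschitz-type condition \eqref{eq:sgd_grad_lip}. Running the same inductive step as Rakhlin et al.\ (applying Freedman at confidence $\delta/(2T^2)$ for each $t$ and taking a union bound) yields an inequality of the form $\norm{\theta_t - \theta_\star}_2^2 \leq \frac{\polylog(T/\delta)}{t} C$, with $C$ collecting the $G_1^2/m^2$ term (from the mean of $\norm{g}_2^2$), the $G_2^2/m^2$ term (from the Freedman deviation with the enlarged bound), and the $M(G_2 + G_3)/m$ term (from the initial-condition and bias corrections that appear in the original lemma). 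Combining with $\Pr(\calE) \geq 1-\delta/2$ by a final union bound yields the claimed statement with total failure probability $\delta$. The main obstacle is purely the bookkeeping of $\polylog$ factors: the original proof uses the a.s.\ gradient bound in two distinct places (inside the Freedman variance and in bounding $\norm{g}_2^2$), and one must verify that in each place the substitution with $G_2 \polylog(T/\delta)$ only contributes a further $\polylog$ factor, which is absorbed into the final $\polylog(T/\delta)$ in the statement.
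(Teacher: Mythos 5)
The paper does not actually prove this lemma: it states that the result is a straightforward modification of Lemma 6 of \citet{rakhlin12} to handle gradients that are only bounded with high probability rather than almost surely, and omits the details. Your proposal is precisely that intended modification (good event via a union bound over \eqref{eq:sgd_grad_conc}, then the standard projected-SGD recursion under $\mathsf{RSC}$ with Freedman's inequality), and is correct modulo one standard technicality you gloss over: one cannot literally condition on $\calE$ and then invoke Freedman, since conditioning destroys the martingale property, so the argument should be run on a truncated or stopped version of the gradient sequence that coincides with the original on $\calE$.
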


We are now in a position to analyze the asymptotic variance of SGD with projection.
As mentioned previously, our argument follows closely that of \citet{toulis17}.
For the remainder of the proof, $O(\cdot)$ and $\Omega(\cdot)$ will hide all constants
except those depending on $t$ and $\delta$.
Introduce the notation:
\begin{align*}
    \tilde{\theta}_{t+1} &= \theta_t - \alpha_t g(\theta_t; \xi_t) \:, \\
    \theta_{t+1} &= \mathsf{Proj}_\Theta(\tilde{\theta}_{t+1}) \:.
\end{align*}
Let $\calE_t := \{ \tilde{\theta}_t = \theta_t \}$ be the event that
the projection step is inactive at time $t$.
Recall that we assumed that $\theta_\star$ is in the interior of $\Theta$.
This means there exists a radius $R > 0$ such that
$\{ \theta : \norm{\theta - \theta_\star}_2 \leq R \} \subseteq \Theta$.
Therefore, the event $\{ \norm{\tilde{\theta}_t - \theta_\star}_2 \leq R \} \subseteq \calE_t$.
We now decompose,
\begin{align*}
  \Var(\theta_{t+1}) &= \Var(\theta_{t+1} - \thetatilde_{t+1} + \thetatilde_{t+1}) \\
  &= \Var(\thetatilde_{t+1}) + \Var(\theta_{t+1} - \thetatilde_{t+1}) + \Cov(\theta_{t+1} - \thetatilde_{t+1}, \thetatilde_{t+1}) + \Cov(\thetatilde_{t+1}, \theta_{t+1} - \thetatilde_{t+1}) \:.
\end{align*}
We have that,
\begin{align*}
  \theta_{t+1} - \thetatilde_{t+1} &= (\theta_{t+1} - \thetatilde_{t+1}) \ind_{\calE_{t+1}^c} \:.
\end{align*}
Hence,
\begin{align*}
  \norm{\Var( \theta_{t+1} - \thetatilde_{t+1} )} &\leq \E[ \norm{ \thetatilde_{t+1} \ind_{\calE_{t+1}^c} - \theta_{t+1} \ind_{\calE_{t+1}^c} }_2^2 ] \\
  &\leq 2 ( \E[ \norm{\thetatilde_{t+1}}_2^2 \ind_{\calE_{t+1}^c} ] + \E[ \norm{\theta_{t+1}}_2^2 \ind_{\calE_{t+1}^c} ] ) \\
  &\leq 2 (\sqrt{\E[ \norm{\thetatilde_{t+1}}_2^4 ] \E[\ind_{\calE_{t+1}^c}]}  + M^2 \E[ \ind_{\calE_{t+1}^c} ] ) \:.
\end{align*}
We can bound $\E[ \norm{\thetatilde_{t+1}}_2^4 ]$ by a constant for all $t$ using
our assumption \eqref{eq:sgd_grad_momoment_bound}.
On the other hand,
\begin{align*}
  \E[ \ind_{\calE_{t+1}^c} ] \leq \Pr( \norm{\thetatilde_{t+1} - \theta_\star}_2 > R ) \:.
\end{align*}
By triangle inequality,
\begin{align*}
    \norm{\thetatilde_{t+1} - \theta_\star}_2 \leq \norm{\theta_t - \theta_\star}_2 + \alpha_t \norm{g_t}_2 \:.
\end{align*}
By Lemma~\ref{lem:sgd_whp} and the concentration bound on $\norm{g_t}_2$ from
our assumption \eqref{eq:sgd_grad_conc}, with probability at least $1-\delta$,
\begin{align*}
    \norm{\thetatilde_{t+1} - \theta_\star}_2 \leq O(\polylog(t/\delta)/\sqrt{t}) \:.
\end{align*}
Hence for $t$ large enough, $\E[ \ind_{\calE_{t+1}^c} ] \leq O( \exp( - t^{\alpha} ) )$ for some $\alpha > 0$.
This shows that $\norm{\Var( \theta_{t+1} - \thetatilde_{t+1} )} \leq O(\exp(-t^{\alpha}))$.
Similar arguments show that $\max\{ \norm{ \Cov(\theta_{t+1} - \thetatilde_{t+1}, \thetatilde_{t+1}) }, \norm{\Cov( \thetatilde_{t+1}, \theta_{t+1} - \thetatilde_{t+1})} \} \leq O(\exp(-t^{\alpha}))$.
Hence:
\begin{align*}
    \Var(\theta_{t+1}) = \Var(\thetatilde_{t+1}) + O(\exp(-t^{\alpha})) \:.
\end{align*}
Therefore,
\begin{align}
  \Var(\theta_{t+1}) &= \Var(\thetatilde_{t+1}) + O(\exp(-t^{\alpha})) \nonumber \\
  &= \Var(\theta_t - \alpha_t g(\theta_t; \xi_t)) + O(\exp(-t^{\alpha})) \nonumber \\
  &= \Var(\theta_t) + \alpha_t^2 \Var(g(\theta_t; \xi_t)) - \alpha_t \Cov(\theta_t, g(\theta_t; \xi_t)) - \alpha_t \Cov(g(\theta_t; \xi_t), \theta_t) + O(\exp(-t^{\alpha})) \nonumber \\
  &= \Var(\theta_t) + \alpha_t^2 \Var(g(\theta_t; \xi_t)) - \alpha_t \Cov(\theta_t, \nabla F(\theta_t)) - \alpha_t \Cov(\nabla F(\theta_t), \theta_t) + O(\exp(-t^{\alpha})) \label{eq:var_recur} \:.
\end{align}
Now we write:
\begin{align*}
    \Var( g(\theta_t; \xi_t) ) &= \Var( g(\theta_\star; \xi_t) + (g(\theta_t; \xi_t) - g(\theta_\star; \xi_t)) ) \\
    &= \Var( g(\theta_\star; \xi_t)) + \Var( g(\theta_t; \xi_t) - g(\theta_\star; \xi_t)) \\
    &\qquad+ \Cov( g(\theta_\star; \xi_t), g(\theta_t; \xi_t) - g(\theta_\star; \xi_t)) + \Cov(g(\theta_t; \xi_t) - g(\theta_\star; \xi_t), g(\theta_\star; \xi_t)) \:.
\end{align*}
We have by our assumption \eqref{eq:sgd_grad_lip},
\begin{align*}
    \norm{ \Var( g(\theta_t; \xi_t) - g(\theta_\star; \xi_t) ) } &\leq \E[ \norm{  g(\theta_t; \xi_t) - g(\theta_\star; \xi_t)  }^2_2 ] \\
    &= \E_{\theta_t} \E_{\xi}[ \norm{  g(\theta_t; \xi_t) - g(\theta_\star; \xi_t)  }^2_2 ] \\
    &\leq L \E[ \norm{\theta_t - \theta_\star}^2_2 ] \:.
\end{align*}
On the other hand,
\begin{align*}
    \norm{\Cov(g(\theta_\star; \xi_t), g(\theta_t; \xi_t) - g(\theta_\star; \xi_t))} &\leq 2 \E[ \norm{g(\theta_\star; \xi_t)}_2 \norm{g(\theta_t; \xi_t) - g(\theta_\star; \xi_t)}_2 ] \\
    &\leq 2 \sqrt{ \E[ \norm{g(\theta_\star; \xi_t)}_2^2 ] \E[ \norm{g(\theta_t; \xi_t) - g(\theta_\star; \xi_t)}_2^2 ] } \\
    &\leq 2 \sqrt{ L G_1^2 \E[ \norm{\theta_t - \theta_\star}_2^2 ] } \:.
\end{align*}
The same bound also holds for
$\norm{\Cov(g(\theta_t; \xi_t) - g(\theta_\star; \xi_t), g(\theta_\star; \xi_t))}$.
Since we know that $\E[ \norm{\theta_t - \theta_\star}_2^2 ] \leq O(1/t)$, this shows that:
\begin{align*}
    \Var(g(\theta_t; \xi_t)) = \Var(g(\theta_\star; \xi)) + o_t(1) \:.
\end{align*}
Next, by a Taylor expansion of $\nabla F(\theta_t)$ around $\theta_\star$, we have that:
\begin{align*}
    \nabla F(\theta_t) = \nabla^2 F(\theta_\star) (\theta_t - \theta_\star) + \mathsf{Rem}(\theta_t - \theta_\star) \:,
\end{align*}
where $\norm{\mathsf{Rem}(\theta_t - \theta_\star)} \leq O( \norm{\theta_t - \theta_\star}_2^2 )$.
Therefore, utilizing the fact that adding a non-random vector does not change the covariance,
\begin{align*}
    \Cov(\theta_t, \nabla F(\theta_t)) &= \Cov(\theta_t, \nabla^2 F(\theta_\star) (\theta_t - \theta_\star) + \mathsf{Rem}(\theta_t - \theta_\star) ) \\
    &= \Cov(\theta_t, \nabla^2 F(\theta_\star) (\theta_t - \theta_\star)) + \Cov(\theta_t, \mathsf{Rem}(\theta_t - \theta_\star)) \\
    &= \Cov(\theta_t, \nabla^2 F(\theta_\star) \theta_t) + \Cov(\theta_t - \theta_\star, \mathsf{Rem}(\theta_t - \theta_\star)) \\
    &= \Var(\theta_t) \nabla^2 F(\theta_\star) + \Cov(\theta_t - \theta_\star, \mathsf{Rem}(\theta_t - \theta_\star)) \:.
\end{align*}
We now bound $\Cov(\theta_t - \theta_\star, \mathsf{Rem}(\theta_t - \theta_\star))$ as:
\begin{align*}
    \norm{\Cov(\theta_t - \theta_\star, \mathsf{Rem}(\theta_t - \theta_\star))} \leq O( \E[ \norm{\theta_t - \theta_\star}_2^3 ] ) \leq O(\polylog(t)/t^{3/2}) \:.
\end{align*}
Above, the last inequality comes from the high probability bound given in Lemma~\ref{lem:sgd_whp}.
Observing that $\Cov(\theta_t, \nabla F(\theta_t))^\T = \Cov(\nabla F(\theta_t), \theta_t)$,
combining our calculations and continuing from Equation~\eqref{eq:var_recur},
\begin{align*}
    \Var(\theta_{t+1}) &= \Var(\theta_t) + \alpha_t^2 ( \Var(g(\theta_\star;\xi)) + o_t(1) ) - \alpha_t (\Var(\theta_t) \nabla^2 F(\theta_\star)  +\nabla^2 F(\theta_\star) \Var(\theta_t)) \\
    &\qquad + \alpha_t O(\polylog(t)/t^{3/2}) + O(\exp(-t^{\alpha})) \:.
\end{align*}
We now make two observations. Recall that $\alpha_t = 1/(mt)$.
Hence we have $O(\exp(-t^{\alpha})) = \alpha_t^2 O( t^2 \exp(-t^{\alpha}) ) = \alpha_t^2 o_t(1)$.
Similarly, $\alpha_t O(\polylog(t)/t^{3/2}) = \alpha_t^2 O(\polylog(t)/t^{1/2}) = \alpha_t^2 o_t(1)$.
Therefore,
\begin{align*}
    \Var(\theta_{t+1}) &= \Var(\theta_t)- \alpha_t (\Var(\theta_t) \nabla^2 F(\theta_\star)  +\nabla^2 F(\theta_\star) \Var(\theta_t)) + \alpha_t^2 ( \Var(g(\theta_\star;\xi)) + o_t(1) )  \:.
\end{align*}
This matrix recursion can be solved by
Corollary C.1 of~\citet{toulis17},
yielding \eqref{eq:sgd_asymptotic_var}.

To complete the proof, by a Taylor expansion we have:
\begin{align*}
  T \cdot \E[F(\theta_T) - F(\theta_\star)] = \frac{T}{2} \Tr(\nabla^2 F(\theta_\star) \E[(\theta_T-\theta_\star)(\theta_T-\theta_\star)^\T]) + \frac{T}{6} \E[\nabla^3 f(\hat{\theta}) (\theta_T - \theta_\star)^{\otimes 3}] \:.
\end{align*}
As above, we can bound
$\abs{\E[ \nabla^3 f(\hat{\theta}) (\Theta_T - \theta_\star)^{\otimes 3} ]} \leq O( \E[\norm{\theta_T - \theta_\star}_2^3] )\leq O(\polylog(T)/T^{3/2})$,
and hence $T \cdot \abs{\E[ \nabla^3 f(\hat{\theta}) (\Theta_T - \theta_\star)^{\otimes 3} ]} \to 0$.
On the other hand, letting $\mu_T := \E[\theta_T]$, by a bias-variance decomposition,
\begin{align*}
  \E[ (\theta_T - \theta_\star)(\theta_T - \theta_\star)^\T ] &= \E[ (\theta_T - \mu_T)(\theta_T - \mu_T)^\T ] + (\mu_T - \theta_\star)(\mu_T - \theta_\star)^\T \\
  &\succeq \E[ (\theta_T - \mu_T)(\theta_T - \mu_T)^\T ] = \Var(\theta_T) \:.
\end{align*}
Therefore,
\begin{align*}
  T \cdot \E[F(\theta_T) - F(\theta_\star)] \geq \frac{1}{2m} \Tr(\nabla^2 F(\theta_\star) (mT)\Var(\theta_T)) - \frac{T}{6} \abs{\E[\nabla^3 f(\hat{\theta}) (\theta_T - \theta_\star)^{\otimes 3}]} \:.
\end{align*}
Taking limits on both sides yields \eqref{eq:sgd_comparison_bound}.
This concludes the proof of Lemma~\ref{lem:sgd_asymptotics}.

\end{document}